\documentclass{article} %
\usepackage{iclr2026/iclr2026_conference,times}

\usepackage{hyperref}
\usepackage{url}

\title{Online Learning and Equilibrium Computation \\with Ranking Feedback}

\author{Mingyang Liu\textsuperscript{1},
  Yongshan Chen\textsuperscript{2,3},
  Zhiyuan Fan\textsuperscript{1},
  Gabriele Farina\textsuperscript{1},
  Asuman E. Ozdaglar\textsuperscript{1},\\
  \textbf{Kaiqing Zhang\textsuperscript{2}} \\
  \textsuperscript{1} Massachusetts Institute of Technology\\
  \textsuperscript{2} University of Maryland, College Park\\  %
  \textsuperscript{3} Northeastern University
}

\newcommand{\kzedit}[1]{{#1}}

\usepackage{algorithm}

\usepackage{enumitem}
\usepackage{derivative}
\usepackage{mdframed}

\usepackage{notation}

\usepackage{tikz}
\usetikzlibrary{shapes.geometric, arrows}

\tikzstyle{process} = [rectangle, minimum width=2.0cm, minimum height=1.2cm, draw=black, text centered]
\tikzstyle{arrow} = [thick,->,>=stealth]

\newcommand{\CopyLogo}[1][]{%
    \tikz[baseline=-0.5ex,#1]{
        \draw[thick] (0,0) circle (0.6);
        
        \fill[gray!30] (-0.15,-0.15) rectangle (0.35,0.35);
        \draw[thick] (-0.15,-0.15) rectangle (0.35,0.35);
        
        \fill[white] (-0.3,-0.3) rectangle (0.2,0.2);
        \draw[thick] (-0.3,-0.3) rectangle (0.2,0.2);
        
    }
}

\usepackage{algorithmic}

\iclrfinalcopy %
\begin{document}

\maketitle

\begin{abstract}
Online learning in arbitrary, and possibly adversarial, environments has been extensively studied in sequential decision-making, and it is closely connected to equilibrium computation in game theory. Most existing online learning algorithms rely on \emph{numeric} utility feedback from the environment, which may be unavailable in human-in-the-loop applications and/or may be restricted by privacy concerns. In this paper, we study an online learning model in which the learner only observes a \emph{ranking} over a set of proposed actions at each timestep. We consider two ranking mechanisms: rankings induced by the \emph{instantaneous} utility at the current timestep, and rankings induced by the \emph{time-average} utility up to the current timestep, under both \emph{full-information} and \emph{bandit} feedback settings. Using the standard external-regret metric, we show that sublinear regret is impossible with instantaneous-utility ranking feedback in general. Moreover, when the ranking model is relatively deterministic, \emph{i.e.}, under the Plackett-Luce model with a temperature that is sufficiently small, sublinear regret is also impossible with time-average utility ranking feedback. We then develop new algorithms that achieve sublinear regret under the additional assumption that the utility sequence has sublinear total variation. Notably, for full-information time-average utility ranking feedback, this additional assumption can be removed. As a consequence, when all players in a normal-form game follow our algorithms, repeated play yields an approximate coarse correlated equilibrium. We also demonstrate the effectiveness of our algorithms in an online large-language-model routing task.
\end{abstract}

\section{Introduction}
\label{section:introduction}

Online learning has been extensively studied as a model for sequential decision-making in arbitrary and possibly adversarial environments \citep{shalev2012online,hazan2016introduction-online-book}. At each round, the agent commits to a strategy, takes an action, and then receives \emph{feedback} from the environment, often in \emph{numeric} form, such as the utility vector (in the \emph{full-information} feedback setting) or the realized utility value (in the \emph{bandit} feedback setting). Numerous algorithms achieve no-regret guarantees, \emph{i.e.}, they ensure that the agent’s \emph{external regret} grows sublinearly with time \citep{shalev2012online,hazan2016introduction-online-book}. Moreover, online learning is closely connected to equilibrium computation in game theory: when all players employ no-regret learning in the repeated play of a normal-form game (NFG), their time-average play approaches a coarse correlated equilibrium (CCE) \citep{cesa2006prediction-Hedge-folklore}.

However, numeric utility feedback may not always be available in real-world applications. For instance, when the feedback is elicited from an environment with humans in the loop, it is typically much easier for them to \emph{compare} or \emph{rank} candidate actions than to provide calibrated numerical \emph{scores}. This phenomenon has been widely recognized and is exemplified by the success of reinforcement learning from human feedback (RLHF) in fine-tuning language models \citep{ouyang2022training-RLHF}. Moreover, even when well-defined numeric utilities exist, they may be inaccessible to the learning agent due to privacy or security constraints. As a concrete example, consider an online platform (see \Cref{fig:example} (a)) that recommends commodities to a stream of customers over time, where customers arriving at different timesteps may have different preferences. While the platform aims to improve its recommendations, customers may be unable or unwilling to reveal their true valuations. Depending on whether customers are \emph{one-shot} (arrive, rank, and leave forever) or \emph{long-lived} with memory, the ranking feedback may be induced either by the \emph{instantaneous} utility at each timestep or by the  \emph{time-average} utility aggregated over historical utility vectors. The platform thus seeks to minimize the \emph{regret} of its recommendations under ranking-only feedback.  Yet, the fundamental limits and algorithmic solutions for this online learning setting remain elusive.

Ranking feedback becomes particularly relevant in game-theoretic settings, where multiple humans repeatedly interact, and the goal is to compute an equilibrium of the underlying game. For example, consider an online dating platform that recommends potential matches (see \Cref{fig:example} (b)). In each round, each user may provide only a ranking over the recommended candidates, and the platform aims to compute an equilibrium outcome, \emph{i.e.}, a matching between users, that (approximately) respects everyone’s preferences. Related scenarios arise in other matching platforms, \emph{e.g.}, ride-sharing services that match drivers and passengers based on their preferences, such as drivers’ preferences over trip lengths and riders’ preferences over driving styles  (\emph{e.g.}, punctuality or cautiousness). Although these applications may seem reminiscent of the classical stable matching model \citep{gale1962college}, our setting is fundamentally different. See \Cref{related-work} for a detailed comparison.

In this paper, we systematically study online learning and equilibrium computation with ranking feedback in a nonstochastic environment, where the loss vectors may be generated arbitrarily, and potentially even adversarially. This setting can also be viewed as a generalization of the stochastic bandits with ranking feedback studied recently in \citet{bandit-ranking-feedback}. See \Cref{related-work} for a detailed comparison. Our goal is to understand when regret minimization in this setting is possible, and to develop new algorithms that provide both regret-minimization and equilibrium-approximation guarantees. We summarize our contributions as follows.

\paragraph{Contributions.} We consider two types of ranking feedback, categorized by how the rankings are generated: one based on the \emph{instantaneous utility} at each timestep (\Cref{item:rank-instant}), and one based on the \emph{time-average utility} up to the current timestep (\Cref{item:rank-avg}). We establish the following results: (i) sublinear regret is impossible under \Cref{item:rank-instant} feedback. Moreover, under \Cref{item:rank-avg} feedback, sublinear regret remains unattainable (up to logarithmic factors) when the ranking model is overly \emph{deterministic} (\emph{i.e.}, when the temperature parameter $\tau>0$ in \Cref{eq:PL-def} is small); (ii) we propose new algorithms that achieve sublinear regret under an additional assumption that the utility vectors have sublinear variation; (iii) we show that this variation assumption can be removed under full-information \Cref{item:rank-avg} feedback when $\tau$ is a constant; and (iv) when all players run our no-regret learning algorithms in the repeated play of a normal-form game, the induced play yields an approximate coarse correlated equilibrium. Our contributions are summarized in \Cref{table:contribution}. We present an application to large-language-model routing in \Cref{section:main-experiment}, and provide experimental validation of our algorithms in \Cref{section:experiments}.
\begin{figure}[!t]
    \vspace{-20pt}
    \centering
    \includegraphics[width=0.8\linewidth]{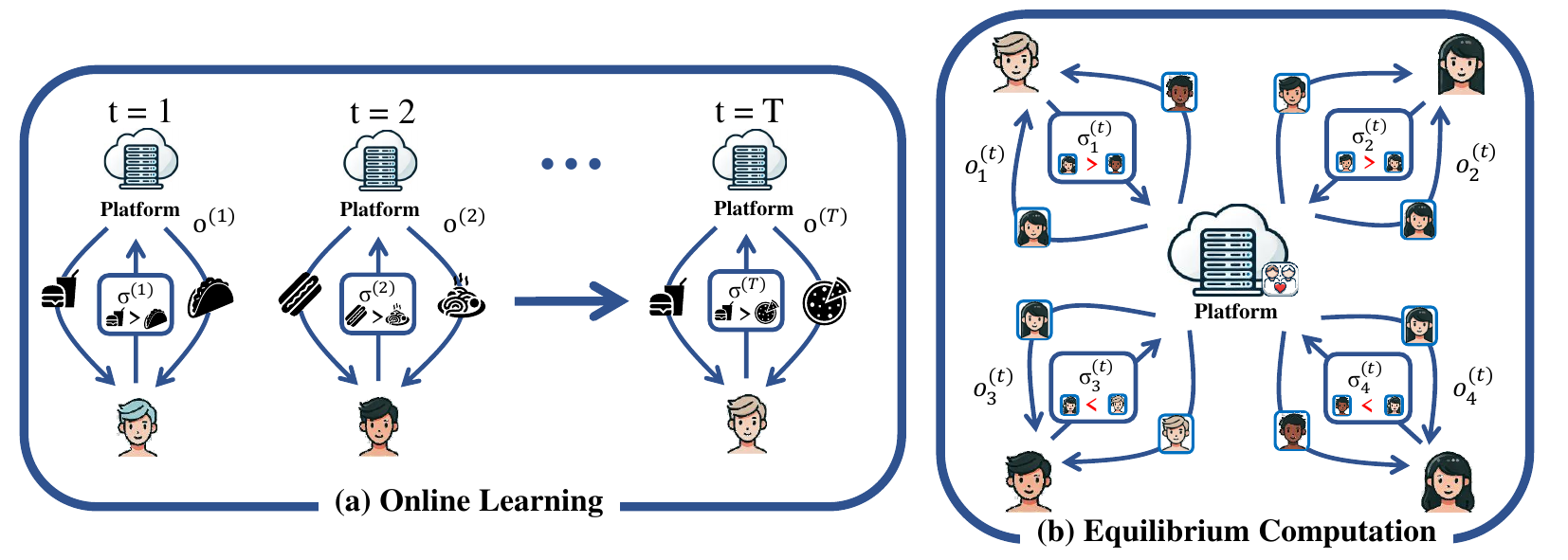}
    \caption{Two   examples of online learning and equilibrium computation with ranking feedback. In (a), an online platform recommends food options to a customer at each timestep and receives a ranking over the proposed items, which it uses to improve recommendation quality. In (b), an online dating app recommends potential matches; users rank the suggested candidates, and the platform leverages these rankings to learn matching equilibria over time.}
    \label{fig:example}
    \vspace{-15pt}
\end{figure}

\begin{table}[!t]
    \vspace{-10pt}
    \centering
    \begin{tabular}{|c|c|c|}
      \hline 
      \kzedit{Lower Bound} & Full-Information & Bandit   \\\hline
        \Cref{item:rank-instant}& \multicolumn{2}{c|}{\kzedit{$\Omega(T)$ for} $\tau\leq \cO\rbr{1}$}  \\ \hline
        \Cref{item:rank-avg}& \kzedit{$\tilde \Omega(T)$ for}  $\tau\leq \cO\rbr{\frac{1}{T\log T}}$ & \kzedit{$\Omega(T)$ for} $\tau\leq \cO\rbr{\frac{1}{\log T}}$ \\
        \hline 
    \end{tabular}
    \begin{tabular}{|c|c|c|}
      \hline 
    \kzedit{\begin{tabular}[c]{@{}c@{}}Upper Bound\\ ($\tau=\cO(1)$, Sublinear Regret)\end{tabular}} & Full-Information & Bandit \\\hline
        \Cref{item:rank-instant}& \multicolumn{2}{c|}{ \Cref{assumption:sublinear}}  \\ \hline
        \Cref{item:rank-avg}& \checkmark & \Cref{assumption:sublinear} ($q<\frac{1}{3}$) \\
        \hline
    \end{tabular}
    \caption{Summary of our contributions, including the \emph{negative results} (top) and the \emph{positive results} (bottom). The bottom table lists the minimal assumptions required to achieve sublinear regret in each setting (\checkmark indicates that no additional assumptions are needed). Here, $\tau>0$ denotes the temperature parameter of the ranking model in \Cref{eq:PL-def}.}
    \label{table:contribution}
    \vspace{-12pt}
\end{table}

\section{Related Work}
\label{related-work}

\paragraph{Dueling Bandits.} Using comparison and/or ranking feedback for sequential decision-making has mostly been studied under the framework of dueling bandits \citep{yue2012k-dueling,saha2022versatile,saha2019combinatorial-k-propose,du2020dueling,saha2021adversarial-dueling,dudik2015contextual}, where the agent takes two (or multiple) actions at each timestep, and receives a ranking of them as feedback. Different from our setting, the ranking feedback in these works was only based on the \emph{instantaneous}  utility at that timestep, while our results can address \emph{both} settings with instantaneous and time-average utilities for ranking. More importantly, the regret notions studied in these works were  particularly designed for the dueling-bandit setting, and thus different from the  classical external regret we focus on here. 
Finally, dueling bandits mostly focused on environments that are \emph{stationary} and \emph{stochastic} \citep{yue2012k-dueling,saha2022versatile,saha2019combinatorial-k-propose,du2020dueling}, while we focus on the \emph{non-stochastic} setting where the environment is arbitrary and potentially \emph{adversarial}, as in online learning \citep{shalev2012online,hazan2016introduction-online-book}. Due to the last two differences, the implication of these dueling-bandit    algorithms in the game-theoretic setting is unclear, while our algorithms find an approximate CCE of the game, as a corollary of the no-(external-)regret guarantee.

\paragraph{Reinforcement Learning from Human Feedback (RLHF) and Preference-Based RL.} 
Inspired by the successes in aligning large language models (LLMs) \citep{ouyang2022training-RLHF}, reinforcement learning from human feedback has received increasing attention.  RLHF is usually instantiated as \emph{preference-based} learning, where humans rank the model outputs based on their preferences, and a reward model is then estimated from the feedback, which will be further used for model fine-tuning. This way, RLHF is oftentimes  implemented in an \emph{offline} fashion, where batch feedback data are used for reward model estimation \citep{ziegler2019fine-RLHF,bai2022training-RLHF,ouyang2022training-RLHF,zhu2023principled,park2024principled}. Recently, \emph{online} versions of RLHF have also been developed \citep{dwaracherla2024efficient,duexploration,xie2024exploratory,cen2024value,zhang2024self}, where the \emph{exploration} issue was addressed with online feedback. In fact, beyond fine-tuning LLMs, preference-based RL has also been studied in the classical Markov decision process model with online feedback 
\citep{novoseller2020dueling,saha2023dueling,xu2020preference}. However, the utility/reward functions in these works are again stationary, and the regret notions extend those in the dueling-bandits literature, which are thus different from ours. Hence, these results  do not apply to our adversarial online learning and game-theoretic settings.  

\kzedit{\paragraph{Learning of Stable Matchings.} Some of our motivating scenarios for the game-theoretic setting may also be modeled as the \emph{stable matching} problem  \citep{gale1962college}, which has been extensively studied when the agents have full knowledge of their preferences. Recently, growing efforts have been devoted to \emph{learning} in stable matching markets with unknown preferences, and through interactions between the agents \citep{liu2020competing,liu2021bandit,basu2021beyond,jagadeesan2021learning,etesami2025decentralized,shah2024two,shah2024learning}. Notably, \cite{etesami2025decentralized,shah2024two,shah2024learning} also took a \emph{game-theoretic} perspective, by developing learning dynamics for finding matchings in a decentralized, uncoordinated fashion. However, one key difference is that the learning agents (\emph{e.g.},  the proposers or the platform) can still receive \emph{numeric} feedback of the utilities each round, based on the matching result,  while in our model, they can only receive the ranking feedback. Moreover, the learning dynamics in \cite{etesami2025decentralized,shah2024learning,shah2024two} were specific to the matching model, while ours aim to address general normal-form games.}

\paragraph{Recent Work by \citet*{bandit-ranking-feedback}.} The work closest to ours is the recent one by \citet{bandit-ranking-feedback}, which studied multi-armed bandits with ranking feedback,  also under the standard (external-)regret metric. Different from the ranking model in dueling bandits, the model of \citet{bandit-ranking-feedback} is based on time-average utilities, a setting also considered in our paper. More importantly, in contrast to our paper, \citet{bandit-ranking-feedback} focused on the \emph{stochastic} bandits setting where the utility functions are stationary, while our focus is on the \emph{adversarial/online} and game-theoretic settings, with \emph{both} instantaneous and time-average utility-based rankings. Furthermore, the  ranking model in \citet{bandit-ranking-feedback} corresponds to the case of $\tau\to 0^+$ in our framework. 
Finally and notably,  \citet{bandit-ranking-feedback} also provided a hardness result for the adversarial bandit setting (with $\tau\to 0^+$ in our framework), while our hardness results (with different hard instances) are stronger in the sense that they allow a wider range of $\tau$ for the bandit setting, and also cover the full-information setting  
(cf. \Cref{table:contribution}).

\section{Preliminaries}

The basic notation and a brief introduction to normal-form games are deferred to \Cref{section:notations}.%

\subsection{Online Learning}

We study online learning in a {non-stochastic} and potentially {adversarial} environment, where an agent interacts with the environment over multiple timesteps by selecting an action and receiving feedback at each timestep. The agent’s action set is finite and denoted as $\cA\coloneqq \cbr{a^1,a^2,\dots,a^{\abr{\cA}}}$ with $\abr{\cA}>1$. At each timestep $t$, the agent commits to a mixed strategy $\pi^{(t)}\in\Delta^{\cA}$.
\kzedit{In the classical online learning setting (with numeric feedback), with} \emph{full-information} \kzedit{feedback},  the agent observes a utility vector $\bu^{(t)}\in[-1,1]^{\cA}$; \kzedit{with} \emph{bandit} \kzedit{feedback}, the agent observes only the realized utility $u^{(t)}(a^{(t)})$ for the sampled action $a^{(t)}\sim\pi^{(t)}$. 

The agent’s goal is to minimize her (external) \emph{regret}, defined as the difference between her cumulative utility and that of the best fixed action in hindsight. Formally, for any integer $T>0$, the regret is defined as
{\small\begin{align}\label{equ:def_ext_regret}
    R^{(T), {\rm external}}\coloneqq \max_{\hat\pi\in\Delta^{\cA}}\sum_{t=1}^T \inner{\bu^{(t)}}{\hat\pi-\pi^{(t)}}.
\end{align}}
Since our goal is to minimize regret, and regret is unchanged if the utility vector $\bu^{(t)}$ is shifted by an additive constant at each timestep $t$, we may assume without loss of generality that $u^{(t)}(a^{|\cA|})=0$, \emph{i.e.}, the last action always receives zero utility for any $t\in[T]$.

\subsection{Online Learning Algorithms with Numeric Feedback}

Our main results later will be \emph{modular}, in the sense that \emph{any} standard online learning algorithm with (full-information) numeric feedback, including projected gradient descent (PGD), multiplicative-weights update (MWU), and Follow-The-Regularized-Leader (FTRL) \citep{hazan2016introduction-online-book}, can be used as a deterministic \emph{black-box} oracle in the algorithms we develop.

As a preliminary, we formally define such deterministic oracles here. Let ${\rm Alg}\colon \bigcup_{t=0}^{\infty}\rbr{\RR^{\cA}}^t\to \Delta^{\cA}$ denote an online learning algorithm, which can be viewed as a mapping from a sequence of utility vectors to a distribution over the action set $\cA$. Thus, given utility vectors $\rbr{\bu^{(s)}}_{s=1}^{t}$ from timesteps $1$ through $t$, the algorithm outputs the next strategy
$\pi^{(t+1)} = {\rm Alg}\rbr{\rbr{\bu^{(s)}}_{s=1}^{t}}$
to be played at timestep $t+1$. Note that this formulation implicitly assumes that ${\rm Alg}$ is deterministic. Finally, we denote the (external) regret incurred by ${\rm Alg}$ as
{\small\begin{align}
    R^{(T),{\rm external}}\rbr{{\rm Alg}, \rbr{\bu^{(t)}}_{t=1}^T}\coloneqq \max_{\hat\pi\in\Delta^{\cA}}\sum_{t=1}^T \inner{\bu^{(t)}}{\hat\pi-{\rm Alg}\rbr{\rbr{\bu^{(s)}}_{s=1}^{t-1}}},
\end{align}}
which can be made sublinear in $T$ for any sequence of utility vectors $\rbr{\bu^{(t)}}_{t=1}^T$ \citep{hazan2016introduction-online-book}.

\section{Online Learning with Ranking Feedback}

In online learning with ranking feedback, at each timestep $t$, the agent does not have direct access to $\bu^{(t)}$, nor the realized utility (the utility of the realized action at timestep $t$). Instead, at timestep $t$, she can propose a multiset (which may include repeated elements) of actions $o^{(t)}$, and receive a permutation $\sigma^{(t)}\in\Sigma\rbr{o^{(t)}}$ ($\Sigma(\cdot)$ returns the set of all permutations) from the environment, representing a \emph{ranking} of those actions in $o^{(t)}$. In the full-information setting, $o^{(t)}=\cA$, \emph{i.e.}, the whole action set is proposed. In the bandit setting, $\abr{o^{(t)}}=K\leq |\cA|$. Suppose the agent's strategy at timestep $t$ is $\pi^{(t)}\in\Delta^\cA$, then we assume that in this bandit setting, the actions in $o^{(t)}$ are proposed by sampling from $\pi^{(t)}$ independently (with replacement). This way, the empirical average of the utilities will be an  \emph{unbiased} estimator of the expected utility, \emph{i.e.},   $\EE\sbr{\frac{\sum_{a\in o^{(t)}}u^{(t)}(a)}{K}}=\inner{\bu^{(t)}}{\pi^{(t)}}$.  
We will adopt this proposal mechanism throughout the paper, which was also used in the literature of adversarial dueling bandits \citep{saha2021adversarial-dueling}.

Let $\sigma^{(t)}(k)\in\cA$ be the $k^{th}$ element of the permutation for any $k\in [K]$. Then, for any $k_1<k_2\in [K]$, action $\sigma^{(t)}(k_1)$ is preferred over action $\sigma^{(t)}(k_2)$. For notational simplicity, we define $a^i\overset{\sigma}{>} a^j$ if action $a^i$ appears ahead of $a^j$ ($a^i$ is preferred over $a^j$) in a permutation $\sigma$.

For the ranking model, we consider the standard Plackett-Luce (PL) model \citep{luce1959individual-PL,plackett1975analysis-PL}, where at each timestep $t$, conditioned on the proposed action set $o^{(t)}$, the ranking $\sigma^{(t)}$ is sampled according to 
{\small\begin{align}
    \PP\rbr{\sigma^{(t)} \biggiven o^{(t)}}=\prod_{k_1=1}^K\frac{\exp\rbr{\frac{1}{\tau}{r^{(t)}}\rbr{\sigma^{(t)}\rbr{k_1}}}}{\sum_{k_2=k_1}^K\exp\rbr{\frac{1}{\tau}{r^{(t)}}\rbr{\sigma^{(t)}\rbr{k_2}}}},
    \tag{PL}
    \label{eq:PL-def}
\end{align}}
where 
\kzedit{$\br^{(t)}\in \RR^{\cA}$ is some utility vector based on which the ranking is determined,} $\tau>0$ is the \emph{temperature}  parameter that determines how uncertain the ranking model is: when $\tau\to 0^+$, the model is absolutely certain, and the action with a larger utility \kzedit{in $\br^{(t)}$} will always be ranked in front of the actions with a smaller utility in the permutation (ties are broken in favor of the smaller index). The utility vector $\br^{(t)}$ depends on the problem setting, which we will introduce next. 

We consider two types of ranking feedback \kzedit{throughout}  the  paper\kzedit{, based on the choice of $\br^{(t)}$ in \Cref{eq:PL-def}}: (i) ranking by the \emph{instantaneous} utility (\Cref{item:rank-instant}); (ii) ranking by the \emph{time-average} utility (\Cref{item:rank-avg}). These two feedback types may be motivated by different applications (cf. \Cref{section:introduction}), and have been studied in \citet{yue2012k-dueling,saha2022versatile,bandit-ranking-feedback}. Both feedback types can also be further separately defined for the \emph{full-information} and \emph{bandit} settings, respectively, as follows. 

\begin{enumerate}[left=0mm,nosep,leftmargin=*, align=left, labelsep=0pt, labelwidth=0pt,label={\bf InstUtil Rank:\quad},ref={\bf InstUtil Rank}] 
    \item \textbf{Ranking with Instantaneous Utility.}
\label{item:rank-instant}
\end{enumerate}
The first type of ranking feedback we consider  
is based on the \emph{instantaneous}  utility function, \emph{i.e.}, $\br^{(t)}=\bu^{(t)}$ in \Cref{eq:PL-def}. 
This type is relevant when the feedback provider is oblivious or one-shot. For example, a stream of customers arrive in an online fashion, each of whom arrives, ranks, and then leaves, see \emph{e.g.}, \citet{mansour2015bayesian}. When the environment is stationary and stochastic, the classical dueling-bandits model likewise uses instantaneous utilities for comparison and ranking \citep{yue2012k-dueling,du2020dueling}.
\begin{description}[nosep,left=1mm]
    \item[\bf Full-information setting.]  
    \emph{All} actions are proposed and evaluated at each timestep $t$, \emph{i.e.}, $o^{(t)}=\cA$. Hence, the agent’s performance can be \emph{evaluated}  by $\inner{\bu^{(t)}}{\pi^{(t)}}$. Note that this does not imply the agent has access to the full vector $\bu^{(t)}$, as that would defeat the purpose of our ranking-feedback setting. Accordingly, we evaluate performance using the standard external regret $R^{(T),{\rm external}}$ defined in \Cref{equ:def_ext_regret}.
    \item[\bf Bandit setting.] Only the \emph{proposed actions} at each timestep $t$ can be evaluated and ranked, with the associated elements in the vector  $\bu^{(t)}$. In particular, the proposed actions are \emph{evaluated}  by the average utility of $\frac{1}{K}\sum_{a\in o^{(t)}} u^{(t)}\rbr{a}$, leading to the following performance metric of {regret}:
    {\small\begin{align}\label{equ:def_bandit_regret}
        R^{(T)} \coloneqq \max_{\hat \pi\in\Delta^{\cA}}\sum_{t=1}^{T}\rbr{\inner{\bu^{(t)}}{\hat\pi}-
        \frac{1}{K}\sum_{a\in o^{(t)}}u^{(t)}\rbr{a}}.
    \end{align}}
    Note that such a definition is an external regret, which differs from the regret studied in (multi-)dueling-bandits \citep{yue2012k-dueling,du2020dueling,saha2021adversarial-dueling,saha2022versatile}. Details can be found in \Cref{related-work}.
\end{description}

\begin{enumerate}[nosep,left=0mm,leftmargin=*, align=left, labelsep=0pt, labelwidth=0pt,label={\bf AvgUtil Rank:\quad},ref={\bf AvgUtil Rank},start=2] 
    \item \textbf{Ranking with \kzedit{Time-average} Utility.}
\label{item:rank-avg}
\end{enumerate}
The second type of ranking-feedback is based on the \emph{time-average} utility, which differs for the full-information and bandit settings, as detailed below. \kzedit{This type is relevant when the feedback provider has \emph{memory} and can use the history of utilities for ranking. For example, the customers are long-lived in the platform, see \emph{e.g.}, \citet{kuccukgul2022engineering} and \citet{baldwin2009architecture}. Notably, under bandit feedback, when $\tau\to 0^+$ and the environment is stationary and stochastic,  such a model 
aligns with the one studied in the recent work of \citet{bandit-ranking-feedback}.} 
\begin{description}[nosep, left=1mm]
    \item[\bf Full-information setting.] The time-average utility vector of $\bu_{\rm avg}^{(t)}\coloneqq \frac{1}{t}\sum_{s=1}^t\bu^{(s)}$ will be used as the $\br^{(t)}$ in \Cref{eq:PL-def}, and the same (external-)regret  $R^{(T),{\rm external}}$ from \Cref{equ:def_ext_regret} will be used as the  metric.
    
    \item[\bf Bandit setting.] Only the proposed actions will be given to the environment to evaluate. For instance, the platform (learning agent) may recommend $K$ restaurants among all possibilities to the user (environment) to try out, so that the user will only know her evaluations of those $K$ restaurants. As a result, the average utility is now defined as the \emph{empirical mean} of the utility vectors over time.
    Formally, for each timestep $t>0$ and action $a\in\cA$, we define %
    {\small\begin{align}\label{equ:def_u_empirical}
        u_{\rm empirical}^{(t)}(a)\coloneqq \frac{\sum_{s=1}^t u^{(s)}(a)\sum_{a'\in o^{(s)}}\ind\rbr{a=a'}}{\sum_{s=1}^t \sum_{a'\in o^{(s)}}\ind\rbr{a=a'}},
    \end{align}}
    and $u_{\rm empirical}^{(0)}(a)=0$. This $\bu_{\rm empirical}^{(t)}$ will then be used as the   $\br^{(t)}$ in  \Cref{eq:PL-def} for ranking. The regret metric will still be the one in \Cref{equ:def_bandit_regret}. 

    Due to space constraints, we defer the background and formalism of equilibrium computation (with ranking feedback) in the game-theoretic setting  to  \Cref{section:notations}.
\end{description}

\section{Hardness  Results}
\label{section:hardness-result}

In this section, we present hard instances to show that online learning in non-stochastic and potentially adversarial environments  can be  hard in general, 
under both \Cref{item:rank-instant} and \Cref{item:rank-avg}, even when there are only two actions.

\Cref{lemma:rank-instant-hard} in the following shows that for any temperature $\tau$ in \Cref{eq:PL-def} no larger than a \emph{constant}, there exists a \kzedit{sequence} of utility vectors such that the expected regret is linear under \Cref{item:rank-instant}\kzedit{, for both full-information and bandit feedback  settings.} 

\begin{theorem}
    \label{lemma:rank-instant-hard}
    Consider \Cref{item:rank-instant}. For any $T>0$, temperature $0<\tau\leq 0.1$, and online learning algorithm, there exists a sequence of utilities $\rbr{\bu^{(t)}}_{t=1}^T$ such that $\min\cbr{\EE\sbr{R^{(T), {\rm external}}}, \EE\sbr{R^{(T)}}}\geq \Omega\rbr{T}$ 
    in both full-information and bandit feedback settings. The expectation is taken over the randomness of the algorithm and the ranking.
\end{theorem}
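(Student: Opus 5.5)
We will use two actions, $\cA=\cbr{a^1,a^2}$, with $u^{(t)}(a^2)=0$, as permitted by the normalization. The point is that instantaneous ranking feedback only shows the sign of $u^{(t)}(a^1)$ at each step. Its magnitude is invisible, and the regret depends on that magnitude. We will build two utility sequences whose ranking feedback has the same distribution, so no algorithm can tell them apart, but whose best fixed actions differ.

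The instances are as follows. Under sequence $A$, odd steps have $u^{(t)}(a^1)=+1$ and even steps have $u^{(t)}(a^1)=-\delta$. Under sequence $B$, the magnitudes are swapped: odd steps have $+\delta$ and even steps have $-1$. Take $\delta$ to be a fixed small constant, say $\delta=1/2$. Under the PL model with two items, $a^1$ is ranked first with probability $1/(1+e^{-r/\tau})$, where $r=u^{(t)}(a^1)$. In sequence $A$ this probability alternates between $p_1=(1+e^{-1/\tau})^{-1}$ and $1-p_\delta$, with $p_\delta=(1+e^{-\delta/\tau})^{-1}$. In sequence $B$ it alternates between $p_\delta$ and $1-p_1$. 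These do not match exactly, so the feedback is not literally identical. This is where $\tau\le 0.1$ enters. With $\delta=1/2$, both $e^{-1/\tau}$ and $e^{-\delta/\tau}$ are at most $e^{-5}$. The per-step total variation between the two feedback laws is therefore at most a constant $c<1$, but that alone does not give indistinguishability over $T$ steps.

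This is the main obstacle. A per-step discrepancy that does not vanish adds up over $T$ rounds, so the two instances become distinguishable. The first fix I would try is to make each instance randomized over time, so that the algorithm cannot compare statistics. A cleaner fix is to choose the magnitudes so that the PL probabilities coincide exactly. Both instances should produce the same per-step probabilities $q_t$, while the cumulative values $\sum_t u^{(t)}(a^1)$ take opposite signs. Because the sigmoid is injective, equal probabilities force equal $u$, so two fixed-sequence instances cannot match exactly.

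I would therefore switch to a randomized instance (Yao's principle). At each step, draw $u^{(t)}(a^1)$ i.i.d. from a distribution $D$ on $[-1,1]$, independently of the algorithm. The algorithm's choices depend only on the rankings, which are i.i.d. Bernoulli($q$) with $q=\EE_D[(1+e^{-u/\tau})^{-1}]$. We compare two distributions $D_A,D_B$ that give the same $q$ but have means $\mu_A=+\epsilon$ and $\mu_B=-\epsilon$ for a constant $\epsilon$. One construction puts mass on $\cbr{+1,-s}$ with $s$ small, and the mirrored construction puts mass on $\cbr{+s',-1}$. The ranking probability is nearly a step function when $\tau\le0.1$ and $s,s'$ are not too small, so matching $q$ is a one-parameter equation solved by continuity. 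Some care is needed to keep $\epsilon=\Omega(1)$, and this is where the constant $0.1$ is used.

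Under $D_A$ and $D_B$ the feedback laws are identical. So $\EE[\pi^{(t)}(a^1)]$ is the same function of $t$ in both worlds, and likewise for the proposal multiset in the bandit case. The algorithm earns $\mu\cdot\EE[\pi^{(t)}(a^1)]$ per step in expectation. The best fixed action earns $\max\cbr{\mu,0}$ in expectation, and by Jensen $\EE\max$ is at least this. Summing the regrets over the two worlds gives
\[
\sum_t\Big(\epsilon(1-\EE[\pi^{(t)}(a^1)])+\epsilon\,\EE[\pi^{(t)}(a^1)]\Big)=\epsilon T.
\]
Hence one of the two worlds has expected regret at least $\epsilon T/2=\Omega(T)$, in both the full-information and bandit settings. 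Finally, we fix a deterministic sequence attaining at least the average, which yields the claimed utility sequence.
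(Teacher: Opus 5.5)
Your proposal is correct and follows essentially the paper's route. The paper likewise draws $u^{(t)}(a)$ i.i.d.\ from one of two two-point distributions, supported on $\{-0.5,0.15\}$ and $\{-0.02,0.1\}$, weighted so that the probability of $a$ being ranked above $b$ is the same in both, with means $-0.05$ and $0.03$. It uses $K\le|\cA|=2$ so that the ranking law depends on the distribution only through that probability, and concludes that the expected strategies coincide in both worlds, which forces linear regret in one of them. Two differences are in your favour. The paper only writes the construction out at $\tau=0.1$, whereas your mirrored family $\{+1,-s\}$ versus $\{+s',-1\}$ covers all $\tau\le 0.1$ uniformly: for example, $s=s'=1/2$ with weight $1/2$ on $+1$, and the other weight solved by continuity, gives $\epsilon\ge 0.24$. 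You also make explicit the Jensen and derandomization steps that the paper leaves implicit.
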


To prove \Cref{lemma:rank-instant-hard}, we need to construct two sequences of utility vectors, which yield the same ranking under \Cref{item:rank-instant} in expectation. However, being no-regret in one of them will result in linear regret in the other. 
The detailed proof can be found in \Cref{section:proof-hard-instances}. 

\kzedit{The key challenge in achieving no-regret in the hard instance above is that the utility vectors $\rbr{\bu^{(t)}}_{t=1}^T$ change arbitrarily fast, \emph{i.e.}, the accumulated variation grows \emph{linearly} in time. Hence, to obtain positive results, we may need to restrict how fast they change  over time, as quantitatively characterized by the following assumption.}

\begin{assumption}[Sublinear variation of utility vectors]
\label{assumption:sublinear}
The utility vectors $\rbr{\bu^{(t)}}_{t=1}^T$ have  a sublinear variation over time, \emph{i.e.}, for some $q<1$,
{\small\begin{align}
P^{(T)}\coloneqq\sum_{t=2}^T\nbr{\bu^{(t)}-\bu^{(t-1)}}\le  \cO(T^q).\label{eq:variation-def}
\end{align}}
\end{assumption}

Our result stated in \Cref{section:instant-learning} next will show that with \Cref{assumption:sublinear}, we can achieve sublinear regret, and thus close the gap. Moreover, note that in a game where \kzedit{the opponents all run} 
common no-regret learning algorithms such as Follow-The-Regularized-Leader (cf. \Cref{def:FTRL}), \Cref{assumption:sublinear} will be satisfied (cf.  \Cref{lemma:FTRL-smoothness}). 

\kzedit{Next, we show in \Cref{lemma:rank-avg-hard} that when \Cref{item:rank-avg} is used,} 
and $\tau$ is small enough, \kzedit{the minimal regret is still at least linear in $T$ (up to logarithmic terms).} 
\begin{theorem}
    \label{lemma:rank-avg-hard}
    Consider \Cref{item:rank-avg} with full-information feedback. For any $T>0$, temperature $0<\tau\leq \cO\rbr{\frac{1}{T\log T}}$, and online learning algorithm, there exists $T'\geq T$ and a sequence of utilities $\rbr{\bu^{(t)}}_{t=1}^{T'}$ such that  $  \EE\sbr{R^{(T'), {\rm external}}}\geq \tilde{\Omega}\rbr{{T'}}.$ 
    The expectation is taken over the randomness of the algorithm and the ranking.
\end{theorem}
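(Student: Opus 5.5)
We prove \Cref{lemma:rank-avg-hard} with a two-action construction, $\cA=\{a^1,a^2\}$ and $u^{(t)}(a^2)=0$, so everything is determined by $x_t:=u^{(t)}(a^1)\in[-1,1]$. The feedback at time $t$ then reduces to a single bit: whether $a^1$ is ranked first. This happens with probability $\sigma\bigl(S_t/(t\tau)\bigr)$, where $S_t=\sum_{s\le t}x_s$ and $\sigma$ is the logistic function. When $\tau\le c/(T\log T)$, every prefix sum with $|S_t|\ge 1/T$ makes this probability $1\pm T^{-\Omega(c)}$-close to deterministic. Up to a union bound costing $o(1)$ in total variation, the learner therefore observes only the \emph{sign sequence} $\operatorname{sign}(S_t)$. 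The strategy is to build two environments whose prefix sums have identical signs at every step, but in which the best fixed action differs and has a large margin. A learner that cannot distinguish them must incur linear regret in one of them.

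Concretely, we work in epochs whose lengths grow geometrically; this is why the statement allows $T'\ge T$ and only claims $\tilde\Omega(T')$. We keep $S_t$ positive but tiny, say $S_t\in[1/T,2/T]$, so the sign is always ``$a^1$ ahead''. Within each block the instantaneous utilities alternate between large values, e.g.\ $+1$ followed by $-1+\epsilon$ adjustments that keep the running sum in a narrow positive band.

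Positivity of all prefix sums forces $\sum_t x_t>0$, so $a^1$ is weakly best overall. The regret must therefore come from the learner's choices \emph{within} blocks. Instance A has $x_t=+1$ on odd steps and $x_t\approx-1$ on even steps. Instance B swaps the timing using a preparatory offset: it first accumulates a tiny positive buffer, then plays $-1$ before $+1$. Both keep every $S_t>0$, so both generate the same all-positive sign sequence. A deterministic-in-distribution learner plays the same $\pi^{(t)}$ in both. If it is correlated with $x_t$ in one instance, it is anti-correlated in the other.

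This alone does not yet give linear regret, since one can simply play $a^1$ always. The fix is to let the benchmark differ. In instance A the ranking stays positive, but the real cumulative utility of $a^1$ is very negative. This is possible because time-averaging lets a huge early positive lump mask later negative drift: the sign of $S_t$ is preserved while $S_T-S_{T/2}\approx -\Theta(T)$ relative to a phase-two start. Instance B keeps $a^1$ positive in phase two. To make phase two's average still positive with $\tau$ tiny, the phase-one lump must exceed the phase-two drift. This forces $T'$ to be a constant multiple of the phase length, and the regret is then measured in phase two.

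The main obstacle is reconciling these requirements. The regret benchmark is over the whole horizon, and the initial lump favors $a^1$. To get $\tilde\Omega(T')$ regret we need the penalty for playing $a^1$ in phase two to exceed the lump advantage. I would try a multi-scale construction: epoch $k$ has length $2^k$ and alternating drift sign, with the sign of $S_t$ kept constant throughout. Each epoch's drift must be masked by the accumulated sum, which loses a $\log$ factor; this is presumably where the $1/(T\log T)$ threshold and the $\tilde\Omega$ come from. An averaging (Yao) argument over the two instances then finishes the proof.
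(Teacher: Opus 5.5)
There is a genuine gap, and it lies in the constraint you build everything around: that $\mathrm{sign}(S_t)$ stays constant over the whole horizon. Consider the follow-the-leader learner, which plays whichever action was ranked first at the previous step. On every instance with $S_t>0$ for all $t$, its regret is $\max\{S_{T'},0\}-\sum_t\pi^{(t)}(a^1)x_t=(1-\pi^{(1)}(a^1))x_1\le 1$, however the $\pm1$ values are timed. So against this learner the adversary must make the leader switch, and your plan has no mechanism for that. The lump idea does not escape this. External regret is computed over all of $[T']$, so the early lump counts in favor of $a^1$, and asking the later penalty to exceed it is asking for $S_{T'}<0$, i.e.\ a sign change. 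Likewise, two instances whose signs agree at every step have the same best fixed action, since the sign of $S_{T'}$ decides it. The hardness has to come from the learner not knowing \emph{when} the leader will switch. There is also a quantitative slip. The logit is $S_t/(t\tau)$, so with $\tau\approx c/(T\log T)$ a margin $|S_t|\ge 1/T$ only gives a logit of order $\log T/(cT)$ at $t\approx T$, i.e.\ an almost uniformly random ranking; near-deterministic rankings require $|S_t|=\Omega(1)$.

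This switching-time uncertainty is what the paper's proof exploits. Write $v^n$ for $n$ copies of $v$ and let $K=\lceil\log_2T\rceil$. After the first utility vector $(0.5,0)$, the proof offers $K+1$ candidate blocks: block $k<K$ is $(1,0)^{2^k-1}(0,1)^{2^k}(0,0)$, and block $K$ is $(1,0)^{2^K-1}(0,0)$. All of them display ``$a$ leads'' except at the last two steps of blocks $k<K$. Hence the learner's play on the shared prefix is one fixed sequence $x^{(t)}$, and its average utilities in the $K+1$ blocks sum to exactly $\frac{K}{2}$ whatever $x^{(t)}$ is. The adversary appends a bad block, runs the mirror construction with $a$ and $b$ swapped, and repeats until the length reaches $T$. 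The cumulative gap is always an odd multiple of $\frac{1}{2}$, so $|S_t|\ge\frac{1}{2}$. For $\tau\le c/(T\log T)$ with $c$ small, each ranking is then wrong with probability $\mathcal{O}(T^{-2})$, and the $\log T$ loss in $\tilde\Omega$ comes from averaging over the $K+1$ switch times.

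One caution if you follow this route. Because the lead returns to $\pm\frac{1}{2}$ after each block $k<K$, the best fixed action collects, up to $\mathcal{O}(1)$ overall, only $\frac{1}{2}-2^{-(k+2)}$ per step of each such block, and $1-2^{-K}$ per step of a final block $K$. This is not $\frac{1}{2}-\frac{1}{T}$ per step, as the last step of the paper's proof asserts. Uniform play attains exactly this share in every block $k<K$, so choosing any block whose average utility is below $\frac{1}{2}-\frac{1}{2(K+1)}$ does not by itself force regret. Choose instead the block with the largest gap to its share. The shares sum to $\frac{K+1}{2}-2^{-(K+1)}$ against the learner's $\frac{K}{2}$, so some block has a gap of $\Omega(1/\log T)$ per step. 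Concatenating such blocks gives $\tilde\Omega(T')$ regret.
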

To prove \Cref{lemma:rank-avg-hard}, we construct $\log T$ sequences of utility vectors that induce identical ranking feedback when $\tau$ is small. We then show that at least one of these sequences incurs average regret $\tilde{\Omega}(1)$.
Given \Cref{lemma:rank-avg-hard}, it is impossible to \kzedit{achieve $\tilde o(T)$} with \Cref{item:rank-avg} when $\tau$ is very small. However, in \Cref{section:avg-learning}, we will mitigate the gap by showing that when $\tau$ is a \emph{constant} (\emph{i.e.}, $\cO(1)$),  we can achieve sublinear regret with \Cref{item:rank-avg}, even without \Cref{assumption:sublinear}.

{Due to the different instantiations of $\br^{(t)}$ in the full-information and the bandit feedback  settings under \Cref{item:rank-avg}, 
we have a separate hardness result for the latter, stronger than \Cref{lemma:rank-avg-hard}, as it allows a larger $\tau$ and avoids logarithmic terms. The result can be viewed as strengthening the hardness result for the adversarial bandit setting in \cite{bandit-ranking-feedback}, which corresponds to the case in our model with  $\tau\to 0^+$.} 

\begin{theorem}
    \label{lemma:rank-avg-bandit-hard}
    Consider \Cref{item:rank-avg} with bandit feedback. For any $T>0$, temperature $0<\tau\leq \cO\rbr{\frac{1}{\log T}}$, and online learning algorithm, there exists a sequence of utilities $\rbr{\bu^{(t)}}_{t=1}^{4T}$ such that $ \EE\sbr{R^{(4T)}}\geq \Omega\rbr{T}.
    $ 
     The expectation is taken over the randomness of the algorithm and the ranking.
\end{theorem}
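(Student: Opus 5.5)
We will prove \Cref{lemma:rank-avg-bandit-hard} with a two-instance indistinguishability argument on two actions $\cA=\{a^1,a^2\}$, normalized so that $u^{(t)}(a^2)=0$. The key feature of the bandit version of \Cref{item:rank-avg} is that $\bu^{(t)}_{\rm empirical}$ averages only over the timesteps at which an action was \emph{proposed}. The environment can therefore place large utility on an action during phases in which the learner is unlikely to sample it, and the ranking may never reveal this.

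\textbf{Construction.} Split the horizon $4T$ into phases of length $T$. The template we would try first is as follows. In instance $\mathcal{I}_1$, set $u^{(t)}(a^1)=+c$ for all $t$, so $a^1$ is truly good. In instance $\mathcal{I}_2$, give $a^1$ utility $+c$ in early phases and $-1$ in later phases, arranged so that the cumulative sum of $u^{(t)}(a^1)$ over the whole horizon is negative by $\Omega(T)$. The sign of the empirical mean of $a^1$ is then determined by the early, positive samples unless the learner samples $a^1$ heavily in the later phases. Since $a^2$ has empirical mean $0$ and $a^1$ has empirical mean that stays bounded away from $0$ by a constant margin $c'$, the PL model with $\tau\le \cO(1/\log T)$ ranks $a^1$ first with probability at least $1-e^{-c'/\tau}\ge 1-1/T^2$ at each step. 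A union bound over $4T$ steps then shows that, with probability $1-\cO(1/T)$, both instances produce the identical ranking sequence ``$a^1$ first'' whenever both actions appear in $o^{(t)}$. If both proposed copies are the same action, the ranking is uninformative anyway. Hence the learner's distribution over trajectories in $\mathcal{I}_1$ and $\mathcal{I}_2$ differs in total variation by $\cO(1/T)$, and this is exactly where the $\tau\le\cO(1/\log T)$ condition enters.

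\textbf{Regret trade-off.} Let $N$ be the expected number of proposals of $a^2$ over the horizon. In $\mathcal{I}_1$, the regret in \Cref{equ:def_bandit_regret} is at least $c\,N/K$. In $\mathcal{I}_2$, the best fixed action is $a^2$, and the regret is roughly the expected sum of $u^{(t)}(a^1)$ over proposals of $a^1$, which is $\Omega(T)$ unless the learner avoids $a^1$ during the negative phases. Avoiding $a^1$ there means proposing $a^2$ a linear number of times, and by the indistinguishability above it does so in $\mathcal{I}_1$ as well, making $N=\Omega(T)$. Taking the maximum of the two regrets gives $\Omega(T)$.

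\textbf{Main obstacle.} The delicate point is keeping the empirical mean of $a^1$ positive in $\mathcal{I}_2$ even if the learner does sample $a^1$ during the negative phases. The first fix I would try is to make the early positive phase contain many guaranteed samples: an adaptive adversary, or a large utility $+1$ over $2T$ steps, so the early sum is large relative to any later $-1$ contributions. I would then choose the phase lengths, e.g.\ a positive phase of length $2T$ followed by negative phases, so that two things hold simultaneously. First, if the learner keeps sampling $a^1$ often enough to flip the sign, it already suffers $\Omega(T)$ regret. Second, if it does not, the ranking stays constant. I would balance the constants via a case split on how many times $a^1$ is sampled in the early phase, using the fact that the early phase is identical in both instances. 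This explains the $4T$ horizon.
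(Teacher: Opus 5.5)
Your template cannot work, and the ``main obstacle'' you flag cannot be fixed by tuning phase lengths or constants. With $u^{(t)}(a^2)\equiv 0$, the definition of the empirical mean gives, for every $t$,
\[
\sum_{s=1}^{t}\frac{1}{K}\sum_{a\in o^{(s)}}u^{(s)}(a)=\frac{1}{K}\,n^{(t)}(a^1)\,u^{(t)}_{\rm empirical}(a^1).
\]
In $\mathcal{I}_2$ the best fixed action is $a^2$, with cumulative utility $0$, so there $R^{(4T)}=-\frac{1}{K}n^{(4T)}(a^1)\,u^{(4T)}_{\rm empirical}(a^1)$ exactly. This regret is nonpositive while $u_{\rm empirical}(a^1)$ stays positive, which is precisely the event that gives you indistinguishability. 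It is $\Omega(T)$ only if $u_{\rm empirical}(a^1)$ has fallen below $u_{\rm empirical}(a^2)=0$ by a constant. Then any comparison of $a^1$ with $a^2$ exposes the instance, because in $\mathcal{I}_1$ we have $u_{\rm empirical}(a^1)\equiv c$.

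Concretely, take $K\ge 2$, as your comparison argument assumes. Consider the learner that proposes $a^1$ with probability $1-\epsilon$ and $a^2$ with probability $\epsilon=T^{-1/2}$, and switches to $a^2$ forever the first time $a^2$ is ranked ahead of $a^1$. It has $O(\sqrt{T})$ expected regret on both instances:
\begin{itemize}
\item In $\mathcal{I}_1$ it pays $O(\epsilon T)$ for exploration, and by your own union bound it switches spuriously only with probability $O(1/T)$.
\item In $\mathcal{I}_2$ its cumulative utility is nonnegative until $u_{\rm empirical}(a^1)$ crosses $0$. After that, a comparison won by $a^2$ arrives within $O(1/\epsilon)$ rounds in expectation.
\end{itemize}

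The same identity refutes your last paragraph. When the sign flips, a learner that kept sampling $a^1$ has cumulative utility about $0$, equal to the benchmark, so it has not ``already suffered $\Omega(T)$ regret.'' Requiring the early positive sum to dominate every later $-1$ is incompatible with $\sum_t u^{(t)}(a^1)<0$: a learner proposing $a^1$ every round has $u_{\rm empirical}(a^1)$ equal to the true time average. Your trade-off step is therefore circular. It invokes indistinguishability exactly when the learner samples $a^1$ heavily in the negative phase, which is exactly when indistinguishability fails.

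The missing idea is the opposite asymmetry. A \emph{deterioration} of an action you keep sampling reveals itself and, by the identity above, limits its own regret. An \emph{improvement} of an action you rarely sample is invisible, because its empirical mean is stale. The paper's proof uses two actions $a,b$, $K=1$, and vectors $(u(a),u(b))$:
\begin{itemize}
\item Both instances begin with $T$ rounds of $(0.1,0)$ and $T$ rounds of $(0,0.2)$. Low regret at times $T$ and $2T$ forces proposing $a$ in most of the first phase and $b$ in at least $0.45T$ rounds of the second. This puts $u_{\rm empirical}(b)$ above $u_{\rm empirical}(a)$ by a constant.
\item The last $2T$ rounds are $(0,1)$ in Instance 1, where low regret forces proposing $a$ at most $0.11T$ times.
\item They are $(0.4,0.2)$ in Instance 2, where $a$ is the best fixed action overall. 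Yet with so few fresh samples, $a$'s empirical mean stays at least $0.03$ below $b$'s.
\end{itemize}
So the ranking is ``$b$ first'' throughout in both instances, up to probability $O(1/T)$ when $\tau\le O(1/\log T)$. The learner must then incur linear regret at time $T$, $2T$, or $4T$ in one of them.

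Note that your normalization $u^{(t)}(a^2)=0$ is not without loss of generality here. A per-round shift moves $u_{\rm empirical}(a^1)$ and $u_{\rm empirical}(a^2)$ by different amounts, since each averages over its own sampled rounds. The paper's construction relies on exactly this freedom by giving the baseline action $b$ nonzero, phase-dependent utility.
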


To prove \Cref{lemma:rank-avg-bandit-hard}, we need to construct two utility sequences such that \kzedit{achieving}  sublinear regret in the first utility sequence will lead to insufficient exploration for the second sequence. As a result, when $\tau$ is small, those two sequences cannot be distinguished, and a linear regret must be incurred in one of them. Details of the proof can be found in \Cref{section:proof-hard-instances}.

\section{Online Learning with \Cref{item:rank-instant} Feedback} \label{section:instant-learning}

 {We start by introducing a new   utility estimation oracle to be used in our later algorithms.} 

\subsection{Utility Estimation} 
\label{section:utility-estimation-main-text}

A natural approach to learning from ranking feedback is to use the observed permutations to \emph{estimate} the underlying numeric utility vectors. At each timestep $t$, we form an estimate of the current utility vector $\bu^{(t)}$ using a sliding window of the most recent $m$ rounds of observations: when $t \geq m$, we use the past $m$ permutations $\cbr{\sigma^{(s)}}_{s=t-m+1}^t$ to construct an estimate   $\tilde{\bu}^{(t)}$. Due to the non-convexity of \Cref{eq:PL-def}, the key step is to decompose a length-$K$ ranking into pairwise comparisons. This reduction allows us to exploit the logistic structure of pairwise comparison probabilities and, via the monotonicity (and invertibility) of the logistic map, translate the estimation error of ranking probabilities back into an error bound on estimating the utilities. The full procedure is given in \Cref{alg: estimate}, and we state its guarantee next.

\begin{theorem}\label{them: u est bound}
Consider \Cref{item:rank-instant} and \Cref{alg: estimate}. Suppose each action is proposed \kzedit{independently} with probability at least $p>0$   at each timestep $t\in [T]$ and let $\tilde \bu^{(t)}={\rm Estimate}\rbr{\cbr{\sigma^{(s)}}_{s=t-m'+1}^t}$. Then, for any $\delta\in (0,1)$ and $t\geq m'$, when $m'p^4\ge 2\log\rbr{\frac{2}{\delta}}$, with probability at least $1-\delta$, the estimate  $\tilde{\bu}^{(t)}$ satisfies,
{\small\begin{align*}
        \nbr{\tilde{\bu}^{(t)}-\bu^{(t)}}_\infty\le&\frac{\tau\rbr{e^{\frac{1}{\tau}}+1}^2}{p}\sqrt{\frac{\log\rbr{\frac{4|\cA|}{\delta}}}{m'}}+ \sum_{s=t-m'+1}^{t-1} \nbr{\bu^{(s+1)}-\bu^{(s)}}_\infty.
\end{align*}}
\end{theorem}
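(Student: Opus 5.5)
The plan is to reduce the Plackett--Luce ranking to pairwise comparisons against the reference action $a^{|\cA|}$, whose utility is normalized to $0$, estimate each pairwise win probability empirically over the window, and invert the logistic map. Under \eqref{eq:PL-def}, the marginal probability that $a^i$ is ranked ahead of $a^j$, conditioned on both being in $o^{(s)}$, equals $\frac{e^{u^{(s)}(a^i)/\tau}}{e^{u^{(s)}(a^i)/\tau}+e^{u^{(s)}(a^j)/\tau}}=\phi\rbr{\frac{u^{(s)}(a^i)-u^{(s)}(a^j)}{\tau}}$, with $\phi$ the logistic function. This is the standard fact that PL rankings are consistent with Luce's choice axiom on pairs, and I would verify it by a short induction on the ranking position. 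Taking $j=|\cA|$ gives $\phi(u^{(s)}(a^i)/\tau)$. For each $a^i$, \Cref{alg: estimate} then uses the rounds $s$ in the window where both $a^i$ and $a^{|\cA|}$ are proposed. The pair is present with probability at least $p^2$ (I expect the $p^4$ in the hypothesis to be a slack in how the algorithm counts co-occurrences). On those rounds the estimator forms the empirical frequency $\hat q_i$ of $a^i\overset{\sigma}{>}a^{|\cA|}$, clips it to $[\phi(-1/\tau),\phi(1/\tau)]$, and sets $\tilde u^{(t)}(a^i)=\tau\phi^{-1}(\hat q_i)$.

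First I would control the number $N_i$ of usable rounds. By Hoeffding (or multiplicative Chernoff) on the independent indicators, with probability at least $1-\delta/2$ we have $N_i\ge m'p^2/2$, or an analogous bound, and the condition $m'p^4\ge 2\log(2/\delta)$ is exactly what makes this deviation term small enough.

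Second, conditional on the proposal sets, the comparison outcomes are independent Bernoullis with means $q_i^{(s)}=\phi(u^{(s)}(a^i)/\tau)$. Hoeffding plus a union bound over $|\cA|$ actions then gives $\abr{\hat q_i-\bar q_i}\le\sqrt{\log(4|\cA|/\delta)/(2N_i)}$, where $\bar q_i$ is the average of the $q_i^{(s)}$ over the used rounds.

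Third, I would translate this back to utilities. On $[-1/\tau,1/\tau]$ the derivative $\phi'(x)=\phi(x)(1-\phi(x))$ is at least $e^{1/\tau}/(e^{1/\tau}+1)^2\ge 1/(e^{1/\tau}+1)^2$. Hence $\tau\phi^{-1}$ is Lipschitz with constant $\tau(e^{1/\tau}+1)^2$ on the clipped range, and clipping does not increase the error because the true values lie in that range. Combining this with $N_i\gtrsim m'p^2$ yields the first term, up to how the $p$ versus $p^2$ factor is absorbed. I expect this bookkeeping of constants and the power of $p$ to be the fiddliest part. Matching exactly the stated $\frac1p\sqrt{\cdot/m'}$ may require using $\sqrt{N_i}\ge\sqrt{m'}\,p$.

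Finally, for the drift term I would write $\bar q_i=\phi(\bar x/\tau)$ for some $\bar x$ lying between $\min_s u^{(s)}(a^i)$ and $\max_s u^{(s)}(a^i)$ over the window; such an $\bar x$ exists by monotonicity and the intermediate value theorem. This avoids incurring a Lipschitz factor on the drift. Then $\abr{\bar x-u^{(t)}(a^i)}\le\max_s\abr{u^{(s)}(a^i)-u^{(t)}(a^i)}\le\sum_{s=t-m'+1}^{t-1}\nbr{\bu^{(s+1)}-\bu^{(s)}}_\infty$ by the triangle inequality, and the triangle inequality $\abr{\tilde u-u^{(t)}}\le\abr{\tau\phi^{-1}(\hat q_i)-\bar x}+\abr{\bar x-u^{(t)}}$ finishes the proof. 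The main obstacle is this step, since one must avoid paying the $(e^{1/\tau}+1)^2$ factor on the drift term; the intermediate-value argument is what I would try first.
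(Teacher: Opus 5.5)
Your proposal is correct and follows the paper's proof essentially step for step. The shared steps are:
\begin{itemize}
\item Hoeffding on the co-occurrence count with $a^{|\cA|}$. The condition $m'p^4\ge 2\log(2/\delta)$ is exactly what gives $m_1\ge m'p^2/2$, hence $\sqrt{1/(2m_1)}\le 1/(p\sqrt{m'})$.
\item Hoeffding on the averaged per-round comparison statistics, with failure probability $\delta/(2|\cA|)$ per action.
\item The derivative lower bound, which makes $\tau\phi^{-1}$ Lipschitz with constant $\tau(e^{1/\tau}+1)^2$. The projection is handled by monotonicity.
\item The intermediate-value argument, which is the paper's \Cref{lemma: sigmoid in range}. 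It keeps that Lipschitz constant off the drift term.
\end{itemize}

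The one point to tighten is the bandit case, where $o^{(s)}$ is a multiset. There the statistic used by \Cref{alg: estimate} is the pair fraction $n_{j,1}^{(s)}/(n_{j,1}^{(s)}+n_{j,2}^{(s)})$, not a Bernoulli indicator of ``$a^i$ ahead of $a^{|\cA|}$''. A first-occurrence indicator would in fact be biased when the multiplicities differ.

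The fix is short. Apply your distinct-item pairwise marginal to every pair of labeled copies, and use linearity of expectation. This gives conditional mean $\phi(u^{(s)}(a^j)/\tau)$, which is what the paper's multiset \Cref{lemma: k same as 2} establishes by a first-position induction. Hoeffding still applies because the statistic lies in $[0,1]$.
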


Treating $\delta$, $p$, and $\tau$ as constants, the accumulated estimation error   $\sum_{t=1}^T \nbr{\tilde{\bu}^{(t)}-\bu^{(t)}}_\infty$ will be bounded by $\cO\rbr{\frac{T}{\sqrt{m'}} + m' P^{(T)}}$,
which implies that a sublinear accumulated estimation error can be achievable when $P^{(T)}$ is sublinear (\Cref{assumption:sublinear}). Moreover, \Cref{them: u est bound} shows that the upper bound diverges as $\tau \to 0^+$. This aligns with intuition: when $\tau \to 0^+$, the highest-utility action is ranked first almost deterministically, and thus the feedback carries essentially no information about the utility gaps between actions. At the other extreme, when $\tau \to +\infty$, the bound also diverges because the ranking is nearly sampled uniformly and hence largely independent of the underlying utility vector.

The full proof of \Cref{them: u est bound} is deferred to \Cref{proof of u est bound}. Next, we show how to achieve sublinear regret in both full-information and bandit settings with \Cref{item:rank-instant}\kzedit{, based on such an estimator}.

 \subsection{Sublinear Regret with  \Cref{item:rank-instant}}
 \label{section:inst-util-merge}

This section shows that for any online learning algorithm that can achieve sublinear external regret \kzedit{with numeric utility feedback}, we can construct an online learning algorithm with \Cref{item:rank-instant} feedback based on it,  \kzedit{in a black-box way}.

With full-information feedback, the learning agent proposes the full action set $\cA$ at each timestep. In this case, we can obtain $\tilde \bu^{(t)}$, an estimate of $\bu^{(t)}$, by \Cref{alg: estimate}, and obtain guarantees using \Cref{them: u est bound} with $p = 1$, since all actions are proposed at each timestep. 

With bandit feedback, the utility of the learning agent at timestep $t$ is $\frac{1}{K} \sum_{k=1}^K u^{(t)}\rbr{\sigma^{(t)}(k)}$, \emph{i.e.}, the average utility of the proposed actions. To achieve sublinear $R^{(T)}$ (as defined in \Cref{equ:def_bandit_regret}), each proposed action will be sampled from $\pi^{(t)}$ independently \emph{with replacement}. In other words, an action may be proposed multiple times at a single timestep. To ensure sufficient exploration, namely, every action is proposed with positive probability, we impose a uniform lower bound $\pi^{(t)}(a)\geq \frac{\gamma}{|\cA|}$ for some $\gamma>0$ and all $a\in\cA$. We enforce this by updating $\pi^{(t+1)}=(1-\gamma){\rm Alg}\rbr{\rbr{\tilde \bu^{(s)}}_{s=1}^t} + \gamma \frac{\one\rbr{\cA}}{|\cA|}$, \emph{i.e.}, a \kzedit{convex}  combination of the strategy generated by the no-regret learning algorithm ${\rm Alg}$ and a uniform probability distribution over $\cA$. A  diagram of the algorithm can be found in \Cref{fig:instant-alg}, and the details are tabulated in \Cref{alg: Instant Reward}.
Then, we have the following theorem.
\begin{theorem}
\label{theorem: Instant Bound Merge} 
    Consider \Cref{item:rank-instant} with constant $\tau>0$. By running \Cref{alg: Instant Reward}, for any $\delta\in \rbr{0,1}$, $T>0$, and any \kzedit{full-information} no-regret learning algorithm \kzedit{with numeric utility feedback},  ${\rm Alg}$, by choosing the window size $m$ and $\gamma$ properly, we have that with probability at least $1-\delta$, 
    the following hold: 
    {\small\begin{align}
       R^{(T),{\rm external}}\le& R^{(T),{\rm external}}\rbr{{\rm Alg}, \rbr{\tilde\bu^{(t)}}_{t=1}^T} + \cO\rbr{\rbr{P^{(T)}}^{\frac{1}{3}} T^{\frac{2}{3}}\rbr{\log\rbr{\frac{T}{\delta}}}^\frac{1}{3}}\tag{Full-Info} \\
       R^{(T)}\le& R^{(T),{\rm external}}\rbr{{\rm Alg}, \rbr{\tilde\bu^{(t)}}_{t=1}^T} + \cO\rbr{\rbr{P^{(T)}}^{\frac{1}{5}} T^{\frac{4}{5}} \log\rbr{\frac{T}{\delta}}}.\tag{Bandit}
    \end{align}}
\end{theorem}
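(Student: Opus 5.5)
We decompose the true regret into three pieces: the regret of ${\rm Alg}$ on the estimated utilities $\tilde\bu^{(t)}$, an estimation-error term, and (in the bandit case) an exploration and sampling term. For any comparator $\hat\pi$ and strategies $\pi^{(t)}$,
\begin{align*}
\sum_{t=1}^T\inner{\bu^{(t)}}{\hat\pi-\pi^{(t)}}
=\sum_{t=1}^T\inner{\tilde\bu^{(t)}}{\hat\pi-\pi^{(t)}}+\sum_{t=1}^T\inner{\bu^{(t)}-\tilde\bu^{(t)}}{\hat\pi-\pi^{(t)}} .
\end{align*}
Since $\hat\pi-\pi^{(t)}$ has $\ell_1$ norm at most $2$, the second sum is at most $2\sum_t\nbr{\tilde\bu^{(t)}-\bu^{(t)}}_\infty$.

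\textbf{Full-information case.} Here $\pi^{(t)}$ is exactly ${\rm Alg}$ applied to the past estimates, so the first sum is bounded by $R^{(T),{\rm external}}({\rm Alg},(\tilde\bu^{(t)})_t)$. For the first $m$ rounds, before the window is full, we set $\tilde\bu^{(t)}$ arbitrarily (say zero) and pay $\cO(m)$.

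For $t\ge m$ we apply \Cref{them: u est bound} with $p=1$ and $m'=m$, taking a union bound over $T$ timesteps with confidence $\delta/T$. With $\tau$ constant, the per-round error is $\cO\big(\sqrt{\log(|\cA|T/\delta)/m}\big)+\sum_{s=t-m+1}^{t-1}\nbr{\bu^{(s+1)}-\bu^{(s)}}_\infty$. Summing over $t$, each variation increment appears in at most $m$ windows. This gives a total error of
\[
\cO\big(T\sqrt{\log(T/\delta)/m}+mP^{(T)}+m\big).
\]
The condition $mp^4\ge 2\log(2T/\delta)$ holds once $m\gtrsim\log(T/\delta)$. Balancing $T\sqrt{\log/m}$ against $mP^{(T)}$ gives
\[
m=\Theta\big(T^{2/3}\log^{1/3}(T/\delta)(P^{(T)})^{-2/3}\big),
\]
and the total is $\cO\big((P^{(T)})^{1/3}T^{2/3}\log^{1/3}(T/\delta)\big)$. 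If $P^{(T)}$ is unknown we would use a doubling or upper-bound choice, and we would also clip $m$ to $[\log(T/\delta),T]$.

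\textbf{Bandit case.} Now $\pi^{(t)}=(1-\gamma)\,{\rm Alg}(\cdot)+\gamma\,\mathrm{unif}$. Write $\rho^{(t)}$ for the ${\rm Alg}$ output, so that $\inner{\tilde\bu^{(t)}}{\hat\pi-\pi^{(t)}}\le\inner{\tilde\bu^{(t)}}{\hat\pi-\rho^{(t)}}+2\gamma\nbr{\tilde\bu^{(t)}}_\infty$. We project or clip the estimates into $[-1,1]$, which never increases the error because $\bu^{(t)}\in[-1,1]^{\cA}$. The mixing then costs $\cO(\gamma T)$.

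The regret $R^{(T)}$ uses the realized average $\frac1K\sum_{a\in o^{(t)}}u^{(t)}(a)$. Its deviation from $\inner{\bu^{(t)}}{\pi^{(t)}}$ is a bounded martingale difference, so Azuma's inequality contributes $\cO(\sqrt{T\log(1/\delta)})$.

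For estimation, each action is proposed with probability at least $p\ge\gamma/|\cA|$. Here we assume the "proposed with probability at least $p$" hypothesis of \Cref{them: u est bound} is satisfied by sampling with replacement; some care is needed about independence across slots, and this is the main point to verify. Note $p$ enters the bound as $1/p$ and the sample requirement as $p^4$. The total error becomes
\[
\cO\big(\tfrac{T}{\gamma}\sqrt{\log(T/\delta)/m}+mP^{(T)}+m\big),
\]
subject to $m\gamma^4\gtrsim\log(T/\delta)$. The total regret is therefore of order $\gamma T+\frac{T}{\gamma\sqrt m}+mP^{(T)}$.

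Choosing $\gamma\asymp m^{-1/4}$ makes the first two terms both $Tm^{-1/4}$, and this choice also satisfies the sample condition up to logs. Balancing $Tm^{-1/4}$ with $mP^{(T)}$ gives $m\asymp(T/P^{(T)})^{4/5}$ and a total of $\cO\big((P^{(T)})^{1/5}T^{4/5}\log(T/\delta)\big)$, with the logs absorbed. The main obstacle is this bandit parameter balancing, together with checking that the $p^4$ sample-size requirement and the union bound remain compatible with $\gamma\asymp m^{-1/4}$. If they clash, we would take $\gamma$ slightly larger by a log factor, which only affects the logarithmic term.
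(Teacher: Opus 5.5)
Your proposal is correct and follows essentially the same route as the paper. Both split the regret into four parts: the oracle regret on the estimates $\tilde{\mathbf{u}}^{(t)}$; the summed $\ell_\infty$ estimation error from the estimation theorem, with a union bound over $T$ rounds and each variation increment counted in at most $m$ windows; an $\mathcal{O}(\gamma T)$ mixing cost; and an Azuma term. Both also use the same balancing, $m\asymp (T/P^{(T)})^{2/3}$ for full information, and $\gamma\asymp (P^{(T)}/T)^{1/5}$, $m\asymp (T/P^{(T)})^{4/5}$ for bandit feedback. The one difference is that the paper meets the $mp^4$ sample-size condition by putting the $\log(T/\delta)$ factor (and $|\mathcal{A}|^4/K^4$) into $m$ rather than into $\gamma$, which is why the bandit bound carries a full $\log(T/\delta)$. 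Like you, the paper applies the estimation theorem under sampling with replacement without further justifying the independence point you flag.
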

The proof can be found in \Cref{proof of Instant Fullinfo Bound,proof of Instant Bandit Bound}. \Cref{theorem: Instant Bound Merge} implies that when $P^{(T)}$, the variation of utility vectors, is sublinear (cf. \Cref{assumption:sublinear}),  the regret of \Cref{alg: Instant Reward} will be sublinear.

\section{Online Learning with \Cref{item:rank-avg} Feedback}\label{section:avg-learning}

\subsection{Utility Estimation}

Since $\sigma^{(t)}$ is generated based on $\bu_{\rm avg}^{(t)}\coloneqq \frac{1}{t}\sum_{s=1}^t\bu^{(s)}$, we will estimate $\bu^{(t)}_{\rm avg}$ instead. We will still apply \Cref{alg: estimate}, which generates $\tilde \bu^{(t)}_{\rm avg}$, an estimate of $\bu^{(t)}_{\rm avg}$, when the permutation is sampled under  \Cref{item:rank-avg} feedback. Moreover, notice that
{\small\begin{align*}
    \nbr{\bu^{(t)}_{\rm avg}-\bu^{(t-1)}_{\rm avg}}_\infty=&\nbr{\frac{\bu^{(t)}+(t-1)\bu^{(t-1)}_{\rm avg}}{t}-\bu^{(t-1)}_{\rm avg}}_\infty
    \leq  \frac{1}{t}\rbr{\nbr{\bu^{(t)}}_\infty + \nbr{\bu^{(t-1)}_{\rm avg}}_\infty}\leq \frac{2}{t}.
\end{align*}}
Thus, $\sum_{s=t-m'+1}^{t-1} \nbr{\bu_{\rm avg}^{(s+1)}-\bu_{\rm avg}^{(s)}}_\infty$, the counterpart of $\sum_{s=t-m'+1}^{t-1} \nbr{\bu^{(s+1)}-\bu^{(s)}}_\infty$ in \Cref{them: u est bound}, can be bounded by $\sum_{s=t-m'+1}^{t-1} \frac{1}{s+1}$, irrelevant of the accumulated utility variation $P^{(T)}$.

\subsection{Full-Information Setting}

Unlike \Cref{item:rank-instant} feedback, where we can plug the utility estimates into essentially any (full-information) no-regret learning oracle, the \Cref{item:rank-avg} setting requires an update rule that is \emph{stable} with respect to the perturbations in the accumulated (averaged) utilities, such as Follow-The-Regularized-Leader  \citep{hazan2016introduction-online-book}. The reason is that we would like the strategies produced from the estimated averages $\tilde{\bu}^{(1)}_{\rm avg},\dots,\tilde{\bu}^{(t)}_{\rm avg}$ to remain close to those that would have been produced from the ground-truth sequence $\bu^{(1)}_{\rm avg},\dots,\bu^{(t)}_{\rm avg}$. With such stability, the regret of our learner can be controlled by the regret of this ``ideal'' learner, and hence remains sublinear up to the additional error induced by estimation.
\begin{assumption}
    \label{assumption:smooth-black-box}
    The \kzedit{(full-information)} 
    online learning algorithm ${\rm Alg}$ satisfies the following condition:  for any $T>0$, $t\in [T]$, and sequences of utilities $\rbr{\bu^{(s)}}_{s=1}^t,\rbr{{\bu'}^{(s)}}_{s=1}^t\in \rbr{\RR^{\cA}}^t$, we have
    {\small\begin{align*}
        &\nbr{{\rm Alg}\rbr{\rbr{\bu^{(s)}}_{s=1}^t}-{\rm Alg}\rbr{\rbr{{\bu'}^{(s)}}_{s=1}^t}}\leq L \nbr{\sum_{s=1}^t \bu^{(s)} - \sum_{s=1}^t{\bu'}^{(s)}},
    \end{align*}}
    where $L=\Theta\rbr{T^{-c}}$ for some constant $c\in (0,1)$.
\end{assumption}

It can be verified that FTRL with any strongly convex regularizer satisfies this assumption (cf. \Cref{lemma:FTRL-smoothness}). Then, similar to \Cref{section:inst-util-merge}, any online learning algorithm satisfying \Cref{assumption:smooth-black-box} can achieve a sublinear regret when equipped with the utility estimator in \Cref{alg: estimate}. The overall procedure is summarized in \Cref{alg: FTRL} and \Cref{fig:avg-alg}, with  the following guarantee.

\begin{theorem}\label{FTRL bound}
    Consider \Cref{item:rank-avg} with constant $\tau>0$ and full-information feedback. By running \Cref{alg: FTRL},  for any $\delta\in \rbr{0,1}$, $T>0$, and any \kzedit{full-information} no-regret learning algorithm \kzedit{with numeric utility feedback},  ${\rm Alg}$, that   satisfies  \Cref{assumption:smooth-black-box}, by choosing $m$ properly, we have that with probability at least $1-\delta$, $R^{(T),{\rm external}}$ satisfies 
    \begin{align*}
        R^{(T),{\rm external}}\leq& R^{(T),{\rm external}}\rbr{{\rm Alg}, \rbr{\bu^{(t)}}_{t=1}^T} + \cO\rbr{LT^{\frac{5}{3}}\log \rbr{\frac{T}{\delta}}}.
    \end{align*}
\end{theorem}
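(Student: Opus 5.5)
Everything happens on a high-probability event, after which a deterministic stability argument takes over. \Cref{alg: FTRL} proposes all of $\cA$ at each step, so $p=1$. It runs \Cref{alg: estimate} on the last $m$ permutations to get $\tilde\bu^{(t)}_{\rm avg}$, recovers per-step utility surrogates via $t\tilde\bu^{(t)}_{\rm avg}$, and feeds ${\rm Alg}$ a sequence whose partial sums equal $t\tilde\bu^{(t)}_{\rm avg}$. Concretely, $\pi^{(t+1)}={\rm Alg}$ on a sequence with $\sum_{s\le t}\hat\bu^{(s)}=t\tilde\bu^{(t)}_{\rm avg}$. The first $m$ steps can be played arbitrarily, costing $\cO(m)$ regret.

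\textbf{Step 1 (estimation).} I apply \Cref{them: u est bound} to the average-utility sequence with $p=1$, constant $\tau$, and confidence $\delta/T$, and union-bound over $t\in[T]$. Using $\nbr{\bu^{(s+1)}_{\rm avg}-\bu^{(s)}_{\rm avg}}_\infty\le 2/(s+1)$ for the drift term, with probability at least $1-\delta$, for all $t\ge m$,
\[
\nbr{\tilde\bu^{(t)}_{\rm avg}-\bu^{(t)}_{\rm avg}}_\infty\le \cO\rbr{\sqrt{\log(T|\cA|/\delta)/m}}+\cO\rbr{m/(t-m+1)}.
\]
Multiplying by $t$ bounds the error in cumulative sums:
\[
\nbr{t\tilde\bu^{(t)}_{\rm avg}-\textstyle\sum_{s\le t}\bu^{(s)}}\le \cO\rbr{t\sqrt{\log(T/\delta)/m}+m}
\]
for $t\ge 2m$. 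The range $t<2m$ contributes at most $\cO(m)$ regret directly.

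\textbf{Step 2 (stability transfer).} Because ${\rm Alg}$ satisfies \Cref{assumption:smooth-black-box}, its output depends on the input only through cumulative sums, up to an $L$-Lipschitz error. Hence
\[
\nbr{\pi^{(t+1)}-{\rm Alg}\rbr{(\bu^{(s)})_{s\le t}}}\le L\cdot\cO\rbr{t\sqrt{\log(T/\delta)/m}+m}.
\]
I then decompose the regret as
\[
R^{(T),{\rm external}}= R^{(T),{\rm external}}\rbr{{\rm Alg},(\bu^{(t)})}+\sum_t\inner{\bu^{(t)}}{{\rm Alg}((\bu^{(s)})_{s<t})-\pi^{(t)}}.
\]
Since $\nbr{\bu^{(t)}}_\infty\le 1$, the second term is at most $\sum_t\nbr{\pi^{(t)}-{\rm Alg}(\cdot)}_1$. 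Summing over $t$, and absorbing the norm-equivalence constant in $|\cA|$, gives
\[
\cO\rbr{LT^2\sqrt{\log(T/\delta)/m}+LTm+m}.
\]

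\textbf{Step 3 (tuning).} Balancing $LT^2 m^{-1/2}$ against $LTm$ gives $m=T^{2/3}$, with per-term size $LT^{5/3}$, up to the log factor. The leftover $\cO(m)=\cO(T^{2/3})$ from the warm-up phase should also be absorbed. Since $L=\Theta(T^{-c})$ with $c<1$, $LT^{5/3}\ge T^{2/3}$, so it is. The log factor appears as $\sqrt{\log}$, which is bounded by the stated $\log(T/\delta)$.

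\textbf{Main obstacle.} I expect the hardest part to be correctly handling the interplay between the estimator window and the growing multiplication by $t$. The error in the cumulative sum scales linearly in $t$, and the Lipschitz constant must tame it. Getting the stated $T^{5/3}$ exponent requires the drift term $\sum 2/(s+1)$ over the window to contribute only $\cO(m/t)$, which becomes $\cO(m)$ after multiplying by $t$. I also need the union bound, with the $\log T$ from $\delta/T$, and the condition $m\ge 2\log(2T/\delta)$ to hold for the chosen $m$.
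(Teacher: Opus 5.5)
Your proposal is correct and follows essentially the same route as the paper. You compare $\pi^{(t)}$ with the ideal iterate ${\rm Alg}((\bu^{(s)})_{s<t})$, control the gap through \Cref{assumption:smooth-black-box} applied to $t\,\|\tilde\bu^{(t)}_{\rm avg}-\bu^{(t)}_{\rm avg}\|$ (using \Cref{them: u est bound} with $p=1$ plus the $2/(s+1)$ drift), and balance $LT^2/\sqrt{m}$ against $LTm$.

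The one detail to fix is the tuning. The paper takes $m=2T^{2/3}\log(4|\cA|T/\delta)$ rather than $m=T^{2/3}$. This guarantees the sample-size condition of \Cref{them: u est bound} that you flagged, which plain $T^{2/3}$ can violate when $\delta$ is small, and it produces exactly the stated $LT^{5/3}\log(T/\delta)$ term.
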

\Cref{FTRL bound} implies that if we choose the stability parameter $L=\Theta\rbr{T^{-c}}$ with some $c>2/3$, then the external regret $R^{(T),{\rm external}}$ becomes sublinear in $T$. This choice of $L$ can be realized by instantiating ${\rm Alg}$ as FTRL with suitable parameters (see \Cref{lemma:FTRL-smoothness}). Hence, the choice of $L$ will also affect $R^{(T),{\rm external}}\rbr{{\rm Alg}, \rbr{\bu^{(t)}}_{t=1}^T}$. The formal statement of \Cref{FTRL bound} and its proof can be found in \Cref{proof of FTRL bound}.

\subsection{Bandit Setting}

By applying \Cref{alg: estimate}, we can only obtain an estimate of $\bu^{(t)}_{\rm empirical}$ (cf. \Cref{equ:def_u_empirical}) instead of $\bu^{(t)}_{\rm avg}$, since it is the former that is used for ranking. However, almost all no-regret learning algorithms made decisions according to the accumulated utility, such as mirror descent \citep{hazan2016introduction-online-book}, FTRL, and regret matching \citep{DBLP:conf/nips/ZinkevichJBP07-CFR}. Let $n^{(t)}(a)\coloneqq \sum_{s=1}^t \texttt{\#}_{o^{(s)}}\rbr{a}$ for any $a\in\cA$ be the number of times action $a$ has been proposed up to timestep $t$, where $\texttt{\#}_{o^{(s)}}\rbr{a}$ is the number of occurrences of $a$ in the proposed choices $o^{(s)}$ at timestep $s$. A natural idea is to compute $n^{(t)}(a) u^{(t)}_{\rm empirical}(a)-n^{(t-1)}(a) u^{(t-1)}_{\rm empirical}(a)$ to get an estimate of $u^{(t)}(a)$. Nonetheless, the variance of this estimator will be too large due to the multiplication of $n^{(t)}(a)\propto t$.

To address this issue, we divide the timesteps $\{1,2,\dots,t\}$ into $\ceil{t/M}$ blocks, with each block containing $M$ timesteps except for the last one. Then, for each block $\cbr{s\cdot M+1,s\cdot M+2,\dots,(s+1) M}$ (for $s\leq \floor{\frac{t}{M}}-1$) and $a\in\cA$, we estimate $\frac{1}{M}\sum_{s'=s\cdot M+1}^{(s+1) M} u^{(s')}(a)$ by computing 
{\small\[
    \frac{\tilde u^{((s+1)\cdot M)}_{\rm empirical}(a) n^{((s+1) \cdot M)}(a) - \tilde u^{(s\cdot M)}_{\rm empirical}(a) n^{(s\cdot M)}(a)}{n^{((s+1) \cdot M)}(a)-n^{(s\cdot M)}(a)}.
\]}
In this way, the coefficient multiplying $\tilde u^{\rbr{(s+1)\cdot M}}_{\rm empirical}(a)$ is now $\frac{n^{\rbr{(s+1)\cdot M}}(a)}{n^{\rbr{(s+1)\cdot M}}(a)-n^{\rbr{s\cdot M}}(a)}\propto \frac{t}{M}$. The choice of $M$ therefore induces a bias-variance trade-off: increasing $M$ reduces the variance, but the resulting quantity estimates the average utility of action $a$ \emph{conditional on $a$ being selected} in that block, which can differ substantially from the block-average utility of $a$. This discrepancy becomes more pronounced for large $M$, since more utility variation can accumulate within a longer block. The full algorithm is introduced in \Cref{alg: FTRL} and \Cref{fig:avg-alg}.

\begin{theorem}
\label{theorem:avg-ranking-bandit}
Consider \Cref{item:rank-avg} with constant $\tau>0$ and bandit feedback. By running  \Cref{alg: FTRL}, for any $\delta\in \rbr{0,1}$, $T>0$, and any \kzedit{full-information} no-regret learning algorithm \kzedit{with numeric utility feedback},  ${\rm Alg}$,  that satisfies \Cref{assumption:smooth-black-box}, by choosing $m, \gamma$, and $ M$ properly, when $P^{(T)}\leq \cO\rbr{T^q}$ for some $q<\frac{1}{3}$ and $L=\Theta\rbr{T^{-c}}$ with $c\in \rbr{\frac{5}{6}+\frac{q}{2},\,1}$, with probability at least $1-\delta$, $R^{(T)}$ satisfies 
    {\small\begin{align*}
        R^{(T)}\leq& R^{(T),{\rm external}}\rbr{{\rm Alg}, \rbr{\bu^{(t)}}_{t=1}^T} + \tilde\cO\rbr{\rbr{\log \rbr{\frac{1}{\delta}}}^2 L^{\frac{1}{3}} T^{\frac{23}{18}}\rbr{P^{(T)}}^{\frac{1}{6}}},
    \end{align*}}
    where $\tilde\cO$ hides logarithmic dependence on $T$.
\end{theorem}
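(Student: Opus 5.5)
The proof decomposes the learner's regret into the regret of an ideal learner, a stability term, and a sampling term, and then balances the parameters $m,\gamma,M$.

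\textbf{Regret decomposition.} Let $\bar\pi^{(t)}={\rm Alg}\rbr{\rbr{\bu^{(s)}}_{s=1}^{t-1}}$ be the strategy of the ideal learner that sees the true utilities. Let $\pi^{(t)}=(1-\gamma)\,{\rm Alg}\rbr{\rbr{\hat\bu^{(s)}}}+\gamma\frac{\one\rbr{\cA}}{|\cA|}$ be our strategy, where the $\hat\bu^{(s)}$ are block estimates that are constant within each block of length $M$. We write
\begin{align*}
R^{(T)} \le R^{(T),{\rm external}}\rbr{{\rm Alg},\rbr{\bu^{(t)}}_{t=1}^T} + 2\sum_{t=1}^T\nbr{\pi^{(t)}-\bar\pi^{(t)}}_1 + \sum_{t=1}^T\rbr{\inner{\bu^{(t)}}{\pi^{(t)}}-\tfrac1K\textstyle\sum_{a\in o^{(t)}}u^{(t)}(a)}.
\end{align*}
The last term is a martingale sum, so Azuma gives $\cO(\sqrt{T\log(1/\delta)})$. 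The middle term is at most $2\gamma T$ plus the stability term. By \Cref{assumption:smooth-black-box}, the stability term is at most $L\sum_t\nbr{\sum_{s<t}\hat\bu^{(s)}-\sum_{s<t}\bu^{(s)}}$. Everything therefore reduces to bounding the cumulative-sum estimation error $E_t$ uniformly over $t$, which yields a contribution of $LT\max_t E_t$.

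\textbf{Estimating cumulative sums.} At a block endpoint $jM$, the algorithm recovers $\tilde u^{(jM)}_{\rm empirical}(a)n^{(jM)}(a)$. Summing the block estimates telescopes, so the estimated cumulative sum up to block $j$ equals $M\sum_i \tilde u^{(iM)}_{\rm empirical}(a)\tfrac{n^{(iM)}-n^{((i-1)M)}}{\ldots}$, with the correct normalization supplied by the block differences. The error splits into three pieces.

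\begin{enumerate}
\item \emph{Ranking-estimation error.} Apply \Cref{them: u est bound} to the empirical averages with $p\ge\gamma/|\cA|$ per proposal, and union bound over all $t$. The drift term in that theorem is controlled because $\nbr{\bu^{(s)}_{\rm empirical}-\bu^{(s-1)}_{\rm empirical}}_\infty\lesssim K/(n^{(s)}(a))\lesssim |\cA|/(\gamma s)$, so the drift over a window of size $m$ is about $m/(\gamma t)$. The per-endpoint error is then $\epsilon\approx \gamma^{-1}\sqrt{\log(T/\delta)/m}+m/(\gamma t)$. It is multiplied by the coefficient $n^{(jM)}/(n^{(jM)}-n^{((j-1)M)})\approx t/M$ in each block. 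Summed over $t/M$ blocks, this gives an error of about $\frac{t}{M}\cdot\frac tM\cdot M\epsilon$ in the cumulative sum, possibly improved by telescoping.
\item \emph{Conditional-selection bias.} The estimate targets the utility averaged over the times $a$ was proposed within a block, not the uniform block average. Since $\pi^{(t)}$ depends on the history, I bound this bias by $M$ times the within-block variation. Summed over blocks, it is at most $M P^{(T)}$, up to a further martingale fluctuation from the randomness of the counts, which is $\tilde\cO(\sqrt{MT/\gamma})$.
\item \emph{Endpoint effects.} The last partial block contributes $\cO(M)$.
\end{enumerate}

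\textbf{Tuning.} Combining the pieces gives
\[
R^{(T)}\le R^{(T),{\rm external}}+\gamma T+LT\cdot\tilde\cO\!\rbr{\frac{T^2}{M\gamma\sqrt m}+\frac{mT}{M\gamma}+MP^{(T)}+\sqrt{MT/\gamma}}.
\]
I then optimize $m,\gamma,M$. Balancing $m$ first gives $m\approx T^{2/3}$. Next, $M$ balances the variance and bias terms, which brings in $(P^{(T)})^{1/6}$ and $L^{1/3}$. Finally, $\gamma$ is chosen to match $\gamma T$ against the $L$-weighted terms. The stated exponent $T^{23/18}$ should emerge from this, and the constraints $q<1/3$ and $c>\frac56+\frac q2$ are exactly what make the total $o(T)$ and keep $M\le T$ and $\gamma\le1$.

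\textbf{Main obstacle.} I expect the main obstacle to be step 1: controlling how the amplified coefficient $t/M$ interacts with the uniform-in-$t$ high-probability estimation error without losing extra powers of $T$. Getting this right requires the endpoint estimates $\tilde u^{(jM)}_{\rm empirical}$ to be accurate simultaneously for all $j$, and requires tracking the random counts $n^{(t)}(a)$ via concentration, using $n^{(t)}(a)\gtrsim \gamma Kt/|\cA|$ with high probability.
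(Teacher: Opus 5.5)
Your architecture is the paper's: you compare with the ideal learner $\bar\pi^{(t)}$, pay $O(\gamma T)$ for the uniform mixing and $O(\sqrt{T\log(1/\delta)})$ for sampling, use \Cref{assumption:smooth-black-box} to reduce everything to the gap between the fed cumulative sum $t\,\tilde\bu^{(t)}_{\rm avg-est}$ and $\sum_{s\le t}\bu^{(s)}$, and split that gap into amplified ranking-estimation error, within-block selection bias, and an $O(M)$ endpoint term. The bound you end with, however, is not the right one, and the claimed rate does not follow from it.

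\textbf{Selection bias.} This term has no martingale component. Take a block $B$ in which $a$ is proposed, and let $k_s$ be the multiplicity of $a$ in $o^{(s)}$. Both $\sum_{s\in B}k_s u^{(s)}(a)/\sum_{s\in B}k_s$ and $\frac1M\sum_{s\in B}u^{(s)}(a)$ are convex combinations of $\{u^{(s)}(a)\}_{s\in B}$. So, whatever the random, history-dependent weights are, their difference is deterministically at most the within-block variation. This is how the paper gets the $MP^{(T)}$ term. Dropping your extra $LT\sqrt{MT/\gamma}$ is not cosmetic. At the parameters that give the stated rate it is of order $L^{5/6}T^{16/9}(P^{(T)})^{-1/3}$, i.e.\ $(LT/P^{(T)})^{1/2}$ times the claimed bound. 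For bounded $P^{(T)}$ and $c<\frac{14}{15}$ it is not even sublinear.

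\textbf{Missing factor of $1/\gamma$ in step 1.} The amplification factor $n^{(jM)}(a)/\bigl(n^{(jM)}(a)-n^{((j-1)M)}(a)\bigr)$ is not $\approx j$. Action $a$ may have been proposed at nearly every earlier step but only at rate $\gamma/|\cA|$ inside block $j$. In the worst case the factor is therefore of order $j|\cA|/\gamma$; this is the paper's $2C_\delta s$, obtained from one visit every $C_\delta\asymp|\cA|\log(\cdot)/\gamma$ steps. Combined with the $1/\gamma$ already in the endpoint error, the estimation terms are $\tilde\cO\bigl(\frac{LT^3}{\gamma^2M\sqrt m}+\frac{LmT^2}{\gamma^2M}\bigr)$. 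The second term carries one power of $T$ less because the drift part $m/(\gamma t)$ of the endpoint error cancels the $t$ in the amplification factor. The paper needs $M\ge 2m$ to handle the left endpoint $(j-1)M$ here.

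\textbf{Tuning.} The $\gamma^{-2}$ is what produces the statement:
\begin{itemize}
\item $m\asymp T^{2/3}$ equalizes the two estimation terms.
\item $M\asymp T^{5/6}(P^{(T)})^{-1/2}$ matches them, at $\gamma=1$, with $LTMP^{(T)}$. This leaves $LT^{11/6}(P^{(T)})^{1/2}\gamma^{-2}$.
\item $\gamma\asymp L^{1/3}T^{5/18}(P^{(T)})^{1/6}$ balances this against $\gamma T$, giving $L^{1/3}T^{23/18}(P^{(T)})^{1/6}$.
\end{itemize}
Then $c>\frac56+\frac q2$ is what makes $\gamma\le 1$ and the bound $o(T)$. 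The condition $q<\frac13$ makes the range of $c$ nonempty and ensures $M\ge 2m$. With your $\gamma^{-1}$ the same balancing yields different exponents, so ``$T^{23/18}$ should emerge'' does not follow from your display.

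Finally, the block estimates do not telescope, since each is normalized by its own block count. The improvement you hope for in step 1 is therefore not available.
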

In \Cref{theorem:avg-ranking-bandit}, we require $c<1$ because, for many algorithms, one typically has an external-regret bound of the form $R^{(T),\mathrm{external}}\rbr{\mathrm{Alg}, \rbr{\bu^{(t)}}_{t=1}^T}\leq \cO\rbr{\frac{1}{L}+LT}$. See, \emph{e.g.}, FTRL with any strongly convex regularizer as an example  (cf. \Cref{lemma:FTRL-smoothness} and \citet{hazan2016introduction-online-book}). In particular, taking $c=1$ would make the $\frac{1}{L}$ term linear in $T$.

\section{Equilibrium Computation with Ranking Feedback}
\label{section:game}

For a normal-form game {\small $\rbr{N, \cbr{\cA_i}_{i=1}^N, \cbr{\cU_i}_{i=1}^N}$}, the external regret of player $i\in [N]$ is defined as $R^{(T), {\rm external}}_i\coloneqq \max_{\hat\pi_i\in\Delta^{\cA_i}}\sum_{t=1}^T \inner{\bu^{(t)}_i}{\hat\pi_i-\pi^{(t)}_i}$, where $\pi^{(t)}_i\in\Delta^{\cA_i}$ is the strategy of player $i$ at timestep $t$ and %
$\bu^{(t)}_i(a_i)=\sum_{\ba'\in\bigtimes_{j=1}^N \cA_j} \cU_i(\ba')\ind\rbr{a_i'=a_i} \allowbreak\prod_{j'\not= i} \pi^{(t)}_{j'}(a_{j'}')$ for any $a_i\in\cA_i$. Then, \kzedit{it is known} that the time-average joint strategy $\pi^{(T)}_{\rm avg}$, where $\pi^{(T)}_{\rm avg}(\ba)\coloneqq \frac{1}{T}\sum_{t=1}^T \prod_{i\in [N]}\pi_i^{(t)}(a_i)$ for any $\ba\in\bigtimes_{i=1}^N \cA_i$, is an \ref{eq:def-CCE}, with
    $\epsilon := \max_{i\in [N]}\cbr{\frac{1}{T} R^{(T), {\rm external}}_i}.$ 

Applying the algorithm in \Cref{section:instant-learning} (for \Cref{item:rank-instant} feedback) or \Cref{section:avg-learning} (for \Cref{item:rank-avg} feedback), we achieve sublinear $R^{(T), {\rm external}}_i$ for each player $i\in [N]$. Note that $P^{(T)}$ in \Cref{assumption:sublinear} can be bounded by the summation of all players' strategy variation (see \Cref{lemma:variation-of-player}). Thus, to ensure $P^{(T)}$ is sublinear in $T$, ${\rm Alg}$ needs to additionally satisfy the following assumption.

\begin{assumption}[Sublinear variation of strategies]
\label{assumption:sublinear-variation-strategy}
The \kzedit{(full-information)}  
    online learning algorithm ${\rm Alg}$ needs to satisfy the following condition: for any $T>0$, $t\in [T-1]$, and sequence of utility vectors $\rbr{\bu^{(s)}}_{s=1}^{t+1}\in \rbr{[-1, 1]^{\cA}}^{t+1}$, we have $\nbr{{\rm Alg}\rbr{\rbr{\bu^{(s)}}_{s=1}^t} - {\rm Alg}\rbr{\rbr{\bu^{(s)}}_{s=1}^{t+1}}}\leq \eta,$ 
    where $\eta=\Theta\rbr{T^{-w}}$ for some constant $w\in (0,1)$.
\end{assumption}

Mirror descent (cf.  \citet[Lemma 1]{DBLP:conf/iclr/WeiLZL21-haipeng-normal-form-game} and \citet[Lemma C.5]{DBLP:conf/iclr/LiuOYZ23-power-reg}) and FTRL with any strongly convex regularizer  both satisfy this property, see \Cref{lemma:FTRL-smoothness} for the proof. When \Cref{assumption:sublinear-variation-strategy} is satisfied, one can achieve sublinear regret with \Cref{item:rank-instant}, under both full-information and bandit feedback. The formal statement is as follows.

\begin{theorem}
\label{theorem:instant-ranking-game}
Consider \Cref{item:rank-instant} with constant $\tau>0$ and \Cref{alg: Instant Reward}. For any $\delta\in \rbr{0,1}$, $T>0$, and any \kzedit{full-information} no-regret learning algorithm \kzedit{with numeric utility feedback}, ${\rm Alg}$, that satisfies \Cref{assumption:sublinear-variation-strategy}, by choosing $M,m,\gamma$ according to \Cref{theorem: Instant Bound Merge}, we have that with probability at least $1-\delta$, the algorithm finds an $\epsilon$-CCE, with
\small
    \begin{align}
        \epsilon\leq& \max_{i\in [N]}\cbr{\frac{1}{T} R^{(T),{\rm external}}_i\rbr{{\rm Alg}, \rbr{\tilde\bu_i^{(t)}}_{t=1}^T}} + \cO\rbr{\eta^{\frac{1}{3}}\rbr{\log\rbr{\frac{T}{\delta}}}^\frac{1}{3}} \tag{Full-Information}\\
        \epsilon\leq& \max_{i\in [N]}\cbr{\frac{1}{T} R^{(T),{\rm external}}_i\rbr{{\rm Alg}, \rbr{\tilde\bu_i^{(t)}}_{t=1}^T}} + \cO\rbr{\eta^{\frac{1}{5}} \log\rbr{\frac{T}{\delta}}}. \tag{Bandit}
    \end{align}
\end{theorem}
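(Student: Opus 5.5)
The plan is to reduce the game-theoretic statement to the single-agent guarantee of \Cref{theorem: Instant Bound Merge}, applied separately to each player $i\in[N]$. From player $i$'s viewpoint, the sequence $\rbr{\bu_i^{(t)}}_{t=1}^T$ is just an arbitrary utility sequence. Player $i$ receives \Cref{item:rank-instant} feedback generated by it and runs \Cref{alg: Instant Reward}. So, with probability at least $1-\delta/N$ (then a union bound over players), we get
\[
R_i^{(T),{\rm external}}\le R_i^{(T),{\rm external}}\rbr{{\rm Alg},\rbr{\tilde\bu_i^{(t)}}_{t=1}^T}+\cO\rbr{\rbr{P_i^{(T)}}^{\frac{1}{3}}T^{\frac{2}{3}}\rbr{\log\rbr{\frac{NT}{\delta}}}^{\frac13}}.
\]
The bandit case is analogous, with the $T^{4/5}\rbr{P_i^{(T)}}^{1/5}$ term. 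The $\log N$ factor is absorbed since $N$ is a constant. Dividing by $T$ and using the folklore regret-to-CCE reduction stated just before \Cref{assumption:sublinear-variation-strategy}, it remains to show $P_i^{(T)}\le \cO(T\eta)$ for every $i$. Then $\rbr{P_i^{(T)}}^{1/3}T^{2/3}/T\le \cO(\eta^{1/3})$ and $\rbr{P_i^{(T)}}^{1/5}T^{4/5}/T\le\cO(\eta^{1/5})$, which give exactly the two claimed bounds.

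To bound $P_i^{(T)}$, I would use \Cref{lemma:variation-of-player}. Since $\bu_i^{(t)}$ is multilinear in the opponents' strategies with payoffs bounded by $1$, a telescoping (hybrid) argument gives
\[
\nbr{\bu_i^{(t)}-\bu_i^{(t-1)}}\le \cO\rbr{\sum_{j\neq i}\nbr{\pi_j^{(t)}-\pi_j^{(t-1)}}_1}.
\]
Next I must control the per-step movement of each player's played strategy. In the full-information case, $\pi_j^{(t)}={\rm Alg}\rbr{\rbr{\tilde\bu_j^{(s)}}_{s=1}^{t-1}}$, and consecutive outputs differ by at most $\eta$ by \Cref{assumption:sublinear-variation-strategy}. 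In the bandit case, $\pi_j^{(t)}=(1-\gamma){\rm Alg}(\cdot)+\gamma\one/|\cA_j|$, and the mixing only shrinks differences, so the bound $\eta$ still holds. Summing over $t$ and $j$ gives $P_i^{(T)}\le \cO(N T\eta)$.

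The main obstacle is that \Cref{assumption:sublinear-variation-strategy} is stated for inputs in $[-1,1]^{\cA}$, but the algorithm feeds ${\rm Alg}$ the estimates $\tilde\bu_j^{(s)}$ produced by \Cref{alg: estimate}, which need not lie in that box. I would first check whether the estimator's output is clipped to $[-1,1]$; since the true utilities lie there, clipping is harmless and only reduces error. If it is not clipped, I would clip it explicitly in the analysis, noting that this does not increase the error bound of \Cref{them: u est bound}.

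A secondary subtlety is that the parameter choices in \Cref{theorem: Instant Bound Merge} depend on $P^{(T)}$, which is unknown and here depends on the dynamics themselves. I would resolve this by choosing $m$ and $\gamma$ using the a priori upper bound $\cO(T\eta)$ in place of $P^{(T)}$, since the theorem's bound is monotone in that quantity. I would also verify that the high-probability argument of \Cref{theorem: Instant Bound Merge} does not require the utility sequence to be fixed in advance. It may instead be adapted to the ranking randomness, because the opponents' strategies depend on their own past rankings. To handle this, I would condition on the filtration and use a martingale version of the concentration step in \Cref{them: u est bound}.
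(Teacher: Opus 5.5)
Your full-information argument is the paper's proof. You apply \Cref{theorem: Instant Bound Merge} to each player, bound $P_i^{(T)}=\cO(T\eta)$ by combining \Cref{lemma:variation-of-player} with the bound $\sum_t\nbr{\pi_j^{(t+1)}-\pi_j^{(t)}}\le\eta T$ of \Cref{lemma:sublinear-p-t}, and finish with the regret-to-CCE reduction. Your ``main obstacle'' is already settled by the projection onto $[-1,1]$ in \Cref{alg: estimate}, which is exactly what the paper invokes. Two of your additions are sound refinements of steps the paper leaves implicit: tuning $m,\gamma$ with the a priori bound $\cO(T\eta)$ in place of $P^{(T)}$, and replacing Hoeffding by Azuma--Hoeffding because opponents' strategies depend on past rankings.

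The bandit half, however, rests on a step that does not hold in the paper's bandit game model; the paper's proof makes the same identification without comment. You assert that player $i$ receives \Cref{item:rank-instant} feedback generated by $\bu_i^{(t)}$. In the bandit game setting, the ranking is instead Plackett--Luce on the sampled vector $\hat\bu_i^{(t)}$, built from the opponents' proposals $o_{-i}^{(t)}$.

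\begin{itemize}
\item The pairwise frequencies averaged in \Cref{alg: estimate} concentrate around $\EE_{o_{-i}^{(t)}}\sbr{\sig\rbr{(\hat u_i^{(t)}(a)-\hat u_i^{(t)}(b))/\tau}}$. Because $\sig$ is nonlinear, this is not $\sig\rbr{(u_i^{(t)}(a)-u_i^{(t)}(b))/\tau}$, and unbiasedness of $\hat\bu_i^{(t)}$ does not help.
\item This is the same non-identifiability that drives \Cref{lemma:rank-instant-hard}. The resulting bias in $\tilde\bu_i^{(t)}$ can be of constant size and can even reverse the sign of the utility gap. It depends on the shape of the opponents' mixed strategy, not on $\eta$, so it does not shrink as $\eta\to 0$.
\item You also cannot apply \Cref{theorem: Instant Bound Merge} to the sequence $\hat\bu_i^{(t)}$ itself, since its variation is generally $\Theta(T)$.
\end{itemize}

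So the $\cO(\eta^{1/5}\log(T/\delta))$ bandit bound is not established by this reduction. It would need a different feedback model (rankings generated from $\bu_i^{(t)}$, as in the full-information case) or a genuinely different argument.

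A smaller point: \Cref{theorem: Instant Bound Merge} (Bandit) bounds $R^{(T)}$, while the CCE needs $R_i^{(T),{\rm external}}$. Dropping the $R^{(T)}-R^{(T),{\rm external}}$ term from its proof (cf.\ \Cref{lemma: regret bound 1}) gives the latter, as the paper's footnote notes.
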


With  \Cref{item:rank-avg} feedback, when all the players apply \Cref{alg: FTRL} and both \Cref{assumption:smooth-black-box} and \Cref{assumption:sublinear-variation-strategy} are satisfied by the oracle ${\rm Alg}$ being used, the external regret of each player will be sublinear in $T$ according to \Cref{FTRL bound}. Finally, we have the statement below.
\begin{theorem}
\label{theorem:avg-ranking-game}
Consider \Cref{item:rank-avg} with constant $\tau>0$ and \Cref{alg: FTRL}. For any $\delta\in \rbr{0,1}$, $T>0$, and any \kzedit{full-information} no-regret learning algorithm \kzedit{with numeric utility feedback},  ${\rm Alg}$, that satisfies \Cref{assumption:smooth-black-box}, by choosing $M,m,\gamma$ according to \Cref{FTRL bound},  
we have that with probability at least $1-\delta$, the algorithm finds an $\epsilon$-CCE under full-information feedback, with
    {\small\begin{align}
        \epsilon\leq& \max_{i\in [N]}\cbr{\frac{1}{T} R^{(T),{\rm external}}_i\rbr{{\rm Alg}, \rbr{\bu_i^{(t)}}_{t=1}^T}} + \cO\rbr{LT^{\frac{5}{3}}\log \rbr{\frac{T}{\delta}}}. \tag{Full-Information}
    \end{align}}
    When $M$, $m$, and $\gamma$ are chosen as in \Cref{theorem:avg-ranking-bandit}, and both \Cref{assumption:smooth-black-box} and \Cref{assumption:sublinear-variation-strategy} hold, we have that with probability at least $1-\delta$, 
    the algorithm finds an $\epsilon$-CCE under bandit feedback, with
    {\small\begin{align}
        \epsilon\leq& \max_{i\in [N]}\cbr{\frac{1}{T} R^{(T),{\rm external}}_i\rbr{{\rm Alg}, \rbr{\bu_i^{(t)}}_{t=1}^T}} + \tilde\cO\rbr{\rbr{\log \rbr{\frac{1}{\delta}}}^2 \rbr{L^{\frac{1}{3}} \eta^{\frac{1}{6}} + L^{\frac{1}{2}}} T^{\frac{4}{9}}}. \tag{Bandit}
    \end{align}}
\end{theorem}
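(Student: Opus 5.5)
The plan is to reduce both parts to the single-agent guarantees already established. We use the standard fact recalled at the start of \Cref{section:game}: the time-average joint strategy $\pi^{(T)}_{\rm avg}$ is an $\epsilon$-CCE with $\epsilon=\max_{i}\frac{1}{T}R^{(T),{\rm external}}_i$. So it suffices to bound each player's external regret with probability at least $1-\delta/N$ and take a union bound over the $N$ players. Since $N$ is a constant, replacing $\delta$ by $\delta/N$ only changes constants inside the logarithms.

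\textbf{Full-information part.} Fix player $i$. From player $i$'s perspective, the sequence $\bu_i^{(t)}$ is an arbitrary (adaptively generated) utility sequence in $[-1,1]^{\cA_i}$. Its entries are bounded because $\cU_i$ is bounded, after the harmless normalization and the shift that makes the last action's utility zero.
\begin{itemize}
\item \Cref{FTRL bound} requires no variation assumption. Applying it to player $i$ gives $R_i^{(T),{\rm external}}\le R_i^{(T),{\rm external}}({\rm Alg},(\bu_i^{(t)})_t)+\cO(LT^{5/3}\log(T/\delta))$.
\item Dividing by $T$ and maximizing over $i$ gives the claimed bound. The displayed additive term is then read as already normalized (or equivalently the stated form with the $1/T$ absorbed); we keep the statement's form.
\item One check is needed: the proof of \Cref{FTRL bound} must hold against adaptive adversaries. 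The utilities depend on the other players' randomized strategies, which depend on past rankings. The concentration in \Cref{them: u est bound} is applied conditionally on the history, with a martingale (Azuma) version of the Hoeffding step, so adaptivity is fine.
\end{itemize}

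\textbf{Bandit part.} We invoke \Cref{theorem:avg-ranking-bandit}, which needs $P_i^{(T)}=\sum_t\|\bu_i^{(t)}-\bu_i^{(t-1)}\|$ to be sublinear.
\begin{itemize}
\item By \Cref{lemma:variation-of-player}, $P_i^{(T)}\le \cO\big(\sum_{j\ne i}\sum_t\|\pi_j^{(t)}-\pi_j^{(t-1)}\|\big)$.
\item Each $\pi_j^{(t)}$ is a $(1-\gamma)$-mixture of ${\rm Alg}$'s output with uniform. Under \Cref{assumption:sublinear-variation-strategy}, consecutive ${\rm Alg}$ outputs differ by at most $\eta$, so $P_i^{(T)}\le\cO(\eta T)$.
\item There is one subtlety: ${\rm Alg}$ is fed estimated, blockwise-updated utilities, which may lie slightly outside $[-1,1]$. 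We handle this by clipping the estimates to $[-1,1]$; this can only reduce estimation error. Alternatively, we note that updates happen once per block, which only reduces variation.
\end{itemize}

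Substituting $P^{(T)}=\cO(\eta T)$ into the bandit bound of \Cref{theorem:avg-ranking-bandit} and dividing by $T$ gives
\[
\frac{1}{T}L^{1/3}T^{23/18}(\eta T)^{1/6}=L^{1/3}\eta^{1/6}T^{4/9}.
\]
The extra $L^{1/2}T^{4/9}$ term is meant to cover the regime where the variation bound is dominated by the other term in the proof of \Cref{theorem:avg-ranking-bandit}. An example is when $\eta T$ is smaller than the block-induced contributions, so that we instead use the crude bound $P^{(T)}\lesssim L T$ coming from \Cref{assumption:smooth-black-box} applied to consecutive prefixes. Taking the maximum of the two substitutions produces the sum $(L^{1/3}\eta^{1/6}+L^{1/2})$.

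\textbf{Main obstacle.} I expect the main obstacle to be the side condition of \Cref{theorem:avg-ranking-bandit}: it requires $P^{(T)}\le \cO(T^q)$ with $q<1/3$ and $c\in(\frac56+\frac q2,1)$. With $P^{(T)}=\cO(T^{1-w})$ this forces $w>2/3$, together with a compatible choice of $c$. I would first try to show that the ${\rm Alg}$ instances permitted by the hypotheses (e.g.\ FTRL via \Cref{lemma:FTRL-smoothness}, where $\eta$ and $L$ are tied) meet this automatically. Failing that, I would state it as an implicit requirement inherited from the parameter choices of \Cref{theorem:avg-ranking-bandit}. With that settled, the remaining steps are union bounds and algebra.
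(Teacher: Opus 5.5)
The reduction (per-player regret bounds, a union bound over players, then regret-to-CCE) and your full-information part match the paper. The gap is in the bandit part, at the step where you conclude $P_i^{(T)}\le\cO(\eta T)$ because ``consecutive ${\rm Alg}$ outputs differ by at most $\eta$.''

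\Cref{assumption:sublinear-variation-strategy} only controls ${\rm Alg}$ when its input sequence is extended by one more vector. In the bandit branch of \Cref{alg: FTRL}, the oracle is instead re-run on $t$ copies of the current estimate $\tilde\bu^{(t)}_{\rm avg-est}$. So the inputs producing $\pi^{(t)}$ and $\pi^{(t+1)}$ are prefix-related only when $\tilde\bu^{(t)}_{\rm avg-est}=\tilde\bu^{(t-1)}_{\rm avg-est}$, i.e., inside a block. At every block boundary $t\equiv 0\pmod M$ the estimate changes, so \emph{all} $t$ input vectors change at once, and $\eta$ does not cover that step.

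Your remark that blockwise updates ``only reduce variation'' is therefore backwards: the boundaries are exactly where the strategy can jump. Clipping the estimates does not remove these jumps. The $\eta T$ bound you wrote is the right one for \Cref{alg: Instant Reward}, whose oracle input really is the growing sequence $(\tilde\bu^{(s)})_{s\le t}$, but not for \Cref{alg: FTRL}.

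The missing ingredient is the paper's \Cref{lemma:sublinear-p-t}. Use the triangle inequality to split each step into two pieces:
\begin{itemize}
\item a prefix extension of the old constant sequence, which costs at most $\eta$;
\item a same-length replacement of all $t$ entries, which by \Cref{assumption:smooth-black-box} costs at most $L\,t\,\|\tilde\bu^{(t)}_{\rm avg-est}-\tilde\bu^{(t-1)}_{\rm avg-est}\|\le L\,t\cdot\frac{2M}{t}\sqrt{|\cA|}$, since adding one block to an average of $t/M$ blocks moves it by $\cO(M/t)$.
\end{itemize}
With at most $T/M$ boundaries this gives $\sum_t\|\pi^{(t+1)}-\pi^{(t)}\|\le\eta T+2\sqrt{|\cA|}LT$. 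Hence $P_i^{(T)}\lesssim(\eta+L)T$ by \Cref{lemma:variation-of-player}. Substituting into $\frac1T L^{1/3}T^{23/18}(P^{(T)})^{1/6}$ and using $(\eta+L)^{1/6}\le\eta^{1/6}+L^{1/6}$ gives exactly $(L^{1/3}\eta^{1/6}+L^{1/2})T^{4/9}$.

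So the $L^{1/2}$ term is not a separate ``regime'' to be maxed against an $\eta$-only bound, since that bound is invalid. Both contributions are always present and they add. Your alternative bound $P^{(T)}\lesssim LT$ would need \Cref{assumption:smooth-black-box} to compare inputs of different lengths ($t-1$ versus $t$ copies). The assumption as stated does not give that; it happens to hold for FTRL, but the theorem is for general ${\rm Alg}$.

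Your caveat about the side conditions $q<\frac13$ and $c\in(\frac56+\frac q2,1)$ of \Cref{theorem:avg-ranking-bandit} is fair, and the paper also leaves it implicit. With $P^{(T)}=\cO((\eta+L)T)$ and FTRL, where $\eta=\Theta(L)$, it amounts to requiring $c>8/9$.
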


Lastly, we would like to remark that although the online learning setting can be hard with a small $\tau$ (cf. the hardness results in \Cref{lemma:rank-avg-hard} and \Cref{lemma:rank-avg-bandit-hard}), computing an equilibrium is still possible even when $\tau\to 0^+$. A detailed discussion  can be found in \Cref{remark:tau-0-equilibrium}.

\section{Experiments}
\label{section:main-experiment}

\begin{figure}[t!]
\vspace{-20pt}
    \centering
    \includegraphics[width=0.7\linewidth]{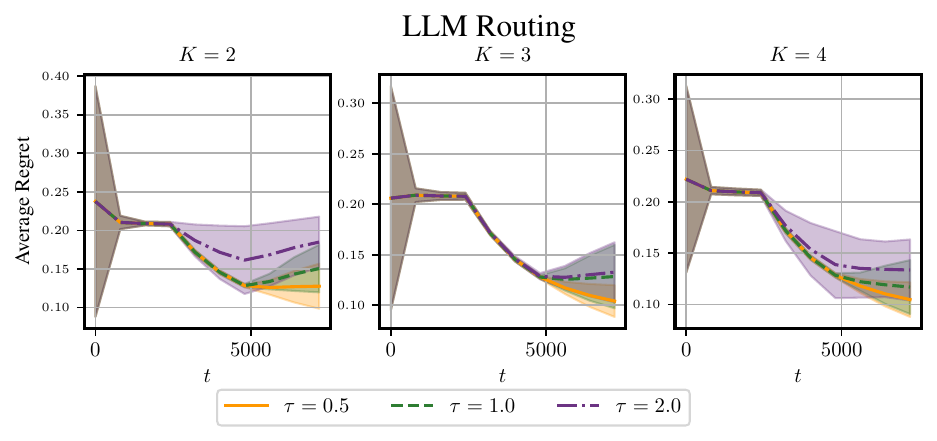}
    \caption{Regret of \Cref{alg: FTRL} with \Cref{item:rank-avg} under bandit feedback for different temperatures $\tau$ and numbers of proposed actions $K$, in the online learning setting.}
    \label{fig:llm-regret}
\end{figure}

To demonstrate the practical merit of online learning with ranking feedback, we consider the following scenario as an application example, which we term as \emph{online large-language-model routing}. A service provider hosts multiple language models, each possessing distinct strengths. For instance, GPT-4o \citep{hurst2024gpt-4o} may excel in creative writing, whereas GPT-5 specializes in coding. Consequently, users may exhibit different preferences for different models.

We cast personalized large language model (LLM) routing as an online learning problem with ranking feedback. Fix a particular user and view each candidate model as an action. At each timestep $t$, the user submits a query, and the server samples $K$ candidate responses by drawing models according to a mixed strategy $\pi^{(t)}$ and querying the selected models on that prompt. The user then returns a ranking over the proposed responses, providing only relative-preference feedback rather than calibrated scores. This feedback is induced by an underlying, time-varying utility vector $\bu^{(t)}$ that captures the user’s instantaneous (or time-aggregated) satisfaction with each model. The server’s objective is to update $\pi^{(t)}$ online to minimize external regret with respect to the best fixed model in hindsight, while remaining responsive to nonstationary preferences. For instance, a user may prioritize creative writing for a period and later shift toward coding or mathematical problem solving.

We simulate this process in \Cref{fig:llm-regret}. Specifically, we generate responses for the HH-RLHF \citep{bai2022training-RLHF} dataset using Qwen3-32B \citep{yang2025qwen3}, Phi-4 \citep{abdin2024phi4}, GPT-4o \citep{hurst2024gpt-4o}, and Llama-3.1-70B \citep{dubey2024llama-3-1}. We evaluate the responses using a reward model\footnote{\href{https://huggingface.co/OpenAssistant/reward-model-deberta-v3-large-v2}{https://huggingface.co/OpenAssistant/reward-model-deberta-v3-large-v2}.} and execute \Cref{alg: FTRL} with \Cref{item:rank-avg} under bandit feedback. At each time step, a prompt is sampled from the dataset along with responses from different models, and the resulting scores form the reward vector. As shown in \Cref{fig:llm-regret}, the average regret decreases over time, suggesting that our router quickly approaches the performance of always serving the best fixed model in hindsight, \emph{i.e.}, the model with the highest cumulative reward under the user’s preferences.
More experiments about online learning and equilibrium computation can be found in \Cref{section:experiments}.

\section{Conclusion and Limitations}
\label{section:conclusion}

In this paper, we studied online learning and equilibrium computation with ranking feedback, which is particularly relevant to application scenarios with humans in the loop. Focusing on the classical (external-)regret metric, we designed novel hardness instances to show that achieving sublinear regret can be hard in general, in a few different ranking models and feedback settings. We then developed new algorithms to achieve sublinear regret under an additional assumption on the sublinear variation of the utility, leading to an equilibrium computation result in the repeated game setting. Finally, we justify the effectiveness of our approach by simulating routing the user’s query to the optimal LLM. We believe our work paves the way for promising avenues of future research. For example, it would be interesting to close the gap between the lower-bound and the positive result for \Cref{item:rank-avg} under bandit feedback, \emph{i.e.}, either show the hardness when $\tau$ is a \emph{constant} or achieve sublinear regret for constant $\tau$ without \Cref{assumption:sublinear}. Moreover, applying our algorithms to real-world datasets with ranking feedback, such as ride-sharing and match-dating, would also be of great interest.

\section*{Acknowledgement}
The authors thank the anonymous reviewers and area chair for the value feedback. 
M.L. was supported by the MathWorks Fellowship. A.O. and G.F. were supported in part by the ONR grant N000142512296. G.F. was  additionally supported by CCF-2443068 and an AI2050 Early Career Fellowship. K.Z. was supported by the ARO grant W911NF-24-1-0085, the NSF CAREER Award 2443704, and the AFOSR YIP Award FA9550-25-1-0258. Y.C.'s work was done while visiting UMD.

\bibliographystyle{iclr2026/iclr2026_conference}
\bibliography{ref}

\appendix

\newpage

\section{Additional  Notation and Preliminaries}
\label{section:notations}

\paragraph{Notation.} For any integer $N>0$, we define  $\sbr{N}\coloneqq \cbr{1,...,N}$ to denote the set of positive integers no larger than $N$. \kzedit{We use bold notation $\bx$ to denote a finite-dimensional vector, and $x_i$ to denote the $i^{th}$ element of the vector.} For any discrete set $\cS$, let $\abr{\cS}$ denote its cardinality, $\Delta^\cS\coloneqq \big\{\bx\in\RR^{\cS}\colon \sum_{s\in \cS} x_s=1,~x_s\geq 0~\text{for~all~}s\in \cS\big\}$ be the probability simplex over $\cS$, and $\one\rbr{\cS}$ be an all-one vector with each index being elements in $\cS$. 
For any \kzedit{ordered} discrete set $\cS$, we use $\RR^{\cS}$ to denote the $|\cS|$ dimensional real space, where the $s^{th}\in \cS$ element of any $\bx\in\RR^{\cS}$ is  denoted as $x_s$ or $x(s)$.
For any vector $\bx\in\RR^m$, let $\nbr{\bx}_p$ be its $L_p$-norm and we use $\nbr{\bx}$ to denote the $L_2$-norm by default. For any convex compact set $\cC\subseteq \RR^n$ and $\bx\in\RR^n$, let $\Proj{\cC}{\bx}=\argmin_{\bx'\in\cC} \nbr{\bx-\bx'}$. For any event $e$, let $\ind\rbr{e}$ be its indicator, which is equal to one when $e$ holds and zero otherwise. 
Additionally, for any discrete set $\cS$, let $\Sigma\rbr{\cS}$ be the set containing all the permutations of the elements in $\cS$. We will use $\sig(x)\coloneqq \frac{\exp(x)}{1+\exp(x)}\colon \RR\to\RR$ to denote the logistic function.

\subsection{Normal-Form Games}

An $N$-player normal-form game can be characterized by a tuple $\rbr{N, \cbr{\cA_i}_{i=1}^N, \cbr{\cU_i}_{i=1}^N}$, where 
$\cA_i\coloneqq \cbr{a_i^1,a_i^2,\dots,a_i^{\abr{\cA_i}}}$ is the (finite) action set for player $i\in [N]$;  $\cU_i\colon \bigtimes_{i=1}^N \cA_i\to [-1, 1]$ ($\bigtimes$ denotes the Cartesian product of sets) is the utility function of player $i$, where $\cU_i(a_1,a_2,...,a_N)$ is the utility of player $i$ when player $j\in [N]$ takes action  $a_j$. We call $\ba\colon=(a_1,a_2,...,a_N)$ the \emph{joint action} and let $\ba_{-i}\colon=\rbr{a_1,...,a_{i-1},a_{i+1},...,a_N}$. Player $i\in [N]$ can choose a strategy $\pi_i\in \Delta^{\cA_i}$, and we call $\bigtimes_{i=1}^N \Delta^{\cA_i}\ni \bpi=\rbr{\pi_1,\pi_2,\dots,\pi_N}$ 
a \emph{strategy profile}. When a strategy profile $\bpi$ is implemented, each player $i\in [N]$ has an expected utility of $\sum_{\ba\in\bigtimes_{j=1}^N \cA_j} \cU_i(\ba)\prod_{j\in [N]} \pi_j(a_j)$. Lastly, we use the unbold notation $\pi\in \Delta^{\bigtimes_{i=1}^N \cA_i}$ to denote the (possibly correlated) joint strategy of all the players, where $\pi(\ba)$ is the probability of choosing the joint action $\ba\in\bigtimes_{i=1}^N \cA_i$.

In this paper, we focus on finding an $\epsilon$-approximate \emph{coarse correlated equilibrium} ($\epsilon$-CCE) of the NFG, which is a probability distribution over the joint action set. It is formally defined as follows:
\begin{definition}[$\epsilon$-CCE]
    For any joint strategy $\pi\in \Delta^{\bigtimes_{i=1}^N \cA_i}$, it is an $\epsilon$-CCE for $\epsilon\geq 0$ if
    \begin{align}
        \max_{i\in [N]} \max_{\hat\pi_i\in \Delta^{\cA_i}} \sum_{\ba\in\bigtimes_{j=1}^N \cA_j} \cU_i(\ba) \rbr{\hat\pi_i(a_i) \sum_{a_i'\in\cA_i} \pi(a_i', \ba_{-i}) -\pi(\ba)}\leq \epsilon. \numberthis[$\epsilon$-CCE]{eq:def-CCE}
    \end{align}
\end{definition}
When $\epsilon=0$, we refer to it as a(n exact) CCE. We also refer to $\epsilon$ as the \emph{exploitability}.

\subsection{Equilibrium Computation with Ranking Feedback}

There is a mediator (platform) in the game that \kzedit{computes strategies for the players,}  (\emph{e.g.}, Uber recommends the candidate drivers and users to each other), but \kzedit{with only access to the ranking feedback from the players, \emph{e.g.}, humans}. Specifically, when the strategy profile $\bpi$ is implemented  by the players, player $i$'s utility of taking action $a_i\in\cA_i$ is $u_i^{\bpi}(a_i)\coloneqq \sum_{\ba'\in\bigtimes_{j=1}^N \cA_j} \cU_i(\ba')\ind\rbr{a_i'=a_i}\prod_{j\not= i} \pi_j(a_j')$. 
 However, instead of observing the utility directly, the mediator can only observe the ranking based on it.    
Therefore, at each timestep $t$, the mediator will choose a strategy profile $\bpi$ and propose each player $i\in [N]$ a multiset $o_i^{(t)}=\cbr{a_i^{(t),k}}_{k=1}^K$ consisting of $K$ actions, and in different settings proceed differently as follows:
\begin{itemize}
    \item \textbf{Full-information setting.} 
    \kzedit{\emph{All} the actions of each player $i\in[N]$ can be evaluated and ranked at each timestep $t$  based on some utility vector, which is  $\bu_i^{\bpi^{(t)}}$ under \Cref{item:rank-instant} and $\bu_{\rm avg}^{(t)}\coloneqq \frac{1}{t}\sum_{s=1}^t\bu_i^{\bpi^{(s)}}$ under   \Cref{item:rank-avg},} 
    where $\bpi^{(t)}=\rbr{\pi_1^{(t)},\dots, \pi_N^{(t)}}$ is the strategy profile at timestep $t$.
    \item \textbf{Bandit setting.} \kzedit{For each player $i\in[N]$, only the $K$ actions in $o_i^{(t)}$ that are proposed at timestep $t$ will be evaluated and ranked, with the associated elements in some utility vector. 
    Specifically, under \Cref{item:rank-instant}, $\hat{\bu}_i^{(t)}$ defined below will be used: for each $a_i\in \cA_i$
    $$
    \hat{u}_i^{(t)}(a_i)\coloneqq \frac{1}{|o_{-i}^{(t)}|}\sum_{\ba_{-i}'\in o_{-i}^{(t)}} \cU_i(a_i,\ba_{-i}');
    $$
    under \Cref{item:rank-avg}, the corresponding empirical average utility is as computed in \Cref{equ:def_u_empirical}, with the $\bu^{(s)}$  therein being replaced by the  $\hat{\bu}_i^{(s)}$ above. 
    As in the online  setting, we assume that the actions are proposed in an \emph{unbiased} way, \emph{i.e.}, $\EE\left[\frac{\sum_{\ba_{-i}\in o^{(t)}_{-i}}\cU_i(a_i,\ba_{-i})}{|o^{(t)}_{-i}|}\right]=\inner{\cU_i(a_i,\cdot)}{\pi_{-i}^{(t)}}$, for all $a_i\in \cA_i$. In other words, $ \hat{\bu}_i^{(t)}$ is an unbiased estimate of $\bu_i^{\bpi^{(t)}}$. 
    }  
\end{itemize}

The process will be repeated until the mediator finds an (approximate)  equilibrium of the game, which is the average of the joint strategy over all timesteps.

\section{Additional Experiments}
\label{section:experiments}

To assess robustness across learning setups, we study both full-information and bandit feedback under the two ranking-feedback models in \Cref{item:rank-instant,item:rank-avg}. The corresponding regret results are reported in \Cref{fig:full-information,fig:online_regret2,fig:online_regret3,fig:noise}. We also evaluate our methods on randomly generated two-player general-sum games under the same feedback models. For these game experiments, we report the resulting CCE exploitability $\epsilon$ across different game parameters in \Cref{fig:game_exploitability1,fig:game-instil,fig:game-avg}, together with 95\% confidence intervals.

We consider both full-information and bandit feedback. Throughout, all experiments run for $T = 10^7$ iterations, and each player has $10$ actions. We evaluate performance across the number of proposed actions $K \in \cbr{3,5,10}$, the temperature parameter $\tau \in \cbr{0.5,1,2}$ in \Cref{eq:PL-def}, and the accumulated utility variation $P^{(T)} = T^q$ with $q \in \cbr{0.3,0.5,0.7}$. For \Cref{item:rank-avg} under full-information feedback, we consider only the case $q = 1.0$, which does not restrict the utility variation, since the corresponding guarantee does not rely on \Cref{assumption:sublinear}. Moreover, for \Cref{item:rank-avg} under bandit feedback, although $0.5$ and $0.7$ exceed the theoretical threshold $\frac{1}{3}$ required for \Cref{alg: FTRL}, we empirically observe that \Cref{alg: FTRL} still attains sublinear regret. This suggests that \Cref{alg: FTRL} is more robust in practice than the current theory indicates.

For hyper-parameter selection, we conduct a separate grid search for each experimental setting. Under \Cref{item:rank-instant}, we tune the estimation window size $m$ and the exploration rate $\gamma$; under \Cref{item:rank-avg}, we additionally tune the block size $M$. Specifically, we search over $m \in \cbr{5\times 10^4, 10^5, 1.5\times 10^5}$, $\gamma \in \cbr{0.1,0.05,0.01}$, and, when applicable, $M \in \cbr{3\times 10^6,5\times 10^6,10^7}$. The learning rate is set to $\eta = 1/\sqrt{T}$ in all experiments except for bandit feedback under \Cref{item:rank-avg}, where we use $\eta = 10^{-6}$. Each hyper-parameter configuration is evaluated over $10$ random seeds, namely $\cbr{0,1,\dots,9}$. For each figure, we report the best-performing choice of $m$, $M$, and $\gamma$, namely, the one that achieves the smallest average regret or $\epsilon$ over all seeds at horizon $T$.

Utility estimation is performed using \Cref{alg: estimate}. As the full-information no-regret learning oracle with numeric feedback, we instantiate ${\rm Alg}$ as PGD for \Cref{item:rank-instant} and as FTRL with $L_2$-regularization for \Cref{item:rank-avg}.

In \Cref{fig:full-information,fig:online_regret2,fig:online_regret3}, we report the performance of the learning algorithm for \Cref{item:rank-instant} under both full-information and bandit feedback, and for \Cref{item:rank-avg} under bandit feedback. To generate utility sequences with cumulative variation bounded by $T^q$, we first allocate the total variation budget $T^q$ uniformly at random across timesteps. At each timestep $t$, we then sample a perturbation direction $\bn^{(t)}$ uniformly at random and use binary search to choose a scaling factor $\alpha^{(t)}$ such that $\nbr{\Proj{[-1,1]^{\cA}}{\bu^{(t-1)} + \alpha^{(t)} \bn^{(t)}} - \bu^{(t-1)}}$ equals the variation assigned to round $t$.

For \Cref{item:rank-avg} under full-information feedback, our algorithm can accommodate sequences of utility vectors with unbounded cumulative variation. We therefore additionally conduct an ablation study on the noise model in \Cref{fig:noise}. We first sample an initial utility vector, and at each timestep the utility vector is defined as the initial utility vector plus an independent random shift. For each coordinate, the shift is drawn from one of three noise distributions: $\mathrm{Uniform}\rbr{-\sigma,\sigma}$, $\cN\rbr{0,\sigma^2}$, or $\Gamma\rbr{1/\sigma^2, \sigma^2}$ with $\sigma=0.3$. We report the resulting average regret over time in \Cref{fig:noise}.

\begin{figure*}[h]
\begin{center}
\includegraphics[width=0.3\linewidth]{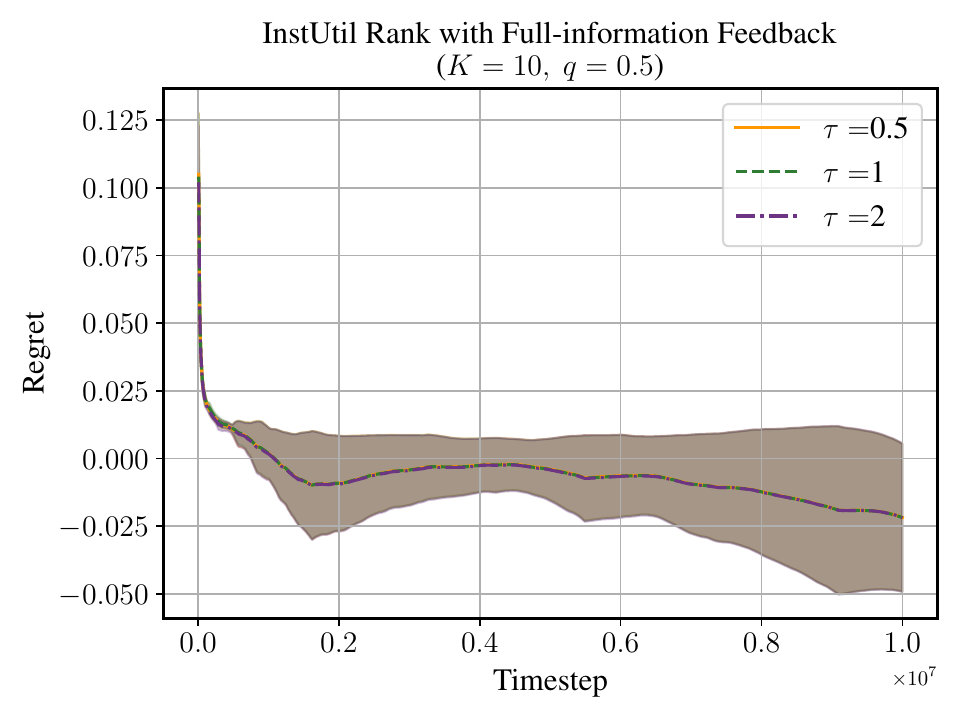}
\includegraphics[width=0.3\linewidth]{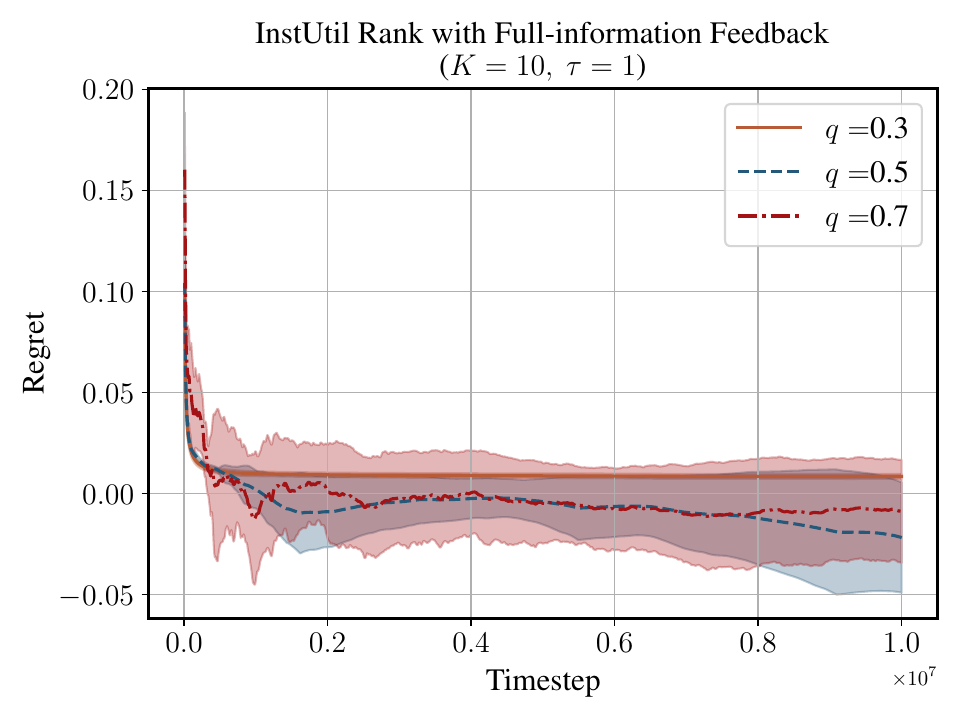}
\includegraphics[width=0.3\linewidth]{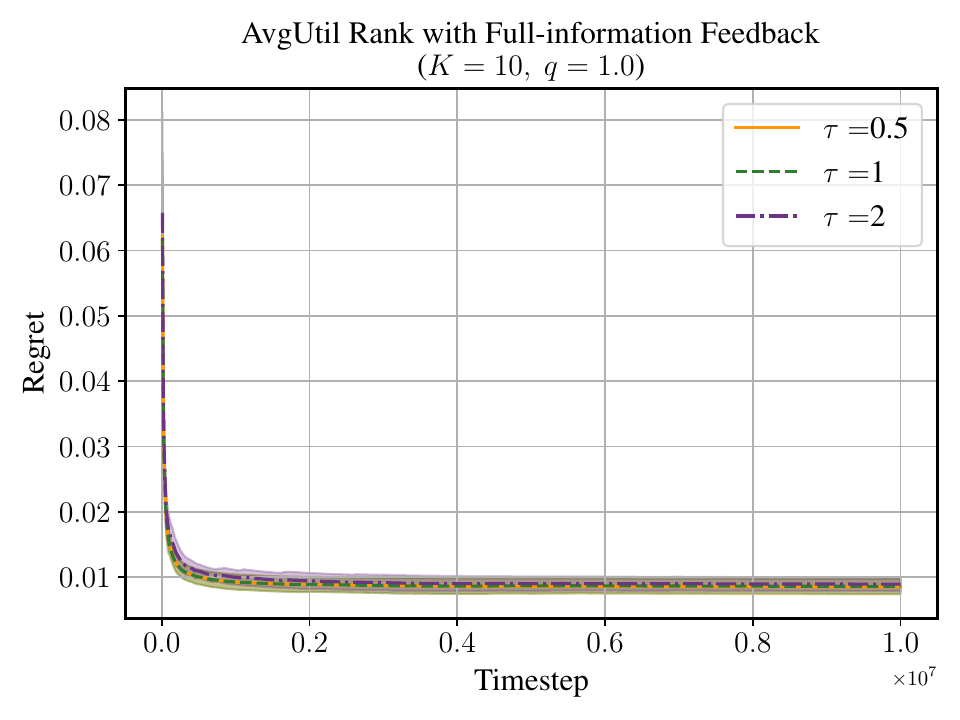}
\caption{The exploitability 
for the full-information feedback setting under both \Cref{item:rank-instant} and \Cref{item:rank-avg}. Performance is evaluated across different temperatures $\tau$ and cumulative utility variations $P^{(T)} = T^q$. Each parameter combination is tested $10$ times with different random seeds.}
\label{fig:full-information}
\end{center}
\end{figure*}

\begin{figure*}
\begin{center}
\includegraphics[width=0.3\linewidth]{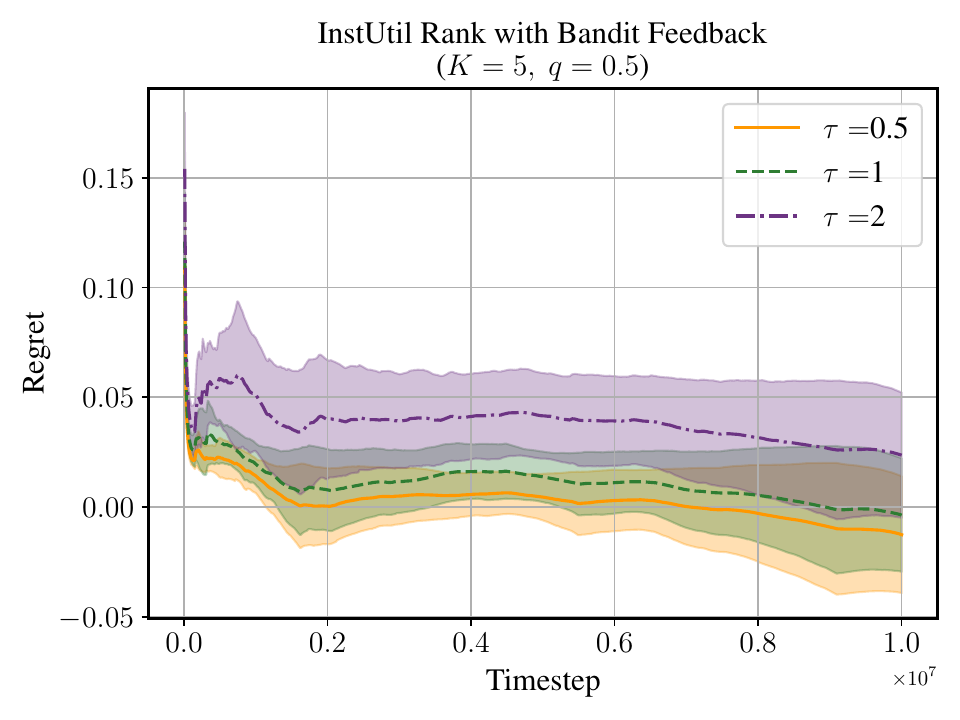}
\includegraphics[width=0.3\linewidth]{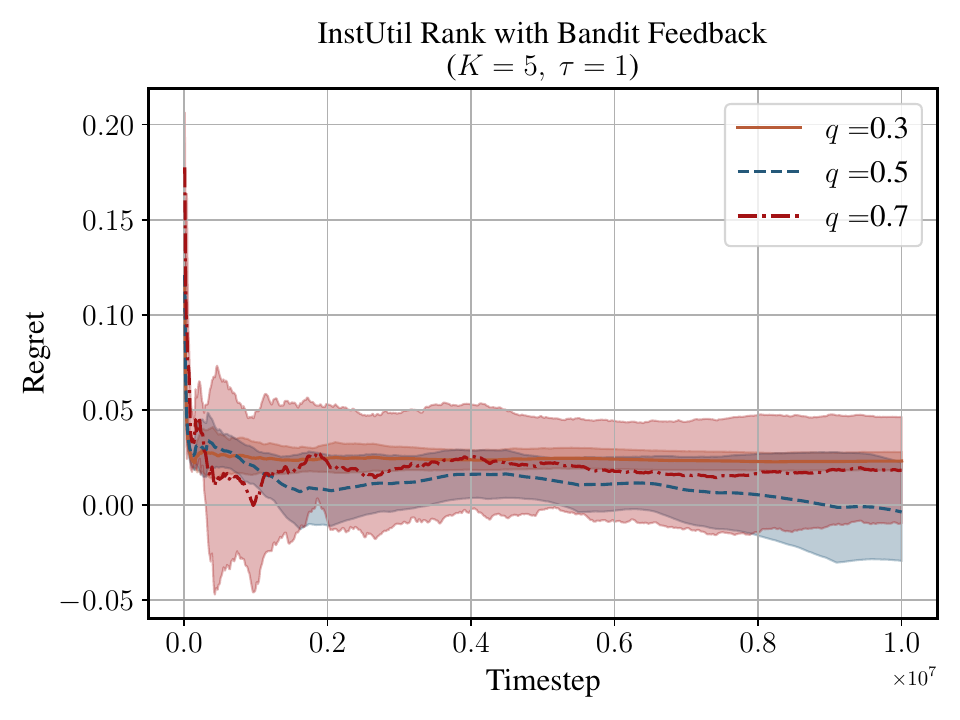}
\includegraphics[width=0.30\linewidth]{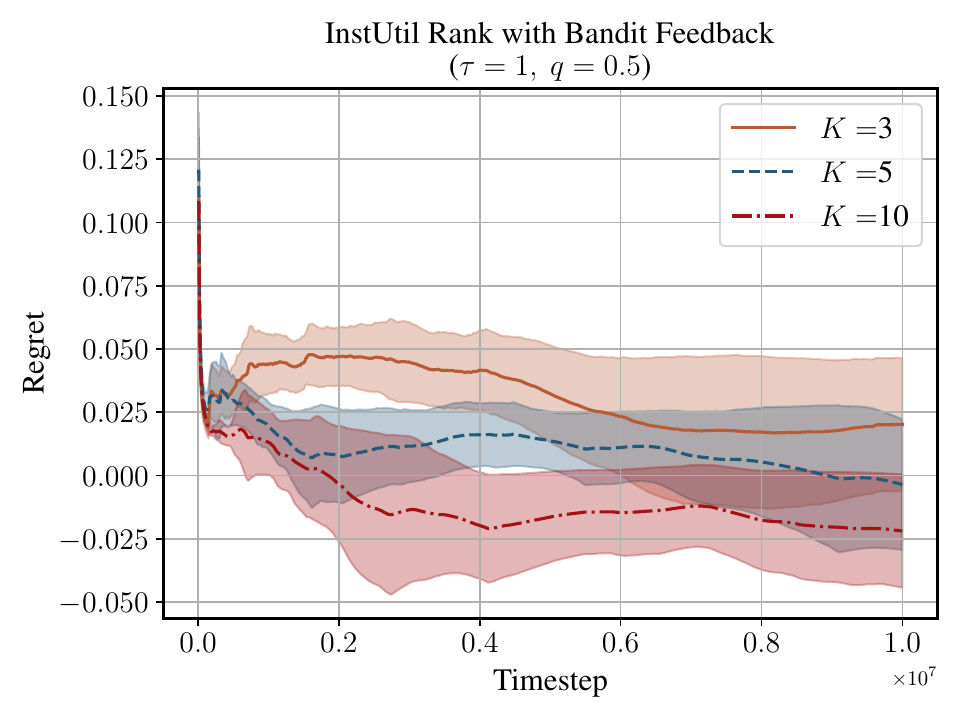}
\caption{The regret for bandit feedback setting under \Cref{item:rank-instant} feedback in the online learning setting. 
The performance is evaluated across different temperatures $\tau$, cumulative utility variations $P^{(T)} = T^q$, and numbers of proposed actions $K$. Each parameter combination is tested $10$ times with different random seeds.}
\label{fig:online_regret2}
\end{center}
\end{figure*}

\begin{figure*}
\begin{center}
\includegraphics[width=0.3\linewidth]{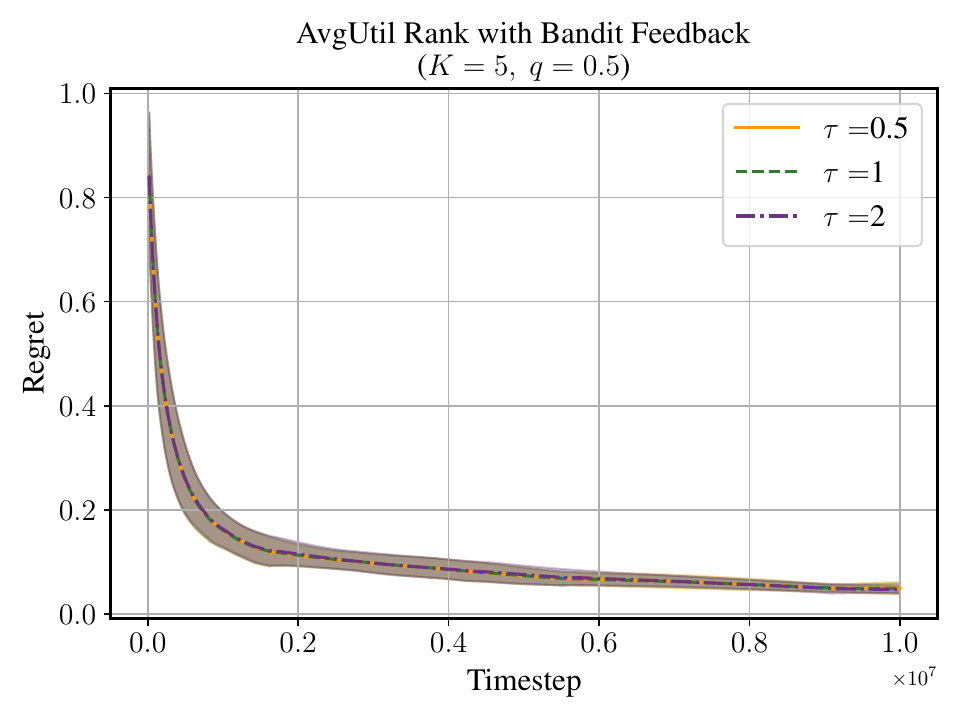}
\includegraphics[width=0.3\linewidth]{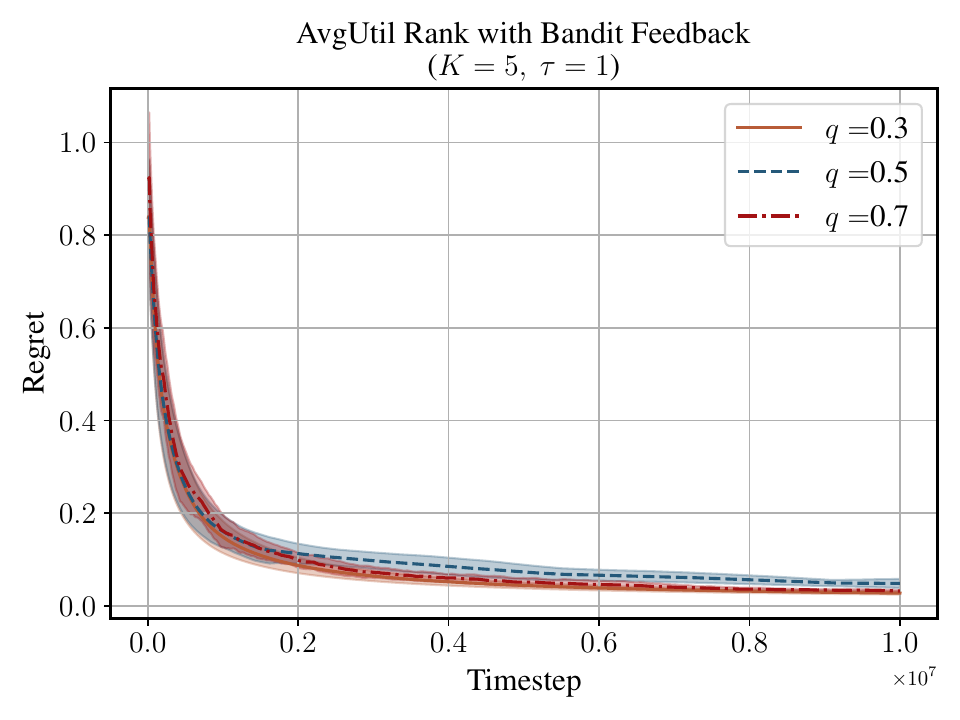}
\includegraphics[width=0.3\linewidth]{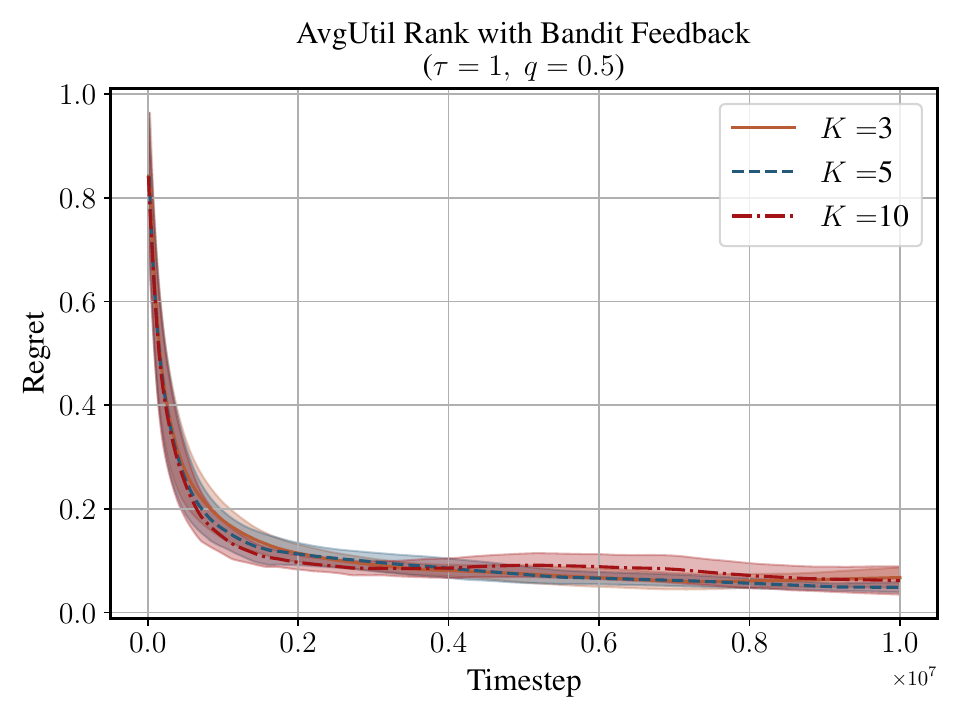}
\caption{The regret for bandit feedback setting under \Cref{item:rank-avg} feedback in the online learning setting. 
The performance is evaluated across different temperatures $\tau$, cumulative utility variations $P^{(T)} = T^q$, and numbers of proposed actions $K$. Each parameter combination is tested $10$ times with different random seeds.}
\label{fig:online_regret3}
\end{center}
\end{figure*}

\begin{figure}[!t]
    \centering
    \includegraphics[width=0.9\linewidth]{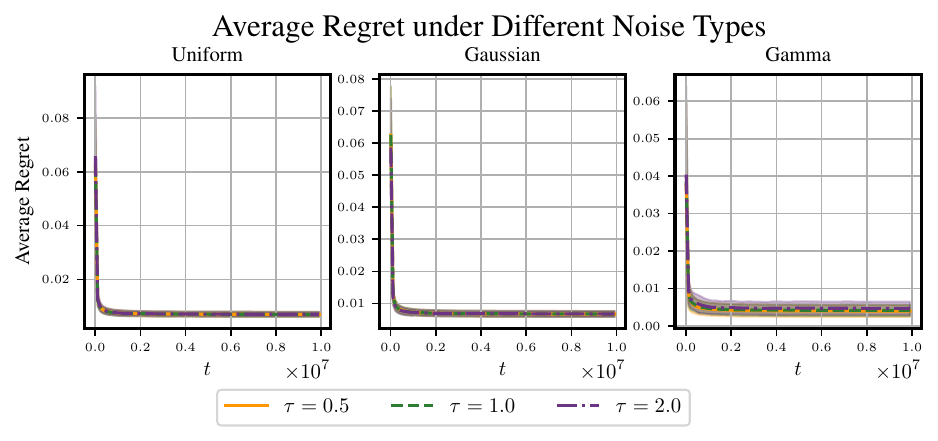}
    \caption{Regret of \Cref{alg: FTRL} with \Cref{item:rank-avg} under full-information feedback, across different temperatures $\tau$ and noise types in the online-learning setting.}
    \label{fig:noise}
\end{figure}

Across all game settings, the exploitability decreases as $t$ increases, suggesting that the time-averaged joint strategy converges to a CCE.

\begin{figure*}[!t]
\begin{center}
\includegraphics[ width=0.48\linewidth]{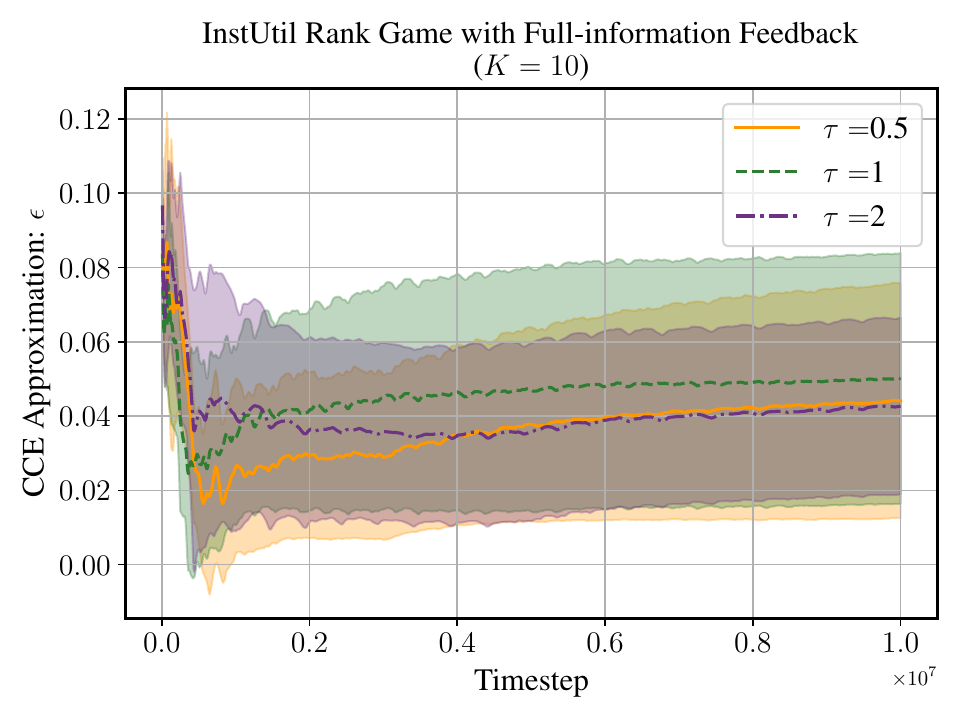}
\includegraphics[width=0.48\linewidth]{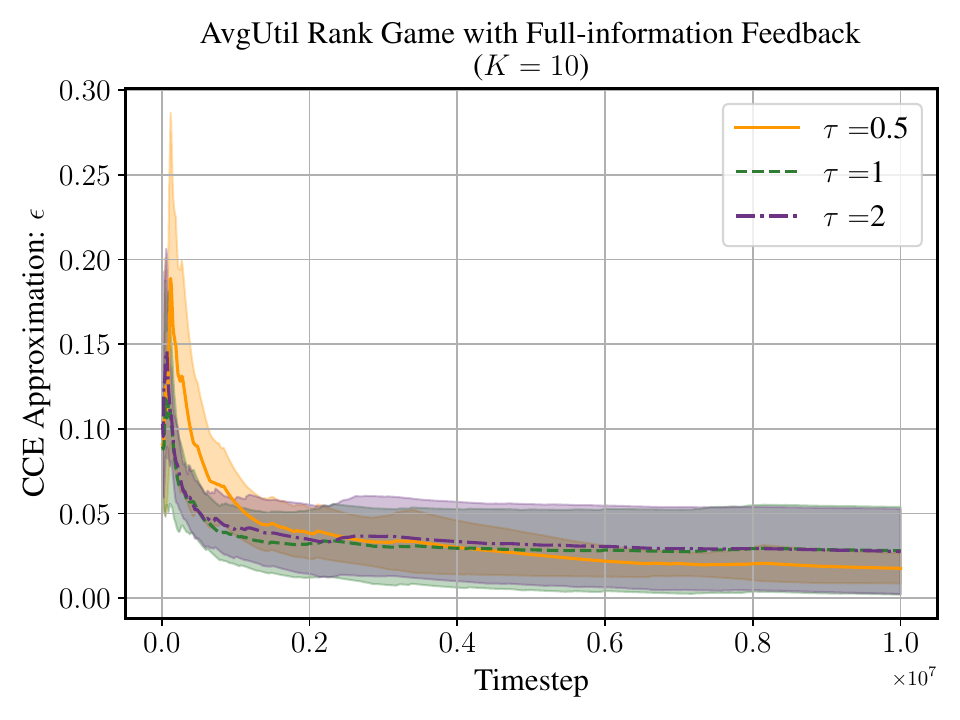}
\caption{The exploitability 
for the full-information setting under both \Cref{item:rank-instant} and \Cref{item:rank-avg} feedback in the game-play setting.  
The performance is tested under different temperatures $\tau$. Each parameter combination is tested $10$ times with different random seeds.}
\label{fig:game_exploitability1}
\end{center}
\end{figure*}

\begin{figure*}[h]
\begin{center}
\includegraphics[width=0.48\linewidth]{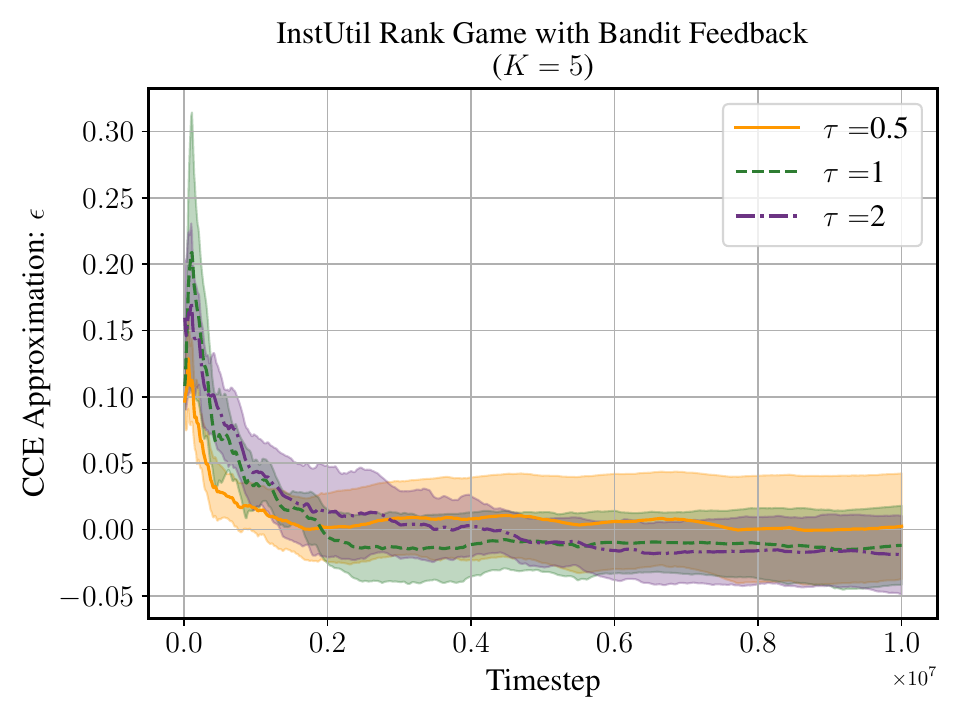}
\includegraphics[width=0.48\linewidth]{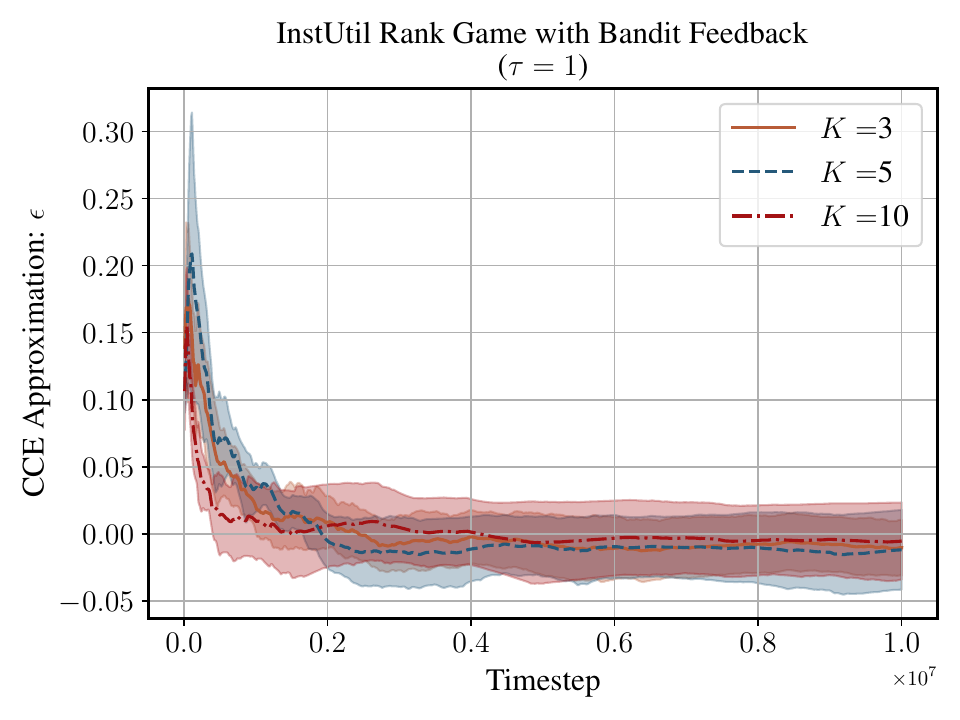}
\caption{The exploitability 
for the bandit feedback setting under \Cref{item:rank-instant} feedback in the game-play setting. Performance is evaluated across different temperatures $\tau$ and numbers of proposed actions $K$. Each parameter combination is tested $10$ times with different random seeds.}
\label{fig:game-instil}
\end{center}
\end{figure*}

\begin{figure*}[h]

\begin{center}
\includegraphics[width=0.48\linewidth]{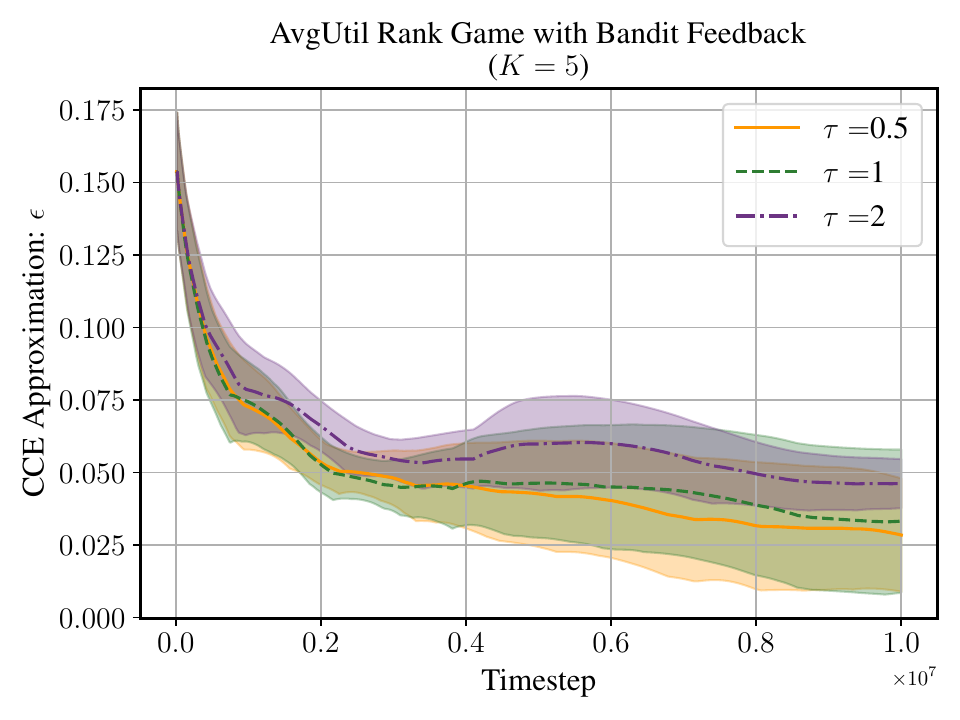}
\includegraphics[width=0.48\linewidth]{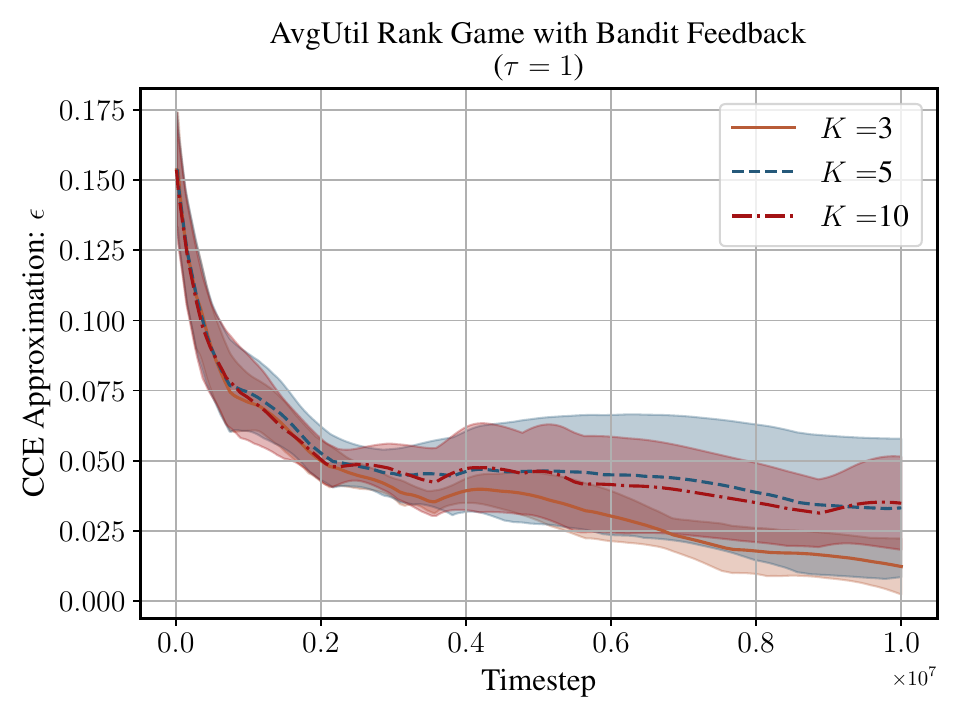}
\caption{The exploitability 
for the bandit feedback setting under \Cref{item:rank-avg} feedback in the game-play setting. Performance is evaluated across different temperatures $\tau$ and numbers of proposed actions $K$. Each parameter combination is tested $10$ times with different random seeds.}
\label{fig:game-avg}
\end{center}
\end{figure*}

\section{Proof of \Cref{section:hardness-result}}
\label{section:proof-hard-instances}

In this section, we will show the hardness results in \Cref{section:hardness-result}.

\subsection{Proof of \Cref{lemma:rank-instant-hard}}

\restate{lemma:rank-instant-hard}

\begin{proof}
    Consider an online learning problem with $\cA=\cbr{a,b}$, so that the utility vector can be represented as $(u(a), u(b))$. There are two instances with $\tau=0.1$ and $K=2$ under bandit feedback.

    In the first instance, there are two types of utility vectors $(-0.5, 0)$ and $(0.15, 0)$. At each timestep, the adversary will choose $(-0.5, 0)$ with probability $\frac{4}{13}$ and the other with probability $\frac{9}{13}$.

    In the second instance, there are two types of utility vectors $(-0.02, 0)$ and $(0.1, 0)$. Recall  $\sig(x)\colon\RR\to\RR\coloneqq\frac{\exp(x)}{1+\exp(x)}$ is the logistic function. At each timestep, the adversary will choose $(-0.02, 0)$ with probability $\frac{4\sig(-5)/13+9\sig(1.5)/13-\sig(1)}{\sig(-0.2)-\sig(1)}\approx 0.58$ and the other with probability $1-\frac{4\sig(-5)/13+9\sig(1.5)/13-\sig(1)}{\sig(-0.2)-\sig(1)}$.

    The expected utility of action $b$ in both instances is 0. The expected utility of action $a$ in the first instance is $-0.05$. The expected utility of action $a$ in the second instance is $0.03$.

    Moreover, at any timestep $t$ when both actions $a,b$ are proposed in $o^{(t)}$, in the ranking $\sigma^{(t)}$, the probability of $a$ being preferred over $b$ is
    \begin{align*}
        \Pr\rbr{a\overset{\sigma^{(t)}}{>}b} = \frac{4}{13}\sig(-5) + \frac{9}{13}\sig(1.5).
    \end{align*}
    This coincides with the corresponding probability under the second instance,
   {\small\begin{align*}
         \frac{4\sig(-5)/13+9\sig(1.5)/13-\sig(1)}{\sig(-0.2)-\sig(1)}\sig(-0.2) + \rbr{1-\frac{4\sig(-5)/13+9\sig(1.5)/13-\sig(1)}{\sig(-0.2)-\sig(1)}}\sig(1).
    \end{align*}}
    Hence, the distribution of the ranking $\sigma^{(t)}$ is identical in the two instances. When $o^{(t)}=\cbr{a,b}$, the preceding equality shows that $\Pr\rbr{a \overset{\sigma^{(t)}}{>} b}$ matches across instances. When $o^{(t)}=\cbr{a,a}$ (or $\cbr{b,b}$), the ranking is deterministic, so it is trivially the same in both instances.

    Therefore, under full-information feedback, for any algorithm that generates $\rbr{\pi^{(t)}}_{t=1}^T$, we have
    \begin{align*}
        \EE\sbr{\sum_{t=1}^T \inner{\bu^{(t)}}{\pi^{(t)}}}=&\sum_{t=1}^T\EE\sbr{\inner{\bu^{(t)}}{\pi^{(t)}}}\overset{(i)}{=}\sum_{t=1}^T\inner{\EE\sbr{\bu^{(t)}}}{\EE\sbr{\pi^{(t)}}}.
    \end{align*}
    The equality $(i)$ holds because $\bu^{(t)}$ is independent of $\pi^{(t)}$ given our process of generating both instances. Similarly, under bandit feedback,
    \begin{align*}
        \EE\sbr{\sum_{t=1}^T \frac{1}{K}\sum_{a\in o^{(t)}} u^{(t)}(a)}=&\sum_{t=1}^T\EE\sbr{\frac{1}{K} \sum_{a\in o^{(t)}} u^{(t)}(a)}\overset{(i)}{=}\sum_{t=1}^T\inner{\EE\sbr{\bu^{(t)}}}{\EE\sbr{\pi^{(t)}}}.
    \end{align*}
    In $(i)$, we define $\pi^{(t)}(a)\coloneqq \frac{1}{K}\sum_{a'\in o^{(t)}} \ind\rbr{a=a'}$ for any $a\in\cA$.

    Then, for both the full-information feedback and the bandit feedback, we have
    \begin{align*}
        \EE\sbr{\pi^{(t)}}%
        =&\sum_{\sigma^{(1)},\dots,\sigma^{(t-1)}\in\Sigma\rbr{\cA}}\PP\rbr{\sigma^{(1)},\dots, \sigma^{(t-1)}} \EE\sbr{\pi^{(t)}\given \sigma^{(1)},\dots, \sigma^{(t-1)}}.
    \end{align*}
    The first term $\PP\rbr{\sigma^{(1)},\dots, \sigma^{(t-1)}}$ is equal in the two instances according to the discussion above, and the second term $\EE\sbr{\pi^{(t)}\given \sigma^{(1)},\dots, \sigma^{(t-1)}}$ is also equal since it only depends on the algorithm. Therefore, $\EE\sbr{\pi^{(t)}}$ is the same in both instances.

    However, $\EE\sbr{\bu^{(t)}}=\rbr{-0.05, 0}$ in the first instance but $\rbr{0.03, 0}$ in the second. Therefore, whenever achieving sublinear regret in the first instance, the algorithm will suffer a linear regret in the second instance, and vice versa.
\end{proof}

\subsection{Proof of \Cref{lemma:rank-avg-hard}}

\restate{lemma:rank-avg-hard}

\begin{proof}
We use $(u(a), u(b))$ to denote the utility vector when the action set is $\cA=\cbr{a,b}$. In the following, we will show a hard instance for $\tau\to 0^+$, \emph{i.e.}, we always observe the action with higher utility ranks first in the permutation. Then, we will show that $\tau\leq \cO(\frac{1}{T\log T})$ can be reduced to $\tau\to 0^+$.

The utility vector at timestep 1 is $\bu^{(1)}=(0.5, 0)$. We will construct the rest of the utility vectors next.

We call the following an \emph{action-$a$ construction}, since, except for the last timestep, the observation is that action $a$ is always better. Let $K\in\NN$ be the smallest integer such that $2^K\geq T$, then:
\begin{align}
    {\rm Sequence~}0=& (0, 1), (0,0)\notag\\
    {\rm Sequence~}1=& (1, 0), (0, 1), (0, 1), (0,0)\notag\\
    {\rm Sequence~}2=&(1, 0), (1, 0), (1, 0), (0, 1), (0, 1), (0, 1), (0, 1), (0,0)\notag\\
    &\dots\label{eq:hard-sequence}\\
    {\rm Sequence~}K-1=& \underbrace{(1, 0), ..., (1, 0)}_{2^{K-1}-1}, \underbrace{(0, 1), ..., (0, 1)}_{2^{K-1}}, (0,0)\notag\\
    {\rm Sequence~}K=& \underbrace{(1, 0), ..., (1, 0)}_{2^K-1}, (0,0).\notag
\end{align}

\Cref{lemma:average-utility-counterexample} in the following shows that at least one of the sequences will incur a low average utility for the algorithm.
\begin{lemma}
\label{lemma:average-utility-counterexample}
    Consider \Cref{eq:hard-sequence}. For any online learning algorithm, at least one of the $K+1$ sequences satisfies that the expected average utility per timestep is less than $0.5-\frac{1}{2(K+1)}$.
\end{lemma}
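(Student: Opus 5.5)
The plan is to exploit the fact that, in the limit $\tau\to 0^+$, the $K+1$ sequences in \Cref{eq:hard-sequence} (each preceded by $\bu^{(1)}=(0.5,0)$) produce \emph{identical} feedback for a long common prefix. Then a telescoping weighted average of the algorithm's utilities over the sequences gives a contradiction.

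\textbf{Step 1: common feedback.} For Sequence $k<K$, the cumulative utility of $a$ after the first $1+(2^k-1)$ timesteps is $2^k-\tfrac12$. Action $b$ then accumulates one unit per step and reaches $2^k$ only at the last $(0,1)$ step. Hence every ranking $\sigma^{(s)}$ with $s<2^{k+1}$ places $a$ first, and the same holds throughout Sequence $K$. The strategy $\pi^{(t)}$ depends only on $\sigma^{(1)},\dots,\sigma^{(t-1)}$ and the algorithm's internal randomness. So $p_t\coloneqq\EE\sbr{\pi^{(t)}(a)}$ is the same number across every sequence that is still running at time $t$, for all $t$ up to the last timestep at which that sequence carries nonzero utility. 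This lets us write each sequence's expected cumulative utility in terms of the single vector $(p_t)_t$.

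\textbf{Step 2: express utilities.} Let $A_k\coloneqq\sum_{t=2}^{2^k}p_t$ and $B_k\coloneqq\sum_{t=2^k+1}^{2^{k+1}}p_t$, so that $A_{k+1}=A_k+B_k$. The expected cumulative utility on Sequence $k<K$ is
\begin{align*}
U_k=\tfrac12 p_1+A_k+(2^k-B_k)=\tfrac12 p_1+2A_k-A_{k+1}+2^k,
\end{align*}
and on Sequence $K$ it is $U_K=\tfrac12 p_1+A_K$. Each trailing $(0,0)$ step contributes nothing.

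\textbf{Step 3: telescoping average.} Weight $U_k$ by $2^{-k}$ for $k<K$ and $U_K$ by $2^{1-K}$. Since $A_0=0$,
\begin{align*}
\sum_{k=0}^{K-1}\frac{2A_k-A_{k+1}}{2^k}=-\frac{A_K}{2^{K-1}},
\end{align*}
so the $A$-terms cancel against those of $U_K$. The weighted sum therefore equals $\tfrac12p_1\cdot 2+K=p_1+K\le K+1$. The same weights applied to the sequence lengths $L_k$ (about $2^{k+1}$, plus the first step and the $(0,0)$ step) give a total weighted length of roughly $2(K+1)$ plus lower-order terms. If every sequence had average utility at least $c$, i.e.\ $U_k\ge cL_k$, we would get $K+1\ge c\sum_k w_kL_k$, which forces $c\lesssim \tfrac{K+1}{2(K+1)+O(1)}$. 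That is the claimed $0.5-\Theta\!\left(\tfrac1{K+1}\right)$, and taking the contrapositive yields a sequence with low average utility.

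\textbf{Main obstacle.} The delicate part is the exact bookkeeping of lengths and constants, so that the bound comes out as exactly $0.5-\tfrac{1}{2(K+1)}$ with strict inequality. This depends on whether the initial $(0.5,0)$ step and the final $(0,0)$ step are counted in the timestep normalization, and on handling $p_1\le 1$ sharply. I would first try defining the per-timestep average over the $2^{k+1}$ (respectively $2^K$) steps excluding the padding, and check that the weights $2^{-k}$ make the weighted lengths sum to exactly $2(K+1)$. If the constant is off, I would re-tune the weight on $U_K$, for instance using $2^{-K}$ with length $2^K$. I would also recheck Step 1 at the boundary step $t=2^{k+1}$, to confirm that $\pi^{(t)}$ there is chosen before the distinguishing ranking is revealed.
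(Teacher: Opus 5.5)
Your argument is essentially the paper's. Since $2^{-k}L_k=2^{1-K}L_K=2$, where $L_k=2^{k+1}$ ($k<K$) and $L_K=2^K$ are the sequences' own lengths, your weighted sum of cumulative utilities is exactly twice the plain sum of the $K+1$ per-timestep averages. So your telescoping identity is the paper's observation that this plain sum is identically $K/2$ for every algorithm. The paper verifies the identity timestep by timestep: each $x^{(t)}$ with $2^j\le t<2^{j+1}$ contributes the constant $2^{-(j+1)}$ whatever its value. You instead telescope over the partial sums $A_k$; the two computations are equivalent.

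Your ``main obstacle'' resolves as follows.

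\emph{Bookkeeping.} The lemma's sequences do not contain the initial $(0.5,0)$ step, so drop it from both the numerator and the denominator. The weighted utility is then exactly $K$, since the $\frac{1}{2}p_1$ terms that produced the extra $p_1$ disappear. The weighted length is exactly $\sum_{k<K}2^{-k}2^{k+1}+2^{1-K}2^K=2(K+1)$. This gives $\min_k U_k/L_k\le \frac{K}{2(K+1)}$. Keeping the initial step in both numerator and denominator only yields $\frac{K+1}{2(K+2)}$. Mixing them, with $K+p_1$ over $2(K+1)$, gives the vacuous bound $\frac{1}{2}$.

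\emph{Boundary step.} The check in Step 1 is fine. $\pi^{(2^{k+1})}$ is computed from rankings up to time $2^{k+1}-1$, all of which put $a$ first. The first ranking with $b$ ahead can only affect the strategy at the $(0,0)$ step.

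\emph{Strict inequality.} Do not try to prove it: because the sum of averages is identically $K/2$, an algorithm can make all averages equal. For example, with $K=1$, playing $a$ with probability $\frac{1}{2}$ at the first step of the block gives both sequences average exactly $\frac{1}{4}=\frac{1}{2}-\frac{1}{2(K+1)}$. The correct conclusion is ``at most''. That is what the paper's proof actually establishes, and it is all the hardness theorem needs.
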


By \Cref{lemma:average-utility-counterexample}, there exists a sequence with length $2^k$ for some $k\leq K$ such that the average utility per timestep achieved by the algorithm is less than $0.5-\frac{1}{2(K+1)}$. We will pick this sequence as the next $2^k$ utility vectors. If the current utility vector sequence is no less than $T$, then the hard instance is completed. Otherwise, we will establish  the following \emph{action-$b$ construction}:
\begin{align*}
    {\rm Sequence~}0=& (1, 0), (0,0)\\
    {\rm Sequence~}1=& (0, 1), (1, 0), (1, 0), (0,0)\\
    {\rm Sequence~}2=& (0, 1), (0, 1), (0, 1), (1, 0), (1, 0), (1, 0), (1, 0), (0,0)\\
    &\dots\\
    {\rm Sequence~}K-1=& \underbrace{(0, 1), ..., (0, 1)}_{2^{K-1}-1}, \underbrace{(1, 0), ..., (1, 0)}_{2^{K-1}}, (0,0)\\
    {\rm Sequence~}K=& \underbrace{(0, 1), ..., (0, 1)}_{2^K-1}, (0,0).
\end{align*}
Similarly, except for the last observation, action $b$ is the best action in all the observations. Similar to \Cref{lemma:average-utility-counterexample}, we can show that at least one of the sequences incurs average utility per timestep less than $0.5-\frac{1}{2(K+1)}$. We will add that sequence to the end of our hard instance.

Let $T'\geq T$ be the length of the final instance. Therefore, the average regret will be at least $\frac{1}{2(K+1)} - \frac{1}{T}\geq\Omega(\frac{1}{\log T})$. Because of the construction, the best action should get at least $0.5-\frac{1}{T}$ utility per timestep.%

When $\tau\leq \frac{1}{4T\log T}$, from the construction above, the difference between the \emph{cumulative} utility of the actions is always $0.5$. By the definition in \Cref{item:rank-avg}, at any given timestep the probability that an action with lower average utility is preferred over an action with higher average utility is $1-\sig\rbr{\frac{0.5}{T\tau}}\leq\cO\rbr{\frac{1}{T^2}}$. Applying a union bound over all timesteps, it follows that with probability at least $1-\cO\rbr{\frac{1}{T}}$, no such misranking occurs at any timestep. Hence, every permutation ranks the higher-utility action first throughout. In other words, with high probability, the algorithm incurs linear regret.
\end{proof}

\restate{lemma:average-utility-counterexample}

\begin{proof}
    Note that in this online learning setting, the strategy $\pi^{(t)}$ is determined by $\bu^{(1)},\dots,\bu^{(t-1)}$. Therefore, in all the sequences in the  action-$a$ construction, since action $a$ is the best in all the  observations,  for any two sequences $k_1\leq k_2$, the expectation of the strategies is the same for the first $2^{k_1+1}$ utility vectors. For simplicity, we will use $x^{(t)}$ to denote the probability of choosing action-$a$ at timestep $t$.

    The average utility at sequence 0 is $\frac{1-x^{(1)}}{2}$. The average utility at sequence 1 is $\frac{x^{(1)}}{4}+\frac{1-x^{(2)}}{4}+\frac{1-x^{(3)}}{4}$. We can see that  the utility contributed by $x^{(1)}$ to all the sequences is
    \begin{align*}
        \frac{1-x^{(1)}}{2}+\frac{x^{(1)}}{4}+\frac{x^{(1)}}{8}+...+\frac{x^{(1)}}{2^K}+\frac{x^{(1)}}{2^K}=\frac{1}{2}.
    \end{align*}
    Similarly, the contribution of $x^{(2)},x^{(3)}$ is $\frac{1}{4}$. The contribution of $x^{(4)},x^{(5)},\dots,x^{(7)}$ is $\frac{1}{8}$. Therefore, the total contribution of $x^{(1)},...,x^{(2^K-1)}$ is $\frac{K}{2}$. There are $K+1$ sequences in total, so that at least one of the sequences has average utility per timestep less than $\frac{K}{2(K+1)}=\frac{1}{2}-\frac{1}{2K+2}$.
\end{proof} 

\subsection{Proof of \Cref{lemma:rank-avg-bandit-hard}}

\restate{lemma:rank-avg-bandit-hard}

\begin{proof}

Consider the following two instances. Both of them satisfy $\cA=\cbr{a, b}$ and $K=1$.
\begin{align*}
    \text{Instance 1}=&\underbrace{(0.1, 0),\dots, (0.1,0)}_T,\underbrace{(0, 0.2),\dots, (0, 0.2)}_T,\underbrace{(0, 1),\dots, (0, 1)}_{2T}\\
    \text{Instance 2}=&\underbrace{(0.1, 0),\dots, (0.1,0)}_T,\underbrace{(0, 0.2),\dots, (0, 0.2)}_T,\underbrace{(0.4, 0.2),\dots, (0.4,0.2)}_{2T}.
\end{align*}
We call the first $T$ timesteps as the first phase, the next $T$ timesteps as the second phase, and the last $2T$ timesteps as the third phase.

For any online learning algorithm to achieve sublinear expected external regret, it must propose action $a$ for at least $0.9T$ timesteps during the first phase with probability at least $\frac{1}{2}$, since otherwise the expected external regret in the first phase is linear. With probability at least $\frac{1}{2}$, after proposing at least $0.9T$ timesteps of $a$ in the first phase, the online learner must propose action $b$ for at least $\frac{0.2T - 0.1 T - 0.01 T}{0.2}=0.45T$ timesteps during the second phase to achieve sublinear regret.
Then, at the end of the second phase, with probability at least $\frac{1}{4}$, $u^{(2T)}_{\rm empirical}(b)-u^{(2T)}_{\rm empirical}(a)\geq \frac{0.45T\cdot 0.2}{0.1T + 0.45 T} - 0.1=\frac{7}{110}$.

Intuitively, during the third phase, achieving sublinear regret on Instance 1 requires that action $a$ not be proposed too many times. However, if the learner proposes action $a$ too infrequently, it may fail to sample action $a$ sufficiently often to distinguish Instance 1 from Instance 2, which would then lead to linear regret.

Formally, during the third phase of Instance 1, the algorithm needs to propose $b$ for at least $\frac{0.2T + 2T - 0.2 T - 0.1 T - 0.01 T}{1}=1.89 T$ timesteps with probability at least $\frac{1}{2}$ after proposing at least $0.9T$ $a$ in the first phase and $0.45T$ $b$ in the second phase. In other words, $a$ is proposed by no more than $0.11 T$ times. Then, in Instance 2, throughout the third phase
\begin{align*}
    u^{(t)}_{\rm empirical}(b)-u^{(t)}_{\rm empirical}(a)\overset{(i)}{\geq}& u^{(2T + 0.11 T)}_{\rm empirical}(b)-u^{(2T + 0.11 T)}_{\rm empirical}(a)\\
    \geq& \frac{0.45T\cdot 0.2}{0.1T + 0.45 T} - \frac{0.9 T\cdot 0.1 + 0.11T\cdot 0.4}{0.9T+0.11T}\geq 0.03.
\end{align*}
$(i)$ follows because, among all ways of placing the $0.11T$ exploration proposals of action $a$ within the third phase, assigning them to the first $0.11T$ timesteps minimizes the gap between the empirical average utility of actions $b$ and $a$.

Therefore, when $\tau\to 0^+$, the observations of Instance 1 and Instance 2 are the same with probability at least $\frac{1}{8}$. Then, with probability at least $\frac{1}{8}$, according to the discussion above, any learning algorithm will satisfy one of the following:
\begin{itemize}
    \item Linear regret at timestep $T$;
    \item Linear regret at timestep $2T$;
    \item Linear regret at timestep $4T$ in either Instance 1 or Instance 2. 
\end{itemize}

When $\tau \leq \frac{200}{3\log T}$, at each timestep the action with the larger empirical average utility is ranked first with probability at least $1-\cO\rbr{T^{-2}}$. Taking a union bound over all $T$ timesteps, we obtain that this action is ranked first at every timestep with probability at least $1-\cO\rbr{T^{-1}}$. Therefore, it suffices to consider the limiting case $\tau \to 0^+$, which completes the proof.
\end{proof}

\section{Proof of \Cref{them: u est bound}}\label{proof of u est bound}

\begin{algorithm}[t]
\caption{\kzedit{Utility Estimation with Action Permutations:}   Estimate$\rbr{\cbr{\sigma^{(s)}}_{s=1}^{m'}}$}\label{alg: estimate}
\begin{algorithmic}[1]
\STATE \textbf{Input:} A set consisting of $m'$ permutations of actions :  $\cbr{\sigma^{(s)}}_{s=1}^{m'}$ \kzedit{with $|\sigma^{(s)}|=K$ for all $s\in[m']$,} and temperature $\tau>0$.
\FOR{$j=1,2,\dots,|\cA|-1$}
\FOR{$s=1,\dots,m'$}
\STATE Calculate $n_{j,1}^{(s)},n_{j,2}^{(s)}$ defined as
{\small
\begin{align*}
    &n_{j,1}^{(s)}\coloneqq \sum_{i,k\in \sbr{K}}\ind\rbr{\sigma^{(s)}\rbr{i}=a^j,\sigma^{(s)}\rbr{k}=a^{\abr{\cA}} \text{and}~ i <k},\\
    &n_{j,2}^{(s)}\coloneqq \sum_{i,k\in \sbr{K}}\ind\rbr{\sigma^{(s)}\rbr{i}=a^j,\sigma^{(s)}\rbr{k}=a^{\abr{\cA}} \text{and}~ i >k}.
\end{align*}
}
\ENDFOR
\STATE{Let $\cT_j\coloneqq \cbr{s\in [m']\colon n_{j,1}^{(s)} + n_{j,2}^{(s)}>0}$}
\STATE Let $\sig^{-1}(x)\colon (0,1)\to \RR\coloneqq \log\frac{x}{1-x}$ be the inverse function of $\sig(\cdot)$. The utility of action $a^j$ is then estimated as 
{\small
\begin{align*}
    \tilde{u}(a^j)=\begin{cases}
        \Proj{[-1, 1]}{\tau\sig^{-1}\rbr{ \frac{1}{\abr{\cT_j}}\cdot \sum_{s\in \cT_j}\rbr{\frac{n_{j,1}^{(s)}}{n_{j,1}^{(s)}+n_{j,2}^{(s)}}}}}&\abr{\cT_j}>0\\
        0&\abr{\cT_j}=0.
    \end{cases}
\end{align*}
}
\ENDFOR
\STATE Return $\tilde{\bu}=\rbr{\tilde{u}(a^1),\tilde{u}(a^2),\dots,\tilde{u}\rbr{a^{\abr{\cA}-1}},0}$
\end{algorithmic}
\end{algorithm}

In this section, we proved the high probability bound for the utility estimation error, and with that, we gave the regret upper bound of our algorithm under \Cref{item:rank-instant} feedback. 
Next, we will introduce the key lemma we used for utility estimation, \Cref{lemma: k same as 2}, which shows that the ranking of $K$ actions can be decomposed into pair-wise rankings.

\subsection{Pair-wise Utility Estimation}

Lemma \Cref{lemma: k same as 2} shows that, when the number of proposed actions is $K>2$, for any two distinct actions $a\neq b \in o^{(t)}$, the expected fraction of $(a,b)$-pairs (\emph{i.e.}, pairs formed by choosing one occurrence of $a$ and one occurrence of $b$ in the multiset $o^{(t)}$) for which $a$ is ranked ahead of $b$ in $\sigma^{(t)}$ equals
$$
\sig\rbr{\frac{u^{(t)}(a)-u^{(t)}(b)}{\tau}}.
$$
Equivalently, the induced pairwise marginal is the same as in the $K=2$ case: it matches the probability of observing the permutation $(a,b)$ when only $a$ and $b$ are proposed. Related pairwise-marginal identities appear in \citet[p. 396]{hunter2004mm-independence}; Lemma~\Cref{lemma: k same as 2} extends them to multisets (allowing repeated actions).

\begin{lemma}\label{lemma: k same as 2}
Let $\texttt{\#}_{\cS}\rbr{a}\coloneqq \sum_{a'\in\cS} \ind\rbr{a'=a}$ represent  the number of elements in a multiset $\cS$ that are equal to $a\in\cA$. For any utility vector $\bu$, temperature $\tau>0$, a multiset of proposed actions $\cS$ with cardinality $|\cS|=K$, and any two actions $a\not=b\in\cS$, we have
{
\begin{align*}
    &\frac{1}{\texttt{\#}_{\cS}\rbr{a}\cdot \texttt{\#}_{\cS}\rbr{b}}\EE_{\sigma}\sbr{\sum_{k_1=1}^K\sum_{k_2=k_1+1}^K \ind\rbr{\sigma(k_1)=a}\cdot \ind\rbr{\sigma(k_2)=b}\bigggiven \cS}=\sig\rbr{\frac{u(a)-u(b)}{\tau}}.
\end{align*}
}
The expectation is taken over the distribution of the permutation $\sigma$ under the ranking model \Cref{eq:PL-def}.
\end{lemma}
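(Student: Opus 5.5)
The plan is to prove the identity pairwise, occurrence by occurrence, using the Gumbel (exponential race) representation of the Plackett--Luce model. Write the multiset as $\cS=\{s_1,\dots,s_K\}$ with distinct labels for repeated copies. The key observation is that \Cref{eq:PL-def} is exactly the law of the ranking obtained by drawing independent $G_k\sim\mathrm{Gumbel}(0,1)$ and sorting the items in decreasing order of $r(s_k)/\tau + G_k$. Equivalently, draw independent exponential clocks $E_k\sim\mathrm{Exp}(\lambda_k)$ with $\lambda_k=\exp(r(s_k)/\tau)$ and rank the items by increasing $E_k$.

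First I will verify this representation. By memorylessness of exponentials, the first-ranked item is $s_k$ with probability $\lambda_k/\sum_j\lambda_j$. Conditioned on that event, the remaining clocks (shifted) are again independent exponentials with the same rates. Induction on $K$ then reproduces the product formula in \Cref{eq:PL-def}. This step is standard; the only care needed is that the labeled copies are distinguishable. Since equal-utility copies of $a$ are exchangeable, the distribution over the unlabeled sequence $\sigma$ is the one \Cref{eq:PL-def} assigns; I would note that PL on a multiset is understood with copies treated as separate items.

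Second, I rewrite the double sum. The quantity $\sum_{k_1<k_2}\ind(\sigma(k_1)=a)\ind(\sigma(k_2)=b)$ counts the pairs (copy $i$ of $a$, copy $j$ of $b$) in which copy $i$ is ranked ahead of copy $j$. By linearity of expectation, its expectation equals $\sum_{i,j}\PP(E_i<E_j)$. For any single pair of independent exponential clocks,
\[
\PP(E_i<E_j)=\frac{\lambda_i}{\lambda_i+\lambda_j}=\frac{e^{u(a)/\tau}}{e^{u(a)/\tau}+e^{u(b)/\tau}}=\sig\rbr{\frac{u(a)-u(b)}{\tau}},
\]
and this holds regardless of the other items. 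There are $\texttt{\#}_{\cS}(a)\cdot\texttt{\#}_{\cS}(b)$ such pairs, each contributing the same probability, so dividing by this count gives the claim.

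The main obstacle is the first step, namely making rigorous that PL on a multiset coincides with the exponential race on labeled copies. If I want to avoid that, an alternative is a direct induction on $K$ using the sequential-choice structure of \Cref{eq:PL-def}. Condition on the first-chosen item $c$. If $c$ is a copy of $a$, that copy beats all copies of $b$. If $c$ is a copy of $b$, it loses to all copies of $a$. Otherwise, nothing is decided. The remaining ranking is then PL on $\cS\setminus\{c\}$, to which the inductive hypothesis applies. Summing with weights $\lambda_c/\Lambda$, the claimed value $\#a\,\#b\,\lambda_a/(\lambda_a+\lambda_b)$ checks out algebraically: the $a$- and $b$-terms combine to $\Lambda$ times the target minus the third-party contributions. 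I would keep this induction as a backup verification.
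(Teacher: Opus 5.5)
Your proof is correct, but your main route differs from the paper's. The paper's argument is exactly your backup induction. It splits the expectation according to whether $\sigma(1)$ is a copy of $a$, a copy of $b$, or a third action. It weights each case using \Cref{lemma:marginal-probability}: the first position holds a copy of $c$ with probability $\#_{\mathcal S}(c)\,e^{u(c)/\tau}/\sum_{a'\in\mathcal S}e^{u(a')/\tau}$. It then applies the induction hypothesis on $|\mathcal S|$ to the multiset with that element removed; a leading copy of $a$ additionally contributes the $\#_{\mathcal S}(b)$ pairs it wins outright. Finally it checks the same cancellation you describe, so that the three weighted terms collapse to $\#_{\mathcal S}(a)\#_{\mathcal S}(b)\,e^{u(a)/\tau}/(e^{u(a)/\tau}+e^{u(b)/\tau})$.

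Your exponential-race (Gumbel-max) argument instead isolates the structural reason the identity holds. The probability that one labeled item precedes another is $\Pr(E_i<E_j)=\lambda_i/(\lambda_i+\lambda_j)$ regardless of what else is in $\mathcal S$. Linearity of expectation over the $\#_{\mathcal S}(a)\#_{\mathcal S}(b)$ labeled pairs then finishes the proof with no case analysis.

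The price is that you must first establish the random-utility representation of the Plackett--Luce model. That verification (memorylessness plus induction on $K$) is essentially the same sequential-choice induction the paper runs directly on the product formula. So the paper's proof is more self-contained. Yours is more conceptual, and it extends verbatim to any ranking obtained by sorting utilities perturbed by i.i.d.\ continuous noise, with the logistic function replaced by the distribution function of the noise difference; the algebraic induction does not extend this way.

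One precision point on the multiset convention. The paper also treats copies as distinct items, which is where the factor $\#_{\mathcal S}(a)$ in \Cref{lemma:marginal-probability} comes from. Under this convention, an unlabeled ordering has probability $\prod_{c}\#_{\mathcal S}(c)!$ (product over distinct actions $c$) times the product in \Cref{eq:PL-def}, not that product itself. This is harmless for your argument, since the count inside the expectation depends only on the unlabeled ordering.
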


The proof can be found in \Cref{section:omitted-proof-u-est-thm}. With \Cref{lemma: k same as 2}, the general cases where $K>2$ actions are proposed can be cast into the case with only two actions being proposed, by 
enumerating all possible action pairs. Therefore, to estimate $u^{(t)}(a^j)$ \kzedit{for some $a^j\in\cA$}, we will first construct an unbiased estimator of $\sig\rbr{\frac{u^{(s)}(a^j)-u^{(s)}(a^{|\cA|})}{\tau}}$ using \Cref{lemma: k same as 2}, for all timesteps $s\in [t-m+1,t]$ when both $a^j,a^{|\cA|}\in o^{(s)}$. Since we have assumed without loss of generality that $u^{(s)}(a^{|\cA|})=0$, these values coincide with $\sig\rbr{\frac{u^{(s)}(a^j)}{\tau}}$. Then, by Hoeffding's inequality and the monotonicity of the logistic function $\sig(\cdot)$, with high probability, the mean of the logistic function estimators will be bounded between the minimum and maximum of $\cbr{\sig\rbr{\frac{u^{(s)}(a^j)}{\tau}}}_{s=t-m+1}^t$. By \Cref{assumption:sublinear}, since the utility vectors are changing slowly, that mean can be shown close to $\sig\rbr{\frac{u^{(t)}(a^j)}{\tau}}$. \kzedit{With a good estimate of $\sig\rbr{\frac{u^{(t)}(a^j)}{\tau}}$, we can then take an inverse of $\sig(\cdot)$ to estimate $u^{(t)}(a^j)$.} This estimation algorithm is summarized in \Cref{alg: estimate} and analyzed in \Cref{them: u est bound} below.

In the following, we will prove \Cref{them: u est bound}, which gives the estimation error bound of the utility vector for each timestep.  

\restate{them: u est bound}

\begin{proof}

Due to the symmetry of timesteps, we will only prove \Cref{them: u est bound} for $t=m'$ for notational simplicity.

For any $j\in \sbr{\abr{\cA}-1}$, we assume that the probability for action $a^j$ being chosen at each timestep is at least $p$. Let \(m_1\) denote the number of timesteps (out of $m'$) in which both $a^j$ and $a^{|\cA|}$ occur in the proposed actions. By Hoeffding's Inequality, we have that with probability at least $1-\frac{\delta}{2}$:
\begin{align*}
    m_1\geq \EE\sbr{m_1} -\sqrt{\frac{m'}{2}\log\rbr{\frac{2}{\delta}}} \geq m'p^2-\sqrt{\frac{m'}{2}\log\rbr{\frac{2}{\delta}}}.
\end{align*}

    We define 
    \begin{align*}
        &n_{j,1}^{(s)}\coloneqq \sum_{i,k\in \sbr{K}}\ind\rbr{\sigma^{(s)}\rbr{i}=a^j,\sigma^{(s)}\rbr{k}=a^{\abr{\cA}} \text{and}~ i <k},\\
        &n_{j,2}^{(s)}\coloneqq \sum_{i,k\in \sbr{K}}\ind\rbr{\sigma^{(s)}\rbr{i}=a^j,\sigma^{(s)}\rbr{k}=a^{\abr{\cA}} \text{and}~ i >k},
    \end{align*}
    and the estimation of $u^{(m')}(a^j)$ as
    \begin{align*}
        \tilde{u}^{(m')}(a^j)=\Proj{[-1, 1]}{\tau\sig^{-1}\rbr{\frac{1}{m_1}\sum_{s=1}^T \ind\rbr{n_{j,1}^{(s)} + n_{j,2}^{(s)} > 0} \frac{n_{j,1}^{(s)}}{n_{j,1}^{(s)} + n_{j,2}^{(s)}}}},
    \end{align*}
    Hence,
    \begin{align*}
        m_1=\sum_{s=1}^{m'} \ind\rbr{n_{j,1}^{(s)} + n_{j,2}^{(s)} > 0}.
    \end{align*}

    By \Cref{lemma: k same as 2}, since $u^{(s)}(a^{|\cA|})=0$ for every $s\in\cbr{1,2,\dots,m'}$, and noting that $n_{j, 1}^{(s)} + n_{j, 2}^{(s)} = \#_{\sigma^{(s)}}(a^j)\cdot \#_{\sigma^{(s)}}(a^{|\cA|})$, we have
    \begin{align*}
        \EE_{\sigma^{(s)}}\sbr{\frac{n_{j, 1}^{(s)}}{n_{j, 1}^{(s)} + n_{j, 2}^{(s)}}}=\sig\rbr{\frac{u^{(s)}(a^j) - u^{(s)}(a^{|\cA|})}{\tau}} = \sig\rbr{\frac{1}{\tau} u^{(s)}(a^j)}.
    \end{align*}
    By Hoeffding's Inequality, we have that with probability at least $1-\frac{\delta}{2|\cA|}$,
    \begin{align*}
        \abr{\frac{1}{m_1}\sum_{s=1}^T \ind\rbr{n_{j,1}^{(s)} + n_{j,2}^{(s)} > 0} \rbr{\frac{n_{j,1}^{(s)}}{n_{j,1}^{(s)} + n_{j,2}^{(s)}}- \sig\rbr{\frac{1}{\tau} u^{(s)}(a^j)}}}\le \sqrt{\frac{1}{2m_1}\log\rbr{\frac{4|\cA|}{\delta}}}. 
    \end{align*}
    Let $u^{(m'),*}(a^j)\in[-1, 1]$ be the scalar satisfying
    \begin{align*}
        \sig\rbr{\frac{1}{\tau}u^{(m'),*}(a^j)}=\frac{1}{m_1}\sum_{s=1}^{m'} \ind\rbr{n_{j,1}^{(s)} + n_{j,2}^{(s)} > 0}\cdot \sig\rbr{\frac{1}{\tau} u^{(s)}(a^j)}.
    \end{align*}
    Since the logistic function is monotone and continuous, $u^{(m'),*}(a^j)$ is unique and must exist.
    Then, with probability at least $1-\frac{\delta}{2|\cA|}$, 
    \begin{align*}
    & \abr{\sig\rbr{\frac{1}{\tau}\tilde{u}^{(m')}(a^j)}-\sig\rbr{\frac{1}{\tau}u^{(m'), *}(a^j)}}\\
\overset{(i)}{\le}&\abr{\sig\rbr{\sig^{-1}\rbr{\frac{1}{m_1}\sum_{s=1}^T \ind\rbr{n_{j,1}^{(s)} + n_{j,2}^{(s)} > 0} \frac{n_{j,1}^{(s)}}{n_{j,1}^{(s)} + n_{j,2}^{(s)}}}} - \sig\rbr{\frac{1}{\tau}u^{(m'), *}(a^j)}}\\
=&\abr{\frac{1}{m_1}\sum_{s=1}^T \ind\rbr{n_{j,1}^{(s)} + n_{j,2}^{(s)} > 0} \frac{n_{j,1}^{(s)}}{n_{j,1}^{(s)} + n_{j,2}^{(s)}} - \sig\rbr{\frac{1}{\tau}u^{(m'), *}(a^j)}}\\
\le& \sqrt{\frac{1}{2m_1}\log\rbr{\frac{4|\cA|}{\delta}}}. 
    \end{align*}
    $(i)$ uses the monotonicity of $\sig$ and the property of projection. 
    
    For any $u\in [-1, 1]$, we have
    \begin{align*}
        \odv{\sig\rbr{\frac{u}{\tau}}}{u}&= \frac{1}{\tau}\sig\rbr{\frac{u}{\tau}}\rbr{1-\sig\rbr{-\frac{u}{\tau}}}\\&\ge \frac{1}{\tau}\sig\rbr{-\frac{1}{\tau}}\rbr{1-\sig\rbr{\frac{1}{\tau}}}\\
        &= \frac{1}{\tau}\rbr{\sig\rbr{-\frac{1}{\tau}}}^2=\frac{1}{\tau\rbr{e^{\frac{1}{\tau}}+1}^2}.
    \end{align*}
    Since $\sig$ is monotonic, by Taylor expansion and the fact that $\tilde {u}^{(m')}(a^j),u^{(m'), *}(a^j)\in [-1, 1]$, we get that with probability at least $1-\frac{\delta}{2|\cA|}$,
    \begin{align*}
        \abr{\tilde{u}^{(m')}(a^j)-u^{(m'), *}(a^j)}\le \tau\rbr{e^{\frac{1}{\tau}}+1}^2\sqrt{\frac{1}{2m_1}\log\rbr{\frac{4|\cA|}{\delta}}}. 
    \end{align*}
    
    \Cref{lemma: sigmoid in range} in the following shows that $u^{(m'), *}(a^j)$ is bounded between the minimum and maximum of $\cbr{u^{(s)}(a^j)}_{s=1}^{m'}$. Then, by further utilizing the assumption that the variation of the utility vectors is small, we can bound the distance between $\tilde\bu^{(m')}$ and $\bu^{(m')}$.
    \begin{lemma}\label{lemma: sigmoid in range}
    Let  $x_1,\dots,x_n\in\sbr{-1,1}$ and $\sig_{\rm avg}\coloneqq\frac{1}{n}\sum_{i=1}^n \sig(x_i)$,
    we have%
    \begin{align*}
        \min_{i\in [n]} x_i\leq \sig^{-1}\rbr{\sig_{\rm avg}} \leq\max_{i\in [n]} x_i.
    \end{align*}
\end{lemma}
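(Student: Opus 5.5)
The plan is to use only that the logistic function $\sig$ is continuous and strictly increasing on $\RR$, with inverse $\sig^{-1}\colon(0,1)\to\RR$ also strictly increasing. First we check that $\sig^{-1}(\sig_{\rm avg})$ is well defined. Every $\sig(x_i)$ lies in $(0,1)$, so the average $\sig_{\rm avg}$ is a convex combination of points of $(0,1)$ and therefore also lies in $(0,1)$, which is the domain of $\sig^{-1}$.

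Next, set $x_{\min}\coloneqq\min_{i\in[n]}x_i$ and $x_{\max}\coloneqq\max_{i\in[n]}x_i$. Since $\sig$ is monotone, every $i$ satisfies $\sig(x_{\min})\le\sig(x_i)\le\sig(x_{\max})$. Averaging these inequalities over $i\in[n]$ gives
\begin{align*}
\sig(x_{\min})\le\sig_{\rm avg}\le\sig(x_{\max}).
\end{align*}
Applying the increasing map $\sig^{-1}$ to all three terms preserves the inequalities. This yields $x_{\min}\le\sig^{-1}(\sig_{\rm avg})\le x_{\max}$, which is the claim.

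Equivalently, one can argue with the intermediate value theorem. The function $\sig$ is continuous on $[x_{\min},x_{\max}]$ and $\sig_{\rm avg}$ lies between its endpoint values, so some $y$ in that interval satisfies $\sig(y)=\sig_{\rm avg}$. Strict monotonicity makes this $y$ unique, so $y=\sig^{-1}(\sig_{\rm avg})$. This is also the existence and uniqueness of $u^{(m'),*}(a^j)$ used above.

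No step is a real obstacle; the only care needed is confirming that $\sig_{\rm avg}\in(0,1)$ so that $\sig^{-1}$ applies. The hypothesis $x_i\in[-1,1]$ is not needed for this lemma. It matters only downstream, because it places $u^{(m'),*}(a^j)$ in $[-1,1]$, where the derivative lower bound $\frac{1}{\tau(e^{1/\tau}+1)^2}$ holds. In the main proof, the lemma then gives $|u^{(m'),*}(a^j)-u^{(m')}(a^j)|\le\sum_{s}\|\bu^{(s+1)}-\bu^{(s)}\|_\infty$.
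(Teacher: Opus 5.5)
Your proof is correct and follows essentially the same route as the paper: monotonicity of $\sig$ gives $\sig(\min_i x_i)\le\sig_{\rm avg}\le\sig(\max_i x_i)$, and continuity plus strict monotonicity then place the unique preimage in $[\min_i x_i,\max_i x_i]$. Your explicit check that $\sig_{\rm avg}\in(0,1)$ and your observation that the hypothesis $x_i\in[-1,1]$ is not needed are both accurate; the paper leaves these points implicit.
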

The proof is postponed to \Cref{section:omitted-proof-u-est-thm}. Hence,
\begin{align*}
    u^{(m'), *}(a^j)\in \sbr{\min \cbr{ u^{(s)}(a^j)}_{s=1}^{m'},\max \cbr{ u^{(s)}(a^j)}_{s=1}^{m'}}.
\end{align*}

Finally,
\begin{align*}
        \abr{\tilde{u}^{(m')}(a^j)-u^{(m')}(a^j)}\leq& \tau\rbr{e^{\frac{1}{\tau}}+1}^2\sqrt{\frac{1}{2m_1}\log\rbr{\frac{4|\cA|}{\delta}}} + \abr{u^{(m'), *}(a^j)-u^{(m')}(a^j)}\\
        \leq& \tau\rbr{e^{\frac{1}{\tau}}+1}^2\sqrt{\frac{1}{2m_1}\log\rbr{\frac{4|\cA|}{\delta}}}+ \max_{s\in \cbr{1,2,...,m'-1}} \abr{u^{(s)}(a^j)-u^{(m')}(a^j)}\\ 
        \le&  \tau\rbr{e^{\frac{1}{\tau}}+1}^2\sqrt{\frac{1}{2m_1}\log\rbr{\frac{4|\cA|}{\delta}}}+ \sum_{s=1}^{m'-1}\abr{u^{(s+1)}(a^j)-u^{(s)}(a^j)}.
    \end{align*}

    When $m'p^4\ge 2\log\rbr{\frac{2}{\delta}}$, with probability at least $1-\frac{\delta}{2}$,
    \begin{align*}
        m_1\ge\frac{m'}{2}p^2.
    \end{align*}
    By union bound, with a probability at least $1-(\frac{\delta}{2} + \frac{\delta}{2|\cA|})$, we have
    \begin{align*}
        \abr{\tilde{u}^{(m')}(a^j)-u^{(m')}(a^j)}\le\tau\rbr{e^{\frac{1}{\tau}}+1}^2\sqrt{\frac{1}{m'p^2}\log\rbr{\frac{4|\cA|}{\delta}}}+ \sum_{s=1}^{m'-1}\abr{u^{(s+1)}(a^j)-u^{(s)}(a^j)}.
    \end{align*}
    Since $a^j$ can be any action other than $a^{|\cA|}$, by applying union bound, the following holds with probability at least $1-\delta$,
    \begin{equation*}
        \nbr{\tilde{\bu}^{(m')}-\bu^{(m')}}_\infty\le  \frac{\tau\rbr{e^{\frac{1}{\tau}}+1}^2}{p}\sqrt{\frac{\log\rbr{\frac{4|\cA|}{\delta}}}{m'}}+ \sum_{s=1}^{m'-1}\nbr{\bu^{(s+1)}-\bu^{(s)}}_\infty.\qedhere
    \end{equation*}
\end{proof}

\begin{remark}\label{remark: equivalence of sigmoid projection}
Due to the monotonicity of the logistic function $\sig$, the following two projections on $\sig\rbr{\frac{x}{\tau}}$ are equivalent:
\begin{align*}
    \Proj{[\sig(-\frac{1}{\tau}),\sig(\frac{1}{\tau})]}{\sig\rbr{\frac{x}{\tau}}}\coloneqq&\min\rbr{\max\rbr{\sig\rbr{\frac{x}{\tau}},\sig\rbr{-\frac{1}{\tau}}},\sig\rbr{\frac{1}{\tau}}},\\
    \sig\rbr{\frac{\Proj{[-1,1]}{x}}{\tau}}\coloneqq&\sig\rbr{\frac{\min\rbr{\max\rbr{x,-1},1}}{\tau}}.
\end{align*}
    
\end{remark}

\subsection{Omitted Proofs}
\label{section:omitted-proof-u-est-thm}

\restate{lemma: k same as 2}

\begin{proof}
    
We will abuse the notion $\overset{\sigma}{>}$ from permutations to subsets of actions. When proposing a set of actions $\cS$, let $a\overset{\cS}{>}b$ denote the event that $a$ is ahead of $b$ in the permutation given by the environment.

In PL model, the probability that action $a$ ranks before action $b$ is that
    \begin{align*}
        \PP\rbr{a\overset{\cbr{a,b}}{<}b \given \tau, \bu}= \frac{\exp \rbr{\frac{1}{\tau}u(a)}}{\exp \rbr{\frac{1}{\tau}u(a)}+\exp \rbr{\frac{1}{\tau}u\rbr{b}}}.
    \end{align*}
By definition, let the multiset of the $K$ proposed actions be $\cS$. Then, the probability of the K-wise permutation is
\begin{align}\label{permuatation probability}
        \PP\rbr{\sigma \given \cS, \tau, \bu} = \prod_{k_1=1}^K\frac{\exp\rbr{\frac{1}{\tau}u\rbr{\sigma\rbr{k_1}}}}{\sum_{k_2=k_1}^K\exp\rbr{\frac{1}{\tau}u\rbr{\sigma\rbr{k_2}}}}.
\end{align}
    Recall that $\Sigma(\cS)$ denotes the set that contains all the permutations of \kzedit{the elements in} $\cS$. Hence, we have 
    \begin{align}
        &\EE\sbr{\sum_{k_1=1}^K\sum_{k_2=k_1+1}^K \ind\rbr{\sigma(k_1)=a}\cdot \ind\rbr{\sigma(k_2)=b}\bigggiven \cS}\notag\\
        =& \sum_{\sigma\in \Sigma\rbr{\cS}} \PP\rbr{\sigma\given \cS, \tau, \bu} \sum_{k_1=1}^K\sum_{k_2=k_1+1}^K \ind\rbr{\sigma(k_1)=a}\cdot \ind\rbr{\sigma(k_2)=b}\notag\\
        =&\sum_{\substack{\sigma\in \Sigma\rbr{\cS}\colon\\ \sigma(1)=a}} \PP\rbr{\sigma\given \cS, \tau, \bu} \sum_{k_1=1}^K\sum_{k_2=k_1+1}^K \ind\rbr{\sigma(k_1)=a}\cdot \ind\rbr{\sigma(k_2)=b}\label{eq:permutation-a-first}\\
        &+\sum_{\substack{\sigma\in \Sigma\rbr{\cS}\colon\\ \sigma(1)=b}} \PP\rbr{\sigma\given \cS, \tau, \bu} \sum_{k_1=1}^K\sum_{k_2=k_1+1}^K \ind\rbr{\sigma(k_1)=a}\cdot \ind\rbr{\sigma(k_2)=b}\label{eq:permutation-b-first}\\
        &+\sum_{\substack{\sigma\in \Sigma\rbr{\cS}\colon\\ \sigma(1)\not\in\cbr{a,b}}} \PP\rbr{\sigma\given \cS, \tau, \bu} \sum_{k_1=1}^K\sum_{k_2=k_1+1}^K \ind\rbr{\sigma(k_1)=a}\cdot \ind\rbr{\sigma(k_2)=b}.\label{eq:permutation-other-first}
    \end{align}
    
    We deal with  \Cref{eq:permutation-a-first} first:
    \begin{align*}
        &\sum_{\substack{\sigma\in \Sigma\rbr{\cS}\colon\\ \sigma(1)=a}} \PP\rbr{\sigma\given \cS, \tau, \bu} \sum_{k_1=1}^K\sum_{k_2=k_1+1}^K \ind\rbr{\sigma(k_1)=a}\cdot \ind\rbr{\sigma(k_2)=b}\\
        =&\sum_{\substack{\sigma\in \Sigma\rbr{\cS}\colon\\ \sigma(1)=a}} \PP\rbr{\sigma\given \cS, \tau, \bu} \rbr{\texttt{\#}_{\cS}(b)+\sum_{k_1=2}^K\sum_{k_2=k_1+1}^K \ind\rbr{\sigma(k_1)=a}\cdot \ind\rbr{\sigma(k_2)=b}}\\
        =& \texttt{\#}_{\cS}(b)\PP\rbr{\sigma(1)=a\given \cS, \tau, \bu} +\sum_{\substack{\sigma\in \Sigma\rbr{\cS}\colon\\ \sigma(1)=a}} \PP\rbr{\sigma\given \cS, \tau, \bu} \sum_{k_1=2}^K\sum_{k_2=k_1+1}^K \ind\rbr{\sigma(k_1)=a}\cdot \ind\rbr{\sigma(k_2)=b}\\
        =& \texttt{\#}_{\cS}(b)\PP\rbr{\sigma(1)=a\given \cS, \tau, \bu} \\
        &+\PP\rbr{\sigma(1)=a\given \cS, \tau, \bu}\sum_{\sigma\in\Sigma\rbr{\cS\setminus\cbr{a}}} \PP\rbr{\sigma\given \cS\setminus\cbr{a}, \tau, \bu} \sum_{k_1=2}^K\sum_{k_2=k_1+1}^K \ind\rbr{\sigma(k_1)=a}\cdot \ind\rbr{\sigma(k_2)=b}\\
        =&\texttt{\#}_{\cS}(b)\PP\rbr{\sigma(1)=a\given \cS, \tau, \bu} +\PP\rbr{\sigma(1)=a\given \cS, \tau, \bu}\EE\sbr{\sum_{k_1=1}^{K-1}\sum_{k_2=k_1+1}^{K-1} \ind\rbr{\sigma(k_1)=a}\cdot \ind\rbr{\sigma(k_2)=b}\given \cS\setminus\cbr{a}}.
    \end{align*}
    
    Similarly, for \Cref{eq:permutation-b-first}, we have
    \begin{align*}
        &\sum_{\substack{\sigma\in \Sigma\rbr{\cS}\colon\\ \sigma(1)=b}} \PP\rbr{\sigma\given \cS, \tau, \bu} \sum_{k_1=1}^K\sum_{k_2=k_1+1}^K \ind\rbr{\sigma(k_1)=a}\cdot \ind\rbr{\sigma(k_2)=b}\\
        =&\PP\rbr{\sigma(1)=b\given \cS, \tau, \bu} \EE\sbr{\sum_{k_1=1}^{K-1}\sum_{k_2=k_1+1}^{K-1} \ind\rbr{\sigma(k_1)=a}\cdot \ind\rbr{\sigma(k_2)=b}\given \cS\setminus\cbr{b}} .
    \end{align*}
    Let ${\rm Unique}\rbr{\cS}$ be the set of non-repeated elements in $\cS$. Then, \Cref{eq:permutation-other-first} can be written as,
    \begin{align*}
        &\sum_{\substack{\sigma\in \Sigma\rbr{\cS}\colon\\ \sigma(1)\not\in\cbr{a,b}}} \PP\rbr{\sigma\given \cS, \tau, \bu} \sum_{k_1=1}^K\sum_{k_2=k_1+1}^K \ind\rbr{\sigma(k_1)=a}\cdot \ind\rbr{\sigma(k_2)=b}\\
        =&\sum_{\substack{c\in{\rm Unique}\rbr{\cS}\colon\\ c\not\in\cbr{a,b}}}\PP\rbr{\sigma(1)=c\given \cS, \tau, \bu} \EE\sbr{\sum_{k_1=1}^{K-1}\sum_{k_2=k_1+1}^{K-1} \ind\rbr{\sigma(k_1)=a}\cdot \ind\rbr{\sigma(k_2)=b}\given \cS\setminus\cbr{c}}.
    \end{align*}
    Next, we will use induction to show that for any actions $a\not=b\in\cS$, the following holds:
    \begin{align}
        \EE\sbr{\sum_{k_1=1}^K\sum_{k_2=k_1+1}^K \ind\rbr{\sigma(k_1)=a}\cdot \ind\rbr{\sigma(k_2)=b}\bigggiven \cS}=\texttt{\#}_{\cS}(a)\texttt{\#}_{\cS}(b)\sig\rbr{\frac{u(a)-u(b)}{\tau}}.\label{eq:pair-sum-expectation}
    \end{align}
    \textbf{Base case.} When $\texttt{\#}_{\cS}(a)=0$ or $\texttt{\#}_{\cS}(b)=0$, \Cref{eq:pair-sum-expectation} trivially holds. When $|\cS|=2$ and $\texttt{\#}_{\cS}(a)=\texttt{\#}_{\cS}(b)=1$,
    \begin{align*}
        \EE\sbr{\sum_{k_1=1}^K\sum_{k_2=k_1+1}^K \ind\rbr{\sigma(k_1)=a}\cdot \ind\rbr{\sigma(k_2)=b}\given \cS}=&\PP\rbr{\sigma=\rbr{(a,b)}\given \cbr{a,b}, \tau, \bu}\\
        =&\sig\rbr{\frac{u(a)-u(b)}{\tau}}\\
        =&\texttt{\#}_{\cbr{a,b}}(a)\texttt{\#}_{\cbr{a,b}}(b)\sig\rbr{\frac{u(a)-u(b)}{\tau}}.
    \end{align*}

    \begin{lemma}
    \label{lemma:marginal-probability}
        For any utility vector $\bu$, temperature $\tau>0$, and a multiset of actions $\cS$, the marginal probability of any action $a\in\cA$ ranking at the first place of the permutation can be written as
        \begin{align*}
            \PP\rbr{\sigma(1)=a\given \cS, \tau, \bu}=\texttt{\#}_{\cS}\rbr{a} \frac{\exp\rbr{\frac{1}{\tau}u\rbr{a}}}{\sum_{a'\in \cS} \exp\rbr{\frac{1}{\tau}u\rbr{a'}}}.
        \end{align*}
    \end{lemma}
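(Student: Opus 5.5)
The plan is to sum the \Cref{eq:PL-def} probabilities directly over every permutation whose first entry equals $a$, and to treat the $K$ slots of the multiset $\cS$ as distinguishable copies. Write $w(c)\coloneqq\exp\rbr{\frac{1}{\tau}u(c)}$ and $W\coloneqq\sum_{a'\in\cS}w(a')$, where the sum counts multiplicity. For a fixed permutation $\sigma$ with $\sigma(1)=a$, the product in \Cref{eq:PL-def} factors as
\begin{align*}
    \PP\rbr{\sigma\given\cS,\tau,\bu}=\frac{w(a)}{W}\cdot\prod_{k_1=2}^K\frac{w\rbr{\sigma(k_1)}}{\sum_{k_2=k_1}^K w\rbr{\sigma(k_2)}}.
\end{align*}
The second factor is the PL probability of the tail $(\sigma(2),\dots,\sigma(K))$ as a ranking of the multiset $\cS\setminus\cbr{a}$ (one copy of $a$ removed). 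Its denominators are sums over exactly the remaining elements of that smaller multiset.

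Summing over all tails, the PL probabilities on $\cS\setminus\cbr{a}$ sum to one. This is a standard telescoping fact: sum out the last position, then the next-to-last, and so on, since each normalized factor sums to $1$ over the choice of the current element. The sum of $\PP\rbr{\sigma\given\cS,\tau,\bu}$ over permutations that begin with a designated copy of $a$ is therefore $w(a)/W$.

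The main subtlety is how repeated elements are counted in $\Sigma(\cS)$. I will adopt the convention already used implicitly in the proof of \Cref{lemma: k same as 2}, where $\sum_{\sigma(1)=c}$ splits by which element comes first. Under this convention $\Sigma(\cS)$ enumerates orderings of the $K$ labeled slots, so the event $\sigma(1)=a$ is the disjoint union of $\texttt{\#}_{\cS}(a)$ events, one for each copy of $a$ placed first. Each of these events has probability $w(a)/W$ by the step above, and adding them gives the claimed $\texttt{\#}_{\cS}(a)\,w(a)/W$.

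As a consistency check, these marginals sum to one over ${\rm Unique}(\cS)$, because $\sum_{c}\texttt{\#}_{\cS}(c)w(c)=W$. This confirms the labeled-slot normalization. For $a\notin\cS$ both sides are $0$.
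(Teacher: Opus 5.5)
Your proposal is correct and follows essentially the same route as the paper. You factor the first Plackett--Luce term $w(a)/W$ out of every permutation beginning with $a$, pick up the multiplicity $\texttt{\#}_{\cS}(a)$ from the copies of $a$ that can occupy the first slot, and use that the tail probabilities over $\Sigma(\cS\setminus\{a\})$ sum to one; your explicit labeled-slot convention and telescoping normalization argument make precise two points the paper's proof uses only implicitly.
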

    The proof is presented later in this section.

    \textbf{Induction step.} When \Cref{eq:pair-sum-expectation} holds for any $\cS$ with $|\cS|=K-1$. Then, we will show that it still holds for any $\cS$ with $|\cS|=K$. 
    By \Cref{lemma:marginal-probability}, \Cref{eq:permutation-a-first} is equal to
    \begin{align*}
        &\texttt{\#}_{\cS}(b)\PP\rbr{\sigma(1)=a\given \cS, \tau, \bu} \\
        &+\PP\rbr{\sigma(1)=a\given \cS, \tau, \bu}\sum_{\sigma\in\Sigma\rbr{\cS\setminus\cbr{a}}} \PP\rbr{\sigma\given \cS\setminus\cbr{a}, \tau, \bu} \sum_{k_1=2}^K\sum_{k_2=k_1+1}^K \ind\rbr{\sigma(k_1)=a}\cdot \ind\rbr{\sigma(k_2)=b}\\
        =& \texttt{\#}_{\cS}(a)\texttt{\#}_{\cS}(b)\frac{\exp\rbr{\frac{1}{\tau}u\rbr{a}}}{\sum_{a'\in \cS} \exp\rbr{\frac{1}{\tau}u\rbr{a'}}} \\
        &+ \texttt{\#}_{\cS}(a)\frac{\exp\rbr{\frac{1}{\tau}u\rbr{a}}}{\sum_{a'\in \cS} \exp\rbr{\frac{1}{\tau}u\rbr{a'}}} \rbr{\texttt{\#}_{\cS\setminus\cbr{a}}(a) \cdot\texttt{\#}_{\cS\setminus\cbr{a}}(b) \sig\rbr{\frac{u(a)-u(b)}{\tau}}} \\
        =& \texttt{\#}_{\cS}(a)\texttt{\#}_{\cS}(b)\frac{\exp\rbr{\frac{1}{\tau}u\rbr{a}}}{\sum_{a'\in \cS} \exp\rbr{\frac{1}{\tau}u\rbr{a'}}} + \texttt{\#}_{\cS}(a)\frac{\exp\rbr{\frac{1}{\tau}u\rbr{a}}}{\sum_{a'\in \cS} \exp\rbr{\frac{1}{\tau}u\rbr{a'}}} \rbr{\texttt{\#}_{\cS}(a)-1} \cdot\texttt{\#}_{\cS}(b) \sig\rbr{\frac{u(a)-u(b)}{\tau}}.
    \end{align*}
    Similarly, \Cref{eq:permutation-b-first} is equal to
    \begin{align*}
        \texttt{\#}_{\cS}(b)\frac{\exp\rbr{\frac{1}{\tau}u\rbr{b}}}{\sum_{a'\in \cS} \exp\rbr{\frac{1}{\tau}u\rbr{a'}}} \texttt{\#}_{\cS}(a)\cdot \rbr{\texttt{\#}_{\cS}(b)-1} \sig\rbr{\frac{u(a)-u(b)}{\tau}},
    \end{align*}
    and \Cref{eq:permutation-other-first} is equal to
    \begin{align*}
        \rbr{1-\frac{\texttt{\#}_{\cS}(a)\exp\rbr{\frac{1}{\tau}u\rbr{a}} + \texttt{\#}_{\cS}(b)\exp\rbr{\frac{1}{\tau}u\rbr{b}}}{\sum_{a'\in \cS} \exp\rbr{\frac{1}{\tau}u\rbr{a'}}}} \texttt{\#}_{\cS}(a) \cdot\texttt{\#}_{\cS}(b) \sig\rbr{\frac{u(a)-u(b)}{\tau}}.
    \end{align*}
    Lastly, by summing them up, we have
    \begin{align*}
        &\EE\sbr{\sum_{k_1=1}^K\sum_{k_2=k_1+1}^K \ind\rbr{\sigma(k_1)=a}\cdot \ind\rbr{\sigma(k_2)=b}\given \cS}\\
        =&\texttt{\#}_{\cS}(a)\texttt{\#}_{\cS}(b)\frac{\exp\rbr{\frac{1}{\tau}u\rbr{a}}}{\sum_{a'\in \cS} \exp\rbr{\frac{1}{\tau}u\rbr{a'}}} - \texttt{\#}_{\cS}(a)\frac{\exp\rbr{\frac{1}{\tau}u\rbr{a}} + \exp\rbr{\frac{1}{\tau}u\rbr{b}}}{\sum_{a'\in \cS} \exp\rbr{\frac{1}{\tau}u\rbr{a'}}} \cdot\texttt{\#}_{\cS}(b) \sig\rbr{\frac{u(a)-u(b)}{\tau}}\\
        &+ \texttt{\#}_{\cS}(a) \cdot\texttt{\#}_{\cS}(b) \sig\rbr{\frac{u(a)-u(b)}{\tau}}.
    \end{align*}
    Note that $\sig\rbr{\frac{u(a)-u(b)}{\tau}}=\frac{\exp\rbr{\frac{u(a)-u(b)}{\tau}}}{\exp\rbr{\frac{u(a)-u(b)}{\tau}}+1}=\frac{\exp\rbr{\frac{u(a)}{\tau}}}{\exp\rbr{\frac{u(a)}{\tau}}+\exp\rbr{\frac{u(b)}{\tau}}}$. Therefore,
    \begin{equation*}
        \EE\sbr{\sum_{k_1=1}^K\sum_{k_2=k_1+1}^K \ind\rbr{\sigma(k_1)=a}\cdot \ind\rbr{\sigma(k_2)=b}\bigggiven \cS}=\texttt{\#}_{\cS}(a) \cdot\texttt{\#}_{\cS}(b) \sig\rbr{\frac{u(a)-u(b)}{\tau}},
    \end{equation*}
    and we complete the induction.
\end{proof}

\restate{lemma:marginal-probability}

\begin{proof}
    Let $\Sigma\rbr{\cS}$ be the set containing all permutations of $\cS$. %
    By definition, for any action $a\in\cA$,
    \begin{align*}
        \PP\rbr{\sigma(1)=a \given \cS, \tau, \bu} =\sum_{\substack{\sigma\in \Sigma\rbr{\cS}\colon\\ \sigma(1)=a}} \PP\rbr{\sigma \given \cS, \tau, \bu}=&\sum_{\substack{\sigma\in \Sigma\rbr{\cS}\colon\\ \sigma(1)=a}} \prod_{k_1=1}^{|\cS|}\frac{\exp\rbr{\frac{1}{\tau}u\rbr{\sigma\rbr{k_1}}}}{\sum_{k_2=k_1}^{|\cS|}\exp\rbr{\frac{1}{\tau}u\rbr{\sigma\rbr{k_2}}}}.
    \end{align*}
    Since there are $\texttt{\#}_{\cS}(a)$ action $a$ in $\cS$, by rearranging the terms, we have%
    \begin{align*}
        \PP\rbr{\sigma(1)=a \given \cS, \tau, \bu}=&\texttt{\#}_{\cS}(a)\frac{\exp\rbr{\frac{1}{\tau}u\rbr{a}}}{\sum_{a'\in\cS} \exp\rbr{\frac{1}{\tau}u\rbr{a'}}}\sum_{\sigma\in\Sigma\rbr{\cS\setminus\cbr{a}}} \prod_{k_1=1}^{|\cS|-1}\frac{\exp\rbr{\frac{1}{\tau}u\rbr{\sigma\rbr{k_1}}}}{\sum_{k_2=k_1}^{|\cS|-1}\exp\rbr{\frac{1}{\tau}u\rbr{\sigma\rbr{k_2}}}}\\
        =&\texttt{\#}_{\cS}(a)\frac{\exp\rbr{\frac{1}{\tau}u\rbr{a}}}{\sum_{a'\in\cS} \exp\rbr{\frac{1}{\tau}u\rbr{a'}}}\sum_{\sigma\in\Sigma\rbr{\cS\setminus\cbr{a}}} \PP\rbr{\sigma\given \cS\setminus\cbr{a}, \tau, \bu}\\
        =&\texttt{\#}_{\cS}(a)\frac{\exp\rbr{\frac{1}{\tau}u\rbr{a}}}{\sum_{a'\in\cS} \exp\rbr{\frac{1}{\tau}u\rbr{a'}}}.\qedhere
    \end{align*}
\end{proof}

\restate{lemma: sigmoid in range}

\begin{proof}
    The logistic function $\sig(x)$ is increasing monotonically with respect to $x$, since $\odv{\sig}{x}=\frac{\exp(x)}{\rbr{\exp(x)+1}^2}>0$. Then, without loss of generality, let $x_1\leq x_2\leq \dots\leq x_n$. Thus,  $\sig(x_1)\leq \sig_{\rm avg} \leq \sig(x_n)$.
    
    Since $\sig(x)$ is monotonic and continuous, there exists only one $\zeta\in \sbr{x_1,x_n}$ such that $\sig(\zeta)=\sig_{\rm avg}$. \qedhere
\end{proof}

\section{Algorithms and Diagrams}

In this section, we present the algorithms' diagrams and pseudo-code of learning with \Cref{item:rank-instant} and \Cref{item:rank-avg} individually.

\subsection{The Algorithm and Diagram for \Cref{item:rank-instant}}

We present the diagram and the algorithm pseudo-code of learning with \Cref{item:rank-instant}: \Cref{fig:instant-alg} and \Cref{alg: Instant Reward}.

\begin{figure}
    \centering
\begin{tikzpicture}

    \draw[dashed, rounded corners=12pt, blue] (-5,-1.5) rectangle (7.2,2.5);
    \node[above, xshift=0cm] at (current bounding box.north) {Timestep $t$ (Full-Information)};

    \node (estimate) [process] at (0,0) {Estimate$(\cdot)$};
    \node (alg) [process, right of=estimate, xshift=3.5cm] {Oracle ${\rm Alg}(\cdot)$};

    \draw [arrow] (-5.5,0) -- (estimate) node[midway, above] {$\cbr{\sigma^{(s)}}_{s=\max\cbr{t-m+1,1}}^t$};
    \draw [arrow] (estimate) -- (alg) node[midway, above] {$\tilde\bu^{(t)}$};
    \draw [arrow] (alg) -- ++(3.1,0) node[midway, above] {$\pi^{(t+1)}$};
    
    \draw [arrow] (estimate.north) ++(-5.5,1) -- ++(10,0) node[midway, above] {$\rbr{\tilde\bu^{(s)}}_{s=1}^{(t-1)}$} -- (alg.north);

\end{tikzpicture}
\vskip 1cm
\begin{tikzpicture}

    \draw[dashed, rounded corners=12pt, blue] (-5,-1.5) rectangle (8.4,2.5);
    \node[above, xshift=0cm] at (current bounding box.north) {Timestep $t$ (Bandit)};

    \node (estimate) [process] at (0,0) {Estimate$(\cdot)$};
    \node (alg) [process, right of=estimate, xshift=3.5cm] {Oracle ${\rm Alg}(\cdot)$};

    \draw [arrow] (-5.5,0) -- (estimate) node[midway, above] {$\cbr{\sigma^{(s)}}_{s=\max\cbr{t-m+1,1}}^t$};
    \draw [arrow] (estimate) -- (alg) node[midway, above] {$\tilde\bu^{(t)}$};
    \draw [arrow] (alg) -- (6.8,0) node (plus) [draw,circle, minimum size=0.3cm, inner sep=0pt, right, font=\Large] {+} node[midway, above, font=\tiny] {$\cdot(1-\gamma)$};
    \draw [arrow] (6.95,2.8) --  (plus) node[midway, left] {$\gamma\frac{\one\rbr{\cA}}{|\cA|}$};
    \draw [arrow] (plus) -- ++(1.7,0) node[midway, above] {$\pi^{(t+1)}$};
    
    \draw [arrow] (estimate.north) ++(-5.5,1) -- ++(10,0) node[midway, above] {$\rbr{\tilde\bu^{(s)}}_{s=1}^{(t-1)}$} -- (alg.north);

\end{tikzpicture}

\caption{The diagram of \Cref{alg: Instant Reward} with \Cref{item:rank-instant} under full-information feedback (top) and bandit feedback (bottom). $\textcircled{+}$ represents the addition of $(1-\gamma)$ times the output the ${\rm Alg}$ and $\gamma$ times a uniform distribution over $\cA$.}
\label{fig:instant-alg}
\end{figure}
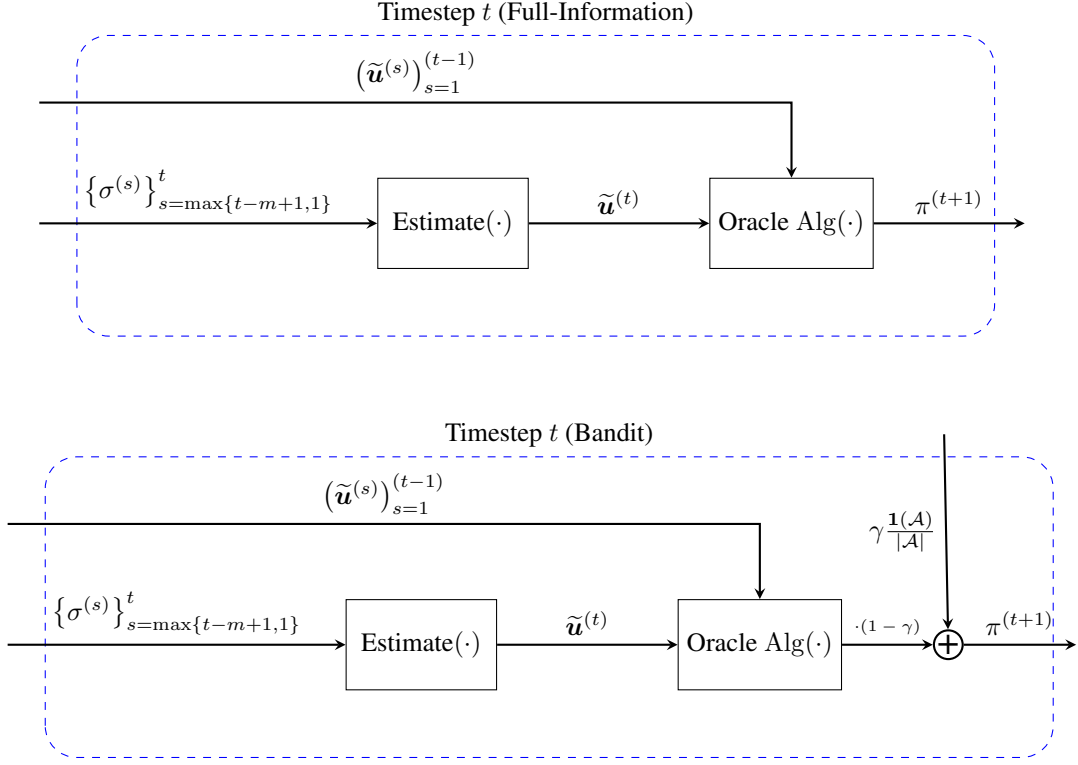

\begin{algorithm}[!h]
\caption{\kzedit{Online} Learning with \Cref{item:rank-instant} \kzedit{Feedback}}\label{alg: Instant Reward}
\begin{algorithmic}[1]
\STATE \textbf{Input:} Action space $\cA$, \kzedit{any full-information} no-regret learning algorithm ${\rm Alg}$ with numeric utility feedback, selected action number $K$, estimation window size $m$, and exploration rate $\gamma$.
\STATE Initialize $\pi^{(1)}$ as uniform distribution $\frac{\one}{|\cA|}$ over $\cA$
\FOR{timestep $t=1,2,\dots,T$}
\IF{Full-information setting}
    \STATE $K=\abr{\cA}$  in  this case. Select all $\abr{\cA}$ actions.
\ELSIF{Bandit setting}
    \STATE Sample $K$ actions independently with replacement from $\pi^{(t)}$.
\ENDIF
\STATE Receive a ranking feedback $\sigma^{(t)}=\big(\sigma^{(t)}(1),\sigma^{(t)}(2),\allowbreak\dots,\sigma^{(t)}(K)\big)$ from the environment.
\STATE{$\tilde{\bu}^{(t)}=\text{Estimate$\rbr{\cbr{\sigma^{(s)}}_{s=\max\cbr{t-m+1,1}}^t}$}$ by calling \Cref{alg: estimate}.}
\IF{Full-information setting}
\STATE{$\pi^{(t+1)}\leftarrow {\rm Alg}\rbr{\rbr{\tilde \bu^{(s)}}_{s=1}^t}$.}
\ELSIF{Bandit setting}
    \STATE{$\pi^{(t+1)}\leftarrow (1-\gamma){\rm Alg}\rbr{\rbr{\tilde \bu^{(s)}}_{s=1}^t} + \gamma \frac{\one\rbr{\cA}}{|\cA|}$.}
\ENDIF
\ENDFOR
\end{algorithmic}
\end{algorithm}

\subsection{The Algorithm and Diagram for \Cref{item:rank-avg}}

We present the diagram and the algorithm pseudo-code of learning with \Cref{item:rank-avg}: \Cref{fig:avg-alg} and \Cref{alg: FTRL}.

\begin{figure}
    \centering
\begin{tikzpicture}

    \draw[dashed, rounded corners=12pt, blue] (-4,-1.5) rectangle (8.7,1.5);
    \node[above, xshift=0cm] at (current bounding box.north) {Timestep $t$ (Full-Information)};

    \node (estimate) [process] at (0,0) {Estimate$(\cdot)$};
    \node (copy) [right of=estimate, xshift=2.0cm] {\CopyLogo[scale=0.3]};
    \node (alg) [process, right of=copy, xshift=2cm] {Oracle ${\rm Alg}(\cdot)$};

    \draw [arrow] (-4.5,0) -- (estimate) node[midway, above, font=\tiny] {~~~~~~~$\cbr{\sigma^{(s)}}_{s=\max\cbr{t-m+1,1}}^t$};
    \draw [arrow] (estimate) -- (copy) node[midway, above, font=\tiny] {$\tilde\bu^{(t)}_{\rm avg}$};
    \draw [arrow] (copy) -- (alg) node[midway, above, font=\tiny] {$\rbr{\tilde\bu^{(t)}_{\rm avg}}_{s=1}^t$};
    \draw [arrow] (alg) -- ++(3.1,0) node[midway, above] {$\pi^{(t+1)}$};

\end{tikzpicture}
\vskip 1cm
\begin{tikzpicture}

    \draw[dashed, rounded corners=12pt, blue] (-4,-1.5) rectangle (9.8,2.5);
    \node[above, xshift=0cm] at (current bounding box.north) {Timestep $t$ (Bandit)};

    \node (estimate) [process] at (0,0) {Estimate$(\cdot)$};
    \node (compute-avg-est) [draw,circle, minimum size=0.3cm, inner sep=0pt, right of=estimate, xshift=0.5cm, font=\tiny] {\Cref{eq:u-avg-est-def}};
    \node (copy) [right of=estimate, xshift=2.0cm] {\CopyLogo[scale=0.3]};
    \node (alg) [process, right of=copy, xshift=2cm] {Oracle ${\rm Alg}(\cdot)$};

    \draw [arrow] (-4.5,0) -- (estimate) node[midway, above, font=\tiny] {~~~~~~~$\cbr{\sigma^{(s)}}_{s=\max\cbr{t-m+1,1}}^t$};
    \draw [arrow] (estimate) -- (compute-avg-est) node[midway, above]{};
    \draw [arrow] (compute-avg-est) -- (copy) node[midway, above, font=\tiny] {$\tilde\bu^{(t)}_{\rm avg-est}$};
    \draw [arrow] (copy) -- (alg) node[midway, above, font=\tiny] {$\rbr{\tilde\bu^{(t)}_{\substack{\rm avg-\\\rm est}}}_{s=1}^t$};
    \draw [arrow] (alg) -- (8.2,0) node (plus) [draw,circle, minimum size=0.3cm, inner sep=0pt, right, font=\Large] {+} node[midway, above, font=\tiny] {$\cdot(1-\gamma)$};
    \draw [arrow] (8.35,2.8) --  (plus) node[midway, left] {$\gamma\frac{\one\rbr{\cA}}{|\cA|}$};
    \draw [arrow] (plus) -- ++(1.7,0) node[midway, above] {$\pi^{(t+1)}$};

    \draw [arrow] (estimate.north) ++(-4.5,0.9) -- ++(6,0) node[midway, above] {$\rbr{\tilde\bu^{(s)}_{\rm empirical}}_{s=1}^{(t-1)}$} -- ++(0, -1.3);

\end{tikzpicture}

\caption{The diagram of \Cref{alg: FTRL} with \Cref{item:rank-avg} under full-information feedback (top) and bandit feedback (bottom). $\CopyLogo[scale=0.3]$ represents copying the estimated utility vector for $t$ times. $\textcircled{+}$ represents the addition of $(1-\gamma)$ times the output the ${\rm Alg}$ and $\gamma$ times a uniform distribution over $\cA$.}
\label{fig:avg-alg}
\end{figure}

\begin{algorithm}[!h]
\caption{\kzedit{Online} Learning with \Cref{item:rank-avg} \kzedit{Feedback}}\label{alg: FTRL}
\begin{algorithmic}[1]
\STATE \textbf{Input:} Action space $\cA$, \kzedit{any full-information} no-regret algorithm ${\rm Alg}$ under numeric feedback, selected action number $K$, estimation window size $m$, exploration rate $\gamma$, and block size $M$.
\STATE Initialize $\pi^{(1)}$ as uniform distribution $\frac{\one}{|\cA|}$ over $\cA$
\FOR{timestep $t=1,2,\dots,T$}
\IF{Full-information setting}
    \STATE $K=\abr{\cA}$ in this case. Select all $\abr{\cA}$ actions.
\ELSIF{Bandit setting}
    \STATE Sample $K$ actions independently with replacement from $\pi^{(t)}$.
\ENDIF
\STATE Receive a ranking feedback $\sigma^{(t)}=\big(\sigma^{(t)}(1),\sigma^{(t)}(2),\allowbreak\dots,\sigma^{(t)}(K)\big)$ from the environment.
\IF{Full-information setting}
    \STATE{$\tilde{\bu}^{(t)}_{\rm avg}=\text{Estimate$\rbr{\cbr{\sigma^{(s)}}_{s=\max\cbr{t-m+1,1}}^t}$}$ by calling \Cref{alg: estimate}.}
    \STATE{$\pi^{(t+1)}\leftarrow {\rm Alg}\rbr{\rbr{\tilde \bu^{(t)}_{\rm avg}}_{s=1}^t}$, \emph{i.e.}, the strategy generated by ${\rm Alg}$ \kzedit{by setting} all utility vectors from timestep $1$ to $t$ \kzedit{as}  $\tilde\bu^{(t)}_{\rm avg}$.}
\ELSIF{Bandit setting}
    \STATE{$\tilde{\bu}^{(t)}_{\rm empirical}=\text{Estimate$\rbr{\cbr{\sigma^{(s)}}_{s=\max\cbr{t-m+1,1}}^t}$}$ by calling \Cref{alg: estimate}.}
    \STATE{Let $n^{(t)}(a)\coloneqq \sum_{s=1}^t \texttt{\#}_{o^{(s)}}\rbr{a}$ for any $a\in\cA$ as the number of times action $a$ has been proposed up to timestep $t$. Then, the estimated average utility is}
    \begin{align}
        \tilde u^{(t)}_{\rm avg-est}(a)\coloneqq \begin{cases}
            \frac{1}{\floor{t/M}}\sum_{s=1}^{\floor{t/M}} \frac{\tilde u^{(s\cdot M)}_{\rm empirical}(a) n^{(s\cdot M)}(a) - \tilde u^{((s-1) M)}_{\rm empirical}(a) n^{((s-1) M)}(a)}{n^{(s\cdot M)}(a)-n^{((s-1) M)}(a)}&t\geq M\\
            0&t<M
        \end{cases}
        \label{eq:u-avg-est-def}
    \end{align}
    \COMMENT{Let $\tilde u^{(0)}_{\rm empirical}(a)=n^{(0)}(a)=0$ for any action $a\in\cA$}
    \STATE{$\pi^{(t+1)}\leftarrow (1-\gamma){\rm Alg}\rbr{\rbr{\tilde \bu^{(t)}_{\rm avg-est}}_{s=1}^t} + \gamma \frac{\one\rbr{\cA}}{|\cA|}$.}
\ENDIF
\ENDFOR
\end{algorithmic}
\end{algorithm}

\section{Proof of \Cref{theorem: Instant Bound Merge} (Full-Information)}\label{proof of Instant Fullinfo Bound}

In this section, we prove the regret upper bound under \Cref{item:rank-instant} and full-information feedback.

\begin{theorem}[Formal version of \Cref{theorem: Instant Bound Merge} (Full-Information)]%
    Consider \Cref{alg: Instant Reward} and full-information feedback. For any $\delta\in \rbr{0,1}$, $T>0$, and any no-regret learning algorithm  with numeric utility feedback, ${\rm Alg}$, with probability at least $(1-\delta)$, by choosing $m=\rbr{\frac{T}{P^{(T)}}}^{\frac{2}{3}}\rbr{\log\rbr{\frac{4|\cA|T}{\delta}}}^{\frac{1}{3}}$, $R^{(T),{\rm external}}$ satisfies
    \begin{align}
       R^{(T),{\rm external}}\le& R^{(T),{\rm external}}\rbr{{\rm Alg}, \rbr{\tilde\bu^{(t)}}_{t=1}^T} + 2\tau \rbr{e^{\frac{1}{\tau}}+1}^2\rbr{P^{(t)}}^{\frac{1}{3}} T^{\frac{2}{3}}\rbr{\log\rbr{\frac{4|\cA|T}{\delta}}}^\frac{1}{3} \notag\\
       &+ 2\rbr{P^{(t)}}^{\frac{1}{3}} T^{\frac{2}{3}}\rbr{\log\rbr{\frac{4|\cA|T}{\delta}}}^\frac{1}{3}+4\rbr{P^{(t)}}^{-\frac{2}{3}} T^{\frac{2}{3}}\rbr{\log\rbr{\frac{4|\cA|T}{\delta}}}^\frac{1}{3}.
    \end{align}
\end{theorem}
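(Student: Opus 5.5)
The plan is to decompose the true regret into the oracle's regret on the estimated utilities plus an estimation-error term, and then control that error with \Cref{them: u est bound} applied with $p=1$. Fix any comparator $\hat\pi\in\Delta^{\cA}$. Since $\pi^{(t)}={\rm Alg}\rbr{\rbr{\tilde\bu^{(s)}}_{s=1}^{t-1}}$, we can write
\begin{align*}
\sum_{t=1}^T\inner{\bu^{(t)}}{\hat\pi-\pi^{(t)}}=\sum_{t=1}^T\inner{\tilde\bu^{(t)}}{\hat\pi-\pi^{(t)}}+\sum_{t=1}^T\inner{\bu^{(t)}-\tilde\bu^{(t)}}{\hat\pi-\pi^{(t)}}.
\end{align*}
The first sum is at most $R^{(T),{\rm external}}\rbr{{\rm Alg},\rbr{\tilde\bu^{(t)}}_{t=1}^T}$. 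For the second, H\"older's inequality gives $\abr{\inner{\bu^{(t)}-\tilde\bu^{(t)}}{\hat\pi-\pi^{(t)}}}\le 2\nbr{\bu^{(t)}-\tilde\bu^{(t)}}_\infty$, because $\nbr{\hat\pi-\pi^{(t)}}_1\le 2$. Taking the maximum over $\hat\pi$, the problem reduces to bounding $2\sum_{t=1}^T\nbr{\bu^{(t)}-\tilde\bu^{(t)}}_\infty$.

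I would split the timesteps into two groups.
\begin{itemize}
\item For the early steps $t<m$, the window is truncated. I bound each error crudely by $2$, since both vectors lie in $[-1,1]^{\cA}$. This contributes $4m$ to the regret, which yields the last term $4\rbr{P^{(T)}}^{-2/3}T^{2/3}(\log)^{1/3}$ once $m$ is substituted.
\item For $t\ge m$, I apply \Cref{them: u est bound} with $m'=m$, $p=1$, and confidence level $\delta/T$, then take a union bound over the at most $T$ steps. This is where the $\log\rbr{4|\cA|T/\delta}$ factor comes from. The condition $m\ge 2\log(2T/\delta)$ must be checked; it holds for the chosen $m$ whenever $P^{(T)}\le T$, and otherwise the bound is trivial. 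With probability at least $1-\delta$, each step then has error at most $\tau(e^{1/\tau}+1)^2\sqrt{\log(4|\cA|T/\delta)/m}$ plus the in-window variation $\sum_{s=t-m+1}^{t-1}\nbr{\bu^{(s+1)}-\bu^{(s)}}_\infty$.
\end{itemize}

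Summing over $t$, the statistical part gives $T\tau(e^{1/\tau}+1)^2\sqrt{\log/m}$. For the variation part, each increment $\nbr{\bu^{(s+1)}-\bu^{(s)}}_\infty$ appears in at most $m$ windows, so its total is at most $mP^{(T)}$, using $\nbr{\cdot}_\infty\le\nbr{\cdot}$. Substituting $m=(T/P^{(T)})^{2/3}(\log)^{1/3}$ turns both $T\sqrt{\log/m}$ and $mP^{(T)}$ into $(P^{(T)})^{1/3}T^{2/3}(\log)^{1/3}$. Multiplying by the factor $2$ gives the stated constants.

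The main obstacle is the bookkeeping around the window, not any conceptual step. Three points need care:
\begin{itemize}
\item the double-counting argument that shows the windowed variation totals at most $mP^{(T)}$;
\item rounding $m$ to an integer, and handling $m>T$, which makes the bound trivial;
\item making sure that applying \Cref{them: u est bound} at each $t$ is legitimate with $p=1$, since in the full-information setting every action, including $a^{|\cA|}$, is always proposed.
\end{itemize}
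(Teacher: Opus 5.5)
Your proposal is correct and follows the paper's proof essentially step for step. It uses the same H\"older reduction to $2\sum_t\|\bu^{(t)}-\tilde\bu^{(t)}\|_\infty$, the crude $4m$ bound for $t<m$, \Cref{them: u est bound} with $p=1$ at confidence $\delta/T$ plus a union bound, the $mP^{(T)}$ double-counting of the windowed variation, and the same choice of $m$.

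One small slip, in a check the paper itself skips: $m\ge 2\log(2T/\delta)$ does not follow from $P^{(T)}\le T$. For $P^{(T)}\approx T$ your $m$ is only of order $(\log(4|\cA|T/\delta))^{1/3}$. Your fallback still works, though. Whenever the condition fails, one has $(P^{(T)})^{1/3}T^{2/3}(\log(4|\cA|T/\delta))^{1/3}> T/\sqrt{2}$. The additive terms then exceed $4T$, which is a trivial upper bound on $R^{(T),{\rm external}}-R^{(T),{\rm external}}({\rm Alg},(\tilde\bu^{(t)})_{t=1}^T)$.
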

\begin{proof}
   By \Cref{them: u est bound}, we have
   \begin{align*}
       &\abr{R^{(T),{\rm external}}-R^{(T),{\rm external}}\rbr{{\rm Alg}, \rbr{\tilde\bu^{(t)}}_{t=1}^T}}\\
       =&\abr{\max_{\hat{\pi}\in\Delta^{\cA}}\sum_{t=1}^T\inner{\bu^{(t)}}{\hat{\pi}-\pi^{(t)}}-\max_{\hat{\pi}\in\Delta^{\cA}}\sum_{t=1}^T\inner{\tilde{\bu}^{(t)}}{\hat{\pi}-\pi^{(t)}}}\\
       \leq& \max_{\hat{\pi}\in\Delta^{\cA}} \abr{\sum_{t=1}^T\inner{\bu^{(t)}-\tilde{\bu}^{(t)}}{\hat{\pi}-\pi^{(t)}}}\\
       \leq& \sum_{t=1}^T \nbr{\bu^{(t)}-\tilde{\bu}^{(t)}}_\infty\cdot\max_{\hat{\pi}\in\Delta^{\cA}} \nbr{\hat{\pi}-\pi^{(t)}}_1.       
   \end{align*}
    When $t\ge m$, the estimation error between $\tilde{\bu}^{(t)}$ and $\bu^{(t)}$ is given by \Cref{them: u est bound} with $p=1$ since we are considering full-information feedback. When $t< m$,
    \begin{align*}
        \sum_{t=1}^{m-1} \nbr{\bu^{(t)}-\tilde{\bu}^{(t)}}_\infty\cdot\max_{\hat{\pi}\in\Delta^{\cA}} \nbr{\hat{\pi}-\pi^{(t)}}_1\leq 4m.
    \end{align*}
    For any given $\delta$, we require the utility estimation bound to hold with probability at least $1-\frac{\delta}{T}$, then by union bound, with probability at least $1-\delta$,
   \begin{align*}
       &\abr{R^{(T),{\rm external}}-R^{(T),{\rm external}}\rbr{{\rm Alg}, \rbr{\tilde\bu^{(t)}}_{t=1}^T}}\\
       \leq& 2\tau\rbr{e^{\frac{1}{\tau}}+1}^2\sqrt{\frac{\log\rbr{\frac{4|\cA|T}{\delta}}}{m}} T + 2m \rbr{P^{(T)}+2}.
   \end{align*}
   By choosing $m=\rbr{\frac{T}{P^{(T)}}}^{\frac{2}{3}}\rbr{\log\rbr{\frac{4|\cA|T}{\delta}}}^{\frac{1}{3}}$, we conclude the proof.
\end{proof}

\section{Proof of \Cref{theorem: Instant Bound Merge} (Bandit)}\label{proof of Instant Bandit Bound}
In this section, we prove the regret upper bound under \Cref{item:rank-instant} and bandit  feedback. 
\begin{theorem}[Formal version of \Cref{theorem: Instant Bound Merge} (Bandit)]
    Consider \Cref{alg: Instant Reward} and bandit feedback. For any $\delta\in \rbr{0,1}$, $T>0$, and any no-regret learning algorithm with numeric utility feedback, ${\rm Alg}$, with probability at least $(1-\delta)$, by choosing $\gamma=\rbr{\frac{P^{(T)}}{T}}^{\frac{1}{5}},m=\frac{32|\cA|^4}{K^4}\rbr{\frac{T}{P^{(T)}}}^{\frac{4}{5}} \log\rbr{\frac{8|\cA|T}{\delta}}$, $R^{(T)}$ satisfies
    \begin{align}
       R^{(T)}\le& R^{(T),{\rm external}}\rbr{{\rm Alg}, \rbr{\tilde\bu^{(t)}}_{t=1}^T} + 2\sqrt{2T\log\rbr{\frac{2}{\delta}}}+\rbr{\frac{\tau K \rbr{e^{\frac{1}{\tau}}+1}^2}{|\cA|} + 1}\rbr{P^{(T)}}^{\frac{1}{5}} T^{\frac{4}{5}}\\
       &+\frac{64|\cA|^4}{K^4} \rbr{P^{(T)}}^{\frac{1}{5}} T^{\frac{4}{5}} \log\rbr{\frac{8|\cA|T}{\delta}}+\frac{128|\cA|^4}{K^4}\rbr{\frac{T}{P^{(T)}}}^{\frac{4}{5}} \log\rbr{\frac{8|\cA|T}{\delta}}.\notag
    \end{align}
\end{theorem}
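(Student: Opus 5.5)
We split the bandit regret $R^{(T)}$ into three pieces and bound each separately:
\begin{align*}
R^{(T)} = \underbrace{\sum_{t=1}^T\Big(\inner{\bu^{(t)}}{\pi^{(t)}}-\tfrac{1}{K}\textstyle\sum_{a\in o^{(t)}}u^{(t)}(a)\Big)}_{\text{(I) sampling}} + \underbrace{\max_{\hat\pi}\sum_{t=1}^T\inner{\bu^{(t)}}{\hat\pi-\pi^{(t)}}}_{\text{(II)}}.
\end{align*}
For term (I), note that conditioned on the history, the proposed actions are drawn i.i.d.\ from $\pi^{(t)}$. Hence the summands form a martingale difference sequence bounded by $2$. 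Azuma--Hoeffding then gives $2\sqrt{2T\log(2/\delta)}$ with probability $1-\delta/2$.

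For term (II), write $\pi^{(t)}=(1-\gamma)\pi_{\rm Alg}^{(t)}+\gamma\,\one(\cA)/|\cA|$. Replacing $\pi^{(t)}$ by $\pi^{(t)}_{\rm Alg}$ costs at most $2\gamma T$, since utilities lie in $[-1,1]$. Then, exactly as in the full-information proof, we have
\begin{align*}
\max_{\hat\pi}\sum_{t}\inner{\bu^{(t)}}{\hat\pi-\pi^{(t)}_{\rm Alg}}\le R^{(T),{\rm external}}\big({\rm Alg},(\tilde\bu^{(t)})_{t}\big)+2\sum_{t}\nbr{\bu^{(t)}-\tilde\bu^{(t)}}_\infty .
\end{align*}
For $t<m$ we bound each error by $2$, contributing at most $4m$. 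For $t\ge m$ we invoke \Cref{them: u est bound} with confidence $\delta/(2T)$ at each timestep and take a union bound.

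The key input to \Cref{them: u est bound} is $p$. Each action $a$ has $\pi^{(t)}(a)\ge\gamma/|\cA|$, and $K$ independent draws are made. So the probability that $a$ appears at least once is at least $1-(1-\gamma/|\cA|)^K$, which is of order $K\gamma/|\cA|$ for small $\gamma$. We therefore take $p=\Theta(K\gamma/|\cA|)$.

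One subtlety is the independence requirement. The proposal probabilities within the window depend on past rankings, so the observations are not i.i.d. Two things fix this. The count $m_1$ is lower bounded via a martingale version of Hoeffding, since each step has conditional probability at least $p^2$ of containing both $a^j$ and $a^{|\cA|}$. The pairwise fractions are conditionally unbiased for $\sig(u^{(s)}(a^j)/\tau)$ by \Cref{lemma: k same as 2}, so Azuma applies to them as well. We assume the proof of \Cref{them: u est bound} extends verbatim under these conditional versions.

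The condition $mp^4\ge 2\log(\cdot)$ forces $m\gtrsim (|\cA|/K)^4\gamma^{-4}\log(8|\cA|T/\delta)$. The per-step error is then $\frac{\tau(e^{1/\tau}+1)^2|\cA|}{K\gamma}\sqrt{\log/m}+\sum\nbr{\bu^{(s+1)}-\bu^{(s)}}_\infty$. The window-drift terms sum to at most $mP^{(T)}$.

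Collecting the pieces, the additive error is
\begin{align*}
O\!\left(\gamma T+\frac{T}{\gamma\sqrt m}+mP^{(T)}+m\right).
\end{align*}
With $m\asymp\gamma^{-4}$ (times constants and logs), the dominant terms are $\gamma T$ and $\gamma^{-4}P^{(T)}$. Balancing them gives $\gamma=(P^{(T)}/T)^{1/5}$, and hence $m=\frac{32|\cA|^4}{K^4}(T/P^{(T)})^{4/5}\log\frac{8|\cA|T}{\delta}$, matching the stated choices.

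Substituting back term by term:
\begin{itemize}
\item $\gamma T$ becomes $(P^{(T)})^{1/5}T^{4/5}$.
\item $T/(\gamma\sqrt m)$ becomes $\frac{\tau K(e^{1/\tau}+1)^2}{|\cA|}(P^{(T)})^{1/5}T^{4/5}$, after the $|\cA|^2/K^2$ factor from $\sqrt m$ cancels against $|\cA|/K$ from $1/p$, up to constants.
\item $mP^{(T)}$ becomes the $\frac{64|\cA|^4}{K^4}(P^{(T)})^{1/5}T^{4/5}\log$ term.
\item $4m$ together with the cases $t<m$ gives the final $\frac{128|\cA|^4}{K^4}(T/P^{(T)})^{4/5}\log$ term.
\end{itemize}

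The main obstacle is twofold. The first part is verifying that the probability $p$ (with $K$ draws and $\gamma$ mixing) has the right form to produce exactly these constants. The second is justifying \Cref{them: u est bound} under adaptively chosen, non-independent proposals, which requires the martingale concentration substitution described above. The final union bound over term (I) and all $T$ estimation events gives total failure probability $\delta$.
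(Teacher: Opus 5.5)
Your proposal follows the paper's proof essentially step for step. Both use Azuma--Hoeffding for the sampling term, a $2\gamma T$ cost for the uniform mixing, and the reduction to $2\sum_t\nbr{\bu^{(t)}-\tilde\bu^{(t)}}_\infty$ via \Cref{them: u est bound} with $p\ge K\gamma/(2|\cA|)$; the paper obtains this bound on $p$ from $1-(1-\gamma/|\cA|)^K\ge 1-e^{-K\gamma/|\cA|}$ and $K\gamma\le|\cA|$. The choices of $\gamma$ and $m$ are the same, and doing the swap $\pi^{(t)}\to\pi^{(t)}_{\rm Alg}$ before $\bu\to\tilde\bu$ is immaterial.

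The loose ends in your write-up are shared with the paper:
\begin{itemize}
\item Like the paper, you derive $2\gamma T$ for the mixing cost but then substitute $\gamma T$ to get the coefficient $+1$ in front of $(P^{(T)})^{1/5}T^{4/5}$.
\item Like the paper, you assume $a^j$ and $a^{|\cA|}$ appear together with probability at least $p^2$. Under sampling with replacement this is false: the probability is $0$ for $K=1$, and only a constant fraction of $p^2$ for $K\ge 2$.
\item Your martingale caveat about adaptively chosen proposals is more careful than the paper, which applies plain Hoeffding there.
\end{itemize}
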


\begin{proof}

We define:
\begin{align*}
    &\tilde{R}^{(T)}\coloneqq\max_{\hat{\pi}\in \Delta^{\cA}}\sum_{t=1}^T \inner{\tilde{\bu}^{(t)}}{\hat{\pi} - \pi^{(t)}}.
\end{align*}
Then,
\begin{align*}
    R^{(T)}\leq&  \underbrace{R^{(T)} - R^{(T),{\rm external}}}_{\varheart} + \underbrace{\abr{R^{(T),{\rm external}} - \tilde{R}^{(T)}}}_{\spadesuit} + \underbrace{\abr{\tilde{R}^{(T)} - R^{(T),{\rm external}}\rbr{{\rm Alg}, \rbr{\tilde\bu^{(t)}}_{t=1}^T}}}_{\vardiamond}\\
    &+ \underbrace{R^{(T),{\rm external}}\rbr{{\rm Alg}, \rbr{\tilde\bu^{(t)}}_{t=1}^T}}_{\clubsuit}.
\end{align*}

Note that $\spadesuit$ can be bounded by bounding $\nbr{\tilde \bu^{(t)} - \bu^{(t)}}_\infty$ as in \Cref{proof of Instant Fullinfo Bound}. $\clubsuit$ is sublinear by the definition of ${\rm Alg}$. Next, we will introduce lemmas that individually bound $\varheart,\vardiamond$.  The proofs are postponed to \Cref{section:proof-heart,section:proof-diamond}.

\begin{lemma}[$\varheart$]
\label{lemma: regret bound 1}
For any $T>0$ and $\delta\in (0,1)$, with probability at least $1-\delta$:
\begin{align*}
    R^{(T)}-R^{(T),{\rm external}}\le 2\sqrt{2T\log\rbr{\frac{1}{\delta}}}.
\end{align*}
\end{lemma}

\begin{lemma}[$\vardiamond$]
\label{lemma: regret bound 3}
The difference between $\tilde{R}^{(T)}$ and $R^{(T),{\rm external}}\rbr{{\rm Alg}, \rbr{\tilde\bu^{(t)}}_{t=1}^T}$ satisfies:
\begin{align*}
    \abr{\tilde{R}^{(T)}-R^{(T),{\rm external}}\rbr{{\rm Alg}, \rbr{\tilde\bu^{(t)}}_{t=1}^T}}\le 2\gamma T.
\end{align*}
\end{lemma}

With \Cref{lemma: regret bound 1} and \Cref{lemma: regret bound 3}, under the conditions in \Cref{theorem: Instant Bound Merge}, by letting \Cref{them: u est bound} hold with probability $1-\frac{\delta}{2T}$ at each timestep and using the union bound, with probability at least $1-\delta$, the regret satisfies 
    \begin{align*}
        R^{(T)}\le&  R^{(T),{\rm external}}\rbr{{\rm Alg}, \rbr{\tilde\bu^{(t)}}_{t=1}^T} +  2\sqrt{2T\log\rbr{\frac{2}{\delta}}} \\
        & + 2\frac{\tau \rbr{e^{\frac{1}{\tau}}+1}^2}{p}\sqrt{\frac{\log\rbr{\frac{8|\cA|T}{\delta}}}{m}} T + 2m \rbr{P^{(T)}+2} + \gamma T.
    \end{align*}
    In this case, each action $a\in\cA$ is chosen with probability at least $p$ that satisfies
    \begin{align*}
        p\geq&1-\rbr{1-\frac{\gamma}{\abr{\cA}}}^K\geq 1-\exp\rbr{-K\frac{\gamma}{\abr{\cA}}}\\
        \geq& 1-\rbr{1-K\frac{\gamma}{\abr{\cA}} + \frac{1}{2}\rbr{K\frac{\gamma}{\abr{\cA}}}^2}=K\frac{\gamma}{\abr{\cA}} - \frac{1}{2}\rbr{K\frac{\gamma}{\abr{\cA}}}^2.
    \end{align*}
Since $K\frac{\gamma}{|\cA|}\leq 1$, we have 
\begin{align*}
    &\frac{1}{2}K\frac{\gamma}{\abr{\cA}}\geq \frac{1}{2}\rbr{K\frac{\gamma}{\abr{\cA}}}^2\quad\Rightarrow\quad p\geq \frac{K\gamma}{2\abr{\cA}}.
\end{align*}
By letting $\gamma=\rbr{\frac{P^{(T)}}{T}}^{\frac{1}{5}},m=\frac{32|\cA|^4}{K^4}\rbr{\frac{T}{P^{(T)}}}^{\frac{4}{5}} \log\rbr{\frac{8|\cA|T}{\delta}}$, we have
    \begin{equation*}
        R^{(T)}\le R^{(T),{\rm external}}\rbr{{\rm Alg}, \rbr{\tilde\bu^{(t)}}_{t=1}^T} + \cO\rbr{T^{\frac{4}{5}} \rbr{P^{(T)}}^{\frac{1}{5}}\log\rbr{\frac{T}{\delta}}}.
    \end{equation*}
   The condition $m\geq \frac{2\log \rbr{\frac{4T}{\delta}}}{p^4}$ is also satisfied since
   \begin{equation*}
       mp^4\geq m\frac{K^4\gamma^4}{16|\cA|^4}=2\log\rbr{\frac{8|\cA|T}{\delta}}\geq 2\log\rbr{\frac{4T}{\delta}}.\qedhere
   \end{equation*}
\end{proof}

\subsection{Bounding $\varheart$}
\label{section:proof-heart}

We will show that $\abr{R^{(T)} - R^{(T),{\rm external}}}$ is sublinear by using a standard concentration bound.

\restate{lemma: regret bound 1}

\begin{proof}
    Let
    \begin{align*}
         d^{(t)}\coloneqq\frac{1}{K}\sum_{a\in o^{(t)}} u^{(t)}(a)-\inner{\bu^{(t)}}{\pi^{(t)}}.   
    \end{align*}
    By \kzedit{our algorithm design}, each element of $o^{(t)}$ is sampled \emph{i.i.d.} from $\pi^{(t)}$ with replacement and the update rule of $\pi^{(t)}$ is deterministic, $\EE\sbr{d^{(t)}\given \cbr{\sigma^{(s)}}_{s=1}^{t-1}, \cbr{\bu^{(s)}}_{s=1}^{t-1}}=0$, so that $\cbr{d^{(t)}}$ is a martingale difference sequence.

   Due to the bounds of $\abr{\frac{1}{K}\sum_{a\in o^{(t)}} u^{(t)}(a)}\le1$, $\abr{\inner{\bu^{(t)}}{\pi^{(t)}}}\le 1$, we have 
    \begin{align*}
        \abr{d^{(t)}}
        =&\abr{\frac{1}{K}\sum_{a\in o^{(t)}} u^{(t)}(a)-\inner{\bu^{(t)}}{\pi^{(t)}}}\le \abr{\frac{1}{K}\sum_{a\in o^{(t)}} u^{(t)}(a)}+\abr{\inner{\bu^{(t)}}{\pi^{(t)}}}\le 2.
    \end{align*}
    Furthermore, we have
    \begin{align*}
        \sum_{t=1}^Td^{(t)}=&\sum_{t=1}^T\rbr{\frac{1}{K}\sum_{a\in o^{(t)}} u^{(t)}(a)}-\sum_{t=1}^T\rbr{\inner{\bu^{(t)}}{\pi^{(t)}}}
        \\=&\max_{\hat{\pi}\in \Delta^{\cA}}\sum_{t=1}^T\rbr{\inner{\bu^{(t)}}{\hat{\pi}}-\inner{\bu^{(t)}}{\pi^{(t)}}} - \max_{\hat{\pi}\in \Delta^{\cA}}\sum_{t=1}^T\rbr{\inner{\bu^{(t)}}{\hat{\pi}}-\frac1K\sum_{j=1}^Ku^{(t)}\rbr{\sigma^{(j)}}}
        \\=& R^{(T),{\rm external}}-R^{(T)}.
    \end{align*}
    Next, we will introduce Azuma-Hoeffding inequality to finish the concentration bound.

    \begin{theorem}[Azuma-Hoeffding inequality]
    \label{A-H Inequality}
    For any \textit{martingale difference sequence} $Y_1,\dots,Y_n$ such that  $\forall j \in \sbr{n}, a_j\le Y_j\le b_j$, the following holds for any $w\geq 0$.
        \begin{align*}
            \PP\rbr{\sum_{j=1}^n Y_j\ge w}\le \exp\rbr{-\frac{2w^2}{\sum_{j=1}^n\rbr{b_j-a_j}^2}}.
        \end{align*}
    \end{theorem}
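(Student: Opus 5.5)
We will prove the Azuma-Hoeffding inequality by the standard Chernoff (exponential-moment) method, reducing the martingale case to Hoeffding's lemma applied conditionally at each step. Let $\cF_j$ denote the $\sigma$-algebra generated by $Y_1,\dots,Y_j$ (or any filtration with respect to which $\cbr{Y_j}$ is a martingale difference sequence), so that $\EE\sbr{Y_j\given \cF_{j-1}}=0$. We treat $a_j,b_j$ as constants, which is exactly the situation in which we use the result (in \Cref{lemma: regret bound 1}, $a_j=-2$ and $b_j=2$). The same argument goes through verbatim if $a_j,b_j$ are only $\cF_{j-1}$-measurable with $b_j-a_j$ deterministic.

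\textbf{Step 1 (Markov on the exponential).} For any $\lambda>0$, Markov's inequality gives
\begin{align*}
\PP\rbr{\sum_{j=1}^n Y_j\ge w}\le e^{-\lambda w}\,\EE\sbr{\exp\rbr{\lambda\sum_{j=1}^n Y_j}}.
\end{align*}

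\textbf{Step 2 (conditional Hoeffding lemma).} We show that for a random variable $Y$ with $\EE\sbr{Y\given\cG}=0$ and $a\le Y\le b$, one has
\begin{align*}
\EE\sbr{e^{\lambda Y}\given \cG}\le \exp\rbr{\frac{\lambda^2(b-a)^2}{8}}.
\end{align*}
The proof proceeds in three moves.
\begin{enumerate}
\item By convexity of $x\mapsto e^{\lambda x}$ on $[a,b]$, we have $e^{\lambda Y}\le \frac{b-Y}{b-a}e^{\lambda a}+\frac{Y-a}{b-a}e^{\lambda b}$.
\item Taking conditional expectations and using the zero conditional mean yields the bound $e^{\phi(h)}$, where $h=\lambda(b-a)$, $\theta=-a/(b-a)\in[0,1]$, and $\phi(h)=-\theta h+\log(1-\theta+\theta e^{h})$.
\item One checks that $\phi(0)=\phi'(0)=0$ and that $\phi''(h)=\rho(1-\rho)\le \frac14$ for some $\rho\in[0,1]$. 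Taylor's theorem then gives $\phi(h)\le h^2/8$.
\end{enumerate}
This calculus estimate is the only non-routine step. I expect it to be the main (if classical) obstacle, since it requires carefully verifying the second-derivative bound.

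\textbf{Step 3 (peeling by the tower property).} We write
\begin{align*}
\EE\sbr{e^{\lambda\sum_{j\le n}Y_j}}=\EE\sbr{e^{\lambda\sum_{j\le n-1}Y_j}\,\EE\sbr{e^{\lambda Y_n}\given\cF_{n-1}}}.
\end{align*}
Applying Step 2 to the inner factor and inducting downward gives
\begin{align*}
\EE\sbr{e^{\lambda\sum_j Y_j}}\le \exp\rbr{\frac{\lambda^2}{8}\sum_{j=1}^n (b_j-a_j)^2}.
\end{align*}

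\textbf{Step 4 (optimize $\lambda$).} Combining Steps 1 and 3, the tail probability is at most $\exp\rbr{-\lambda w+\frac{\lambda^2}{8}S}$, where $S=\sum_j(b_j-a_j)^2$. Choosing $\lambda=4w/S$ makes the exponent $-2w^2/S$, which is the claimed bound. The case $w=0$ is trivial, since the right-hand side then equals $1$.
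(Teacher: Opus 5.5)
The paper does not prove this inequality; it only invokes it as a classical tool in the proof of \Cref{lemma: regret bound 1}, so there is no argument in the paper to compare yours against. Your proof is the standard one and is correct: Markov's inequality on $e^{\lambda\sum_j Y_j}$, then the conditional Hoeffding lemma via the convexity bound and $\phi''\le \tfrac14$, then peeling with the tower property, then the choice $\lambda=4w/S$. Two points are implicit but fine: $\theta\in[0,1]$ holds because the zero conditional mean forces $a_j\le 0\le b_j$, and the degenerate cases $b_j=a_j$ or $w=0$ are trivial.
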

    Then by \Cref{A-H Inequality}, with probability at least $1-\delta$
    \begin{equation*}
       R^{(T)}\leq R^{(T),{\rm external}} + 2\sqrt{2T\log\rbr{\frac{1}{\delta}}}. \qedhere
    \end{equation*}
\end{proof}

\subsection{Bounding $\vardiamond$}
\label{section:proof-diamond}

$\vardiamond$ can be bounded by $\cO\rbr{\gamma T}$ by definition of $\pi^{(t)}$ in \Cref{alg: Instant Reward}.

\restate{lemma: regret bound 3}

\begin{proof}
    Let $\overbar \pi^{(t+1)}={\rm Alg}\rbr{\rbr{\tilde\bu^{(s)}}_{s=1}^t}$. Then,
    \begin{align*}
        \abr{\tilde{R}^{(T)}-R^{(T),{\rm external}}\rbr{{\rm Alg}, \rbr{\tilde\bu^{(t)}}_{t=1}^T}}=&\abr{\max_{\hat \pi\in\Delta^{\cA}} \sum_{t=1}^T \inner{\tilde\bu^{(s)}}{\hat \pi-\pi^{(t)}} - \max_{\hat \pi\in\Delta^{\cA}} \sum_{t=1}^T \inner{\tilde\bu^{(t)}}{\hat \pi-\overbar \pi^{(t)}}}\\
        =& \abr{\sum_{t=1}^T \inner{\tilde\bu^{(t)}}{\overbar \pi^{(t)}-\pi^{(t)}}}\\
        =& \abr{\sum_{t=1}^T \inner{\tilde\bu^{(t)}}{\overbar \pi^{(t)}-\rbr{(1-\gamma)\overbar \pi^{(t)} + \gamma \frac{\one\rbr{\cA}}{|\cA|} }}}\\
        \leq& \gamma \sum_{t=1}^T \nbr{\tilde\bu^{(t)}}_\infty\cdot \rbr{\nbr{\overbar \pi^{(t)}}_1 + \nbr{\frac{\one\rbr{\cA}}{|\cA|}}_1}\leq 2\gamma T.\qedhere
    \end{align*}
\end{proof}

\section{Proof of \Cref{FTRL bound}}\label{proof of FTRL bound}

\begin{theorem}[Formal version of \Cref{FTRL bound}]
Consider \Cref{item:rank-avg} with full-information feedback and \Cref{alg: FTRL}. For any $\delta\in \rbr{0,1}$, $T>0$, and any no-regret learning algorithm \kzedit{with numeric utility feedback}  ${\rm Alg}$ that  satisfies  \Cref{assumption:smooth-black-box}, with probability at least $(1-\delta)$, by choosing $m=2 T^{\frac{2}{3}} \log \rbr{\frac{4|\cA|T}{\delta}}$, $R^{(T),{\rm external}}$ satisfies 
    \begin{align}
        R^{(T),{\rm external}}\leq& R^{(T),{\rm external}}\rbr{{\rm Alg}, \rbr{\bu^{(t)}}_{t=1}^T}  + |\cA| \tau\rbr{e^{\frac{1}{\tau}}+1}^2 L T^{\frac{5}{3}} + 4 T^{\frac{2}{3}} \log \rbr{\frac{4|\cA|T}{\delta}} \\
        & + 4|\cA| L T^{\frac{4}{3}}\rbr{\log T + 1}\rbr{\log \rbr{\frac{4|\cA|T}{\delta}}}^2 + 4|\cA|L T^{\frac{5}{3}} \log \rbr{\frac{4|\cA|T}{\delta}}.\notag
    \end{align}
\end{theorem}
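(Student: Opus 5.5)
The plan is to compare the actual strategies $\pi^{(t+1)}={\rm Alg}\big((\tilde\bu^{(t)}_{\rm avg})_{s=1}^t\big)$ with the ``ideal'' strategies $\bar\pi^{(t+1)}\coloneqq{\rm Alg}\big((\bu^{(s)})_{s=1}^t\big)$, and then pay for the deviation using \Cref{assumption:smooth-black-box}. Since the comparator term $\max_{\hat\pi}\sum_t\inner{\bu^{(t)}}{\hat\pi}$ is the same in both regrets, we get
\begin{align*}
R^{(T),{\rm external}} - R^{(T),{\rm external}}\big({\rm Alg},(\bu^{(t)})_{t=1}^T\big)=\sum_{t=1}^T\inner{\bu^{(t)}}{\bar\pi^{(t)}-\pi^{(t)}}\le \sqrt{|\cA|}\sum_{t=1}^T\nbr{\bar\pi^{(t)}-\pi^{(t)}}.
\end{align*}
The cumulative sum fed to ${\rm Alg}$ by our algorithm is $t\tilde\bu^{(t)}_{\rm avg}$, while the ideal learner's cumulative sum is $t\bu^{(t)}_{\rm avg}$. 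Applying \Cref{assumption:smooth-black-box} therefore gives
\[
\nbr{\bar\pi^{(t+1)}-\pi^{(t+1)}}\le L\,t\,\nbr{\tilde\bu^{(t)}_{\rm avg}-\bu^{(t)}_{\rm avg}}\le L\,t\sqrt{|\cA|}\,\nbr{\tilde\bu^{(t)}_{\rm avg}-\bu^{(t)}_{\rm avg}}_\infty .
\]
The whole problem thus reduces to bounding $\sum_t L\,t\,|\cA|\,\nbr{\tilde\bu^{(t)}_{\rm avg}-\bu^{(t)}_{\rm avg}}_\infty$. This is where the factor $|\cA|$ in the stated bound comes from.

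Next I would bound the estimation error. I will apply \Cref{them: u est bound} to the sequence $\br^{(t)}=\bu^{(t)}_{\rm avg}$, with $p=1$ and failure probability $\delta/T$ per step, followed by a union bound over the $T$ steps. The condition $m p^4\ge 2\log(2T/\delta)$ holds for $m=2T^{2/3}\log(4|\cA|T/\delta)$. For $t\ge m$ the theorem gives
\[
\nbr{\tilde\bu^{(t)}_{\rm avg}-\bu^{(t)}_{\rm avg}}_\infty\le \tau(e^{1/\tau}+1)^2\sqrt{\log(4|\cA|T/\delta)/m}+\sum_{s=t-m+1}^{t-1}\frac{2}{s+1}.
\]
The drift term uses the computation above that $\nbr{\bu^{(s+1)}_{\rm avg}-\bu^{(s)}_{\rm avg}}_\infty\le 2/(s+1)$; that sum is at most $2m/(t-m+1)$, or crudely $\cO(\log T)$. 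Multiplying the statistical term by $L\,t\,|\cA|$ and summing over $t\le T$ gives roughly $|\cA|\tau(e^{1/\tau}+1)^2 L\,T^2/\sqrt{m}\asymp |\cA|\tau(e^{1/\tau}+1)^2 LT^{5/3}$, which is the first main term.

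The drift term needs more care. Bounding $t\sum_{s=t-m+1}^{t-1}\frac{2}{s+1}$: for $t\ge 2m$ it is at most $\cO(m)$, and for $m\le t<2m$ it is at most $\cO(t\log T)\le\cO(m\log T)$. Summing over $t$ gives a contribution of order $|\cA|L\,T\,m(\log T+1)$, i.e.\ the $|\cA|LT^{4/3}(\log T+1)\log^2(\cdot)$ term (the stated extra log factor is slack). For $t<m$ the estimator is a window shorter than $m$. On these early rounds I will simply bound the instantaneous regret by $2$ per step, which gives the $4T^{2/3}\log(\cdot)$ term. Note that these early rounds also affect later $\pi^{(t)}$ only through the current estimate, because ${\rm Alg}$ is re-run from the copied vector $\tilde\bu^{(t)}_{\rm avg}$. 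The leftover term $4|\cA|LT^{5/3}\log(\cdot)$ I expect comes from the projection/clipping slack or the $m_1$ lower bound in \Cref{them: u est bound}, and I would absorb such constants into it.

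The main obstacle is bookkeeping rather than any conceptual step. The statistical error of order $1/\sqrt{m}$ gets amplified by the factor $t$, because the oracle consumes cumulative rather than average utilities. That amplification is exactly why $L$ must decay faster than $T^{-2/3}$. The choice $m\asymp T^{2/3}$ balances the $LT^2/\sqrt m$ and $LTm$ contributions, and the only care needed is to handle the harmonic drift near $t\approx m$ correctly so that no term worse than $LT^{5/3}$ appears.
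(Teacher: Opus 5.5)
Your route is the paper's. You compare with the ideal iterates $\bar\pi^{(t+1)}={\rm Alg}\rbr{\rbr{\bu^{(s)}}_{s=1}^t}$ and apply \Cref{assumption:smooth-black-box} to the cumulative sums $t\tilde\bu^{(t)}_{\rm avg}$ versus $t\bu^{(t)}_{\rm avg}$. You invoke \Cref{them: u est bound} with $p=1$ and failure probability $\delta/T$ per round, and control the drift via $\nbr{\bu^{(s+1)}_{\rm avg}-\bu^{(s)}_{\rm avg}}_\infty\le 2/(s+1)$. You pay $2m=4T^{2/3}\log(\cdot)$ for the first $m$ rounds and take $m\asymp T^{2/3}$. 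The statistical term is handled correctly. What goes wrong is the accounting of the drift term, and as written it does not give the stated bound.

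\textbf{Where the $T^{5/3}\log$ term comes from.} For $t\gg m$ you have $t\sum_{s=t-m+1}^{t-1}\frac{2}{s+1}\approx 2(m-1)$. So the drift contributes about $2|\cA|LmT=4|\cA|LT^{5/3}\log\rbr{\frac{4|\cA|T}{\delta}}$. That is exactly the ``leftover'' term you could not place. It is of the same order as the statistical term, which your last paragraph implicitly acknowledges when you balance $LT^2/\sqrt m$ against $LTm$. It does not come from projection/clipping or from the event $m_1\ge m'p^2/2$. Those are already built into the statement of \Cref{them: u est bound} and leave no extra slack once you invoke it.

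\textbf{Where the $T^{4/3}$ term comes from.} It is the second-order harmonic correction. The paper bounds
\[
2\sum_{t=m}^T t\sum_{s=t-m+1}^{t-1}\frac{1}{s+1}\le\sum_{t=1}^T\frac{m(2t+m-1)}{t}=2mT+m(m-1)\sum_{t=1}^T\frac1t,
\]
and the second piece gives $|\cA|Lm^2(\log T+1)=4|\cA|LT^{4/3}(\log T+1)\log^2(\cdot)$.

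\textbf{Why your collapsed estimate fails.} Your bound $|\cA|LTm(\log T+1)$ is not that $T^{4/3}$ term. With $m=2T^{2/3}\log(\cdot)$ it equals $2|\cA|LT^{5/3}(\log T+1)\log(\cdot)$, a $\log T$ factor worse than anything in the statement.

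\textbf{The fix.} Keep your two cases separate, or write $t\cdot\frac{2(m-1)}{t-m+2}=2(m-1)+\frac{2(m-1)(m-2)}{t-m+2}$ and sum over $t$. This gives $\cO\rbr{|\cA|L\rbr{mT+m^2\log T}}$ and hence the stated form. Matching the exact constant on the $T^{4/3}$ term requires the paper's slightly sharper aggregate bound displayed above.
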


\begin{proof}

Let $\overbar \pi^{(t+1)}={\rm Alg}\rbr{\rbr{\bu^{(s)}}_{s=1}^t}$, \emph{i.e.}, the strategy generated by ${\rm Alg}$ when the ground-truth utility vectors are given. Then,
\begin{align*}
    &\abr{R^{(T),{\rm external}}-R^{(T),{\rm external}}\rbr{{\rm Alg}, \rbr{\bu^{(t)}}_{t=1}^T}}\\
    =&\abr{\max_{\hat{\pi}\in\Delta^{\cA}}\sum_{t=1}^T\inner{\bu^{(t)}}{\hat{\pi}-\pi^{(t)}}-\max_{\hat{\pi}\in\Delta^{\cA}}\sum_{t=1}^T\inner{\bu^{(t)}}{\hat{\pi}-\overbar \pi^{(t)}}}\\
    =&\abr{\sum_{t=1}^T\inner{\bu^{(t)}}{\overbar \pi^{(t)}-\pi^{(t)}}}\\
       \leq& \sum_{t=1}^{m-1} \nbr{\bu^{(t)}}_\infty\cdot \nbr{\overbar \pi^{(t)}-\pi^{(t)}}_1 + \sum_{t=m}^T \nbr{\bu^{(t)}}\cdot \nbr{\overbar \pi^{(t)}-\pi^{(t)}}\\
       \leq& 2m + \sum_{t=m}^T \nbr{\bu^{(t)}}\cdot \nbr{\overbar \pi^{(t)}-\pi^{(t)}}.
\end{align*}
By \Cref{assumption:smooth-black-box} and \Cref{them: u est bound}, for any $t\geq m$, with probability at least $1-\delta$ we have
\begin{align*}
    \nbr{\overbar \pi^{(t)}-\pi^{(t)}}\leq L t\nbr{\tilde\bu^{(t)}_{\rm avg} - \bu^{(t)}_{\rm avg}}\leq& Lt\sqrt{|\cA|}\rbr{ \tau\rbr{e^{\frac{1}{\tau}}+1}^2\sqrt{\frac{\log\rbr{\frac{4|\cA|T}{\delta}}}{m}} + \sum_{s=t-m+1}^{t-1} \frac{2}{s+1}}.
\end{align*}
Then,
\begin{align*}
    &\abr{R^{(T),{\rm external}}-R^{(T),{\rm external}}\rbr{{\rm Alg}, \rbr{\bu^{(t)}}_{t=1}^T}}\\
    \leq& 2m + L|\cA| \tau\rbr{e^{\frac{1}{\tau}}+1}^2\sqrt{\frac{\log\rbr{\frac{4|\cA|T}{\delta}}}{m}} T^2 + 2L|\cA| \sum_{t=m}^T t\sum_{s=t-m+1}^{t-1} \frac{1}{s+1}\\
    \leq& 2m + L|\cA| \tau\rbr{e^{\frac{1}{\tau}}+1}^2\sqrt{\frac{\log\rbr{\frac{4|\cA|T}{\delta}}}{m}} T^2 + L|\cA| \sum_{t=1}^T \frac{m(2t+m-1)}{t}\\
    \leq& 2m + L|\cA| \tau\rbr{e^{\frac{1}{\tau}}+1}^2\sqrt{\frac{\log\rbr{\frac{4|\cA|T}{\delta}}}{m}} T^2 + L m^2|\cA| \sum_{t=1}^T \frac{1}{t} + 2|\cA|mL T\\
    \leq& L|\cA| \tau\rbr{e^{\frac{1}{\tau}}+1}^2\sqrt{\frac{\log\rbr{\frac{4|\cA|T}{\delta}}}{m}} T^2 + 2m + Lm^2|\cA|\rbr{\log T + 1} + 2|\cA|mL T.
\end{align*}
By choosing $m=2 T^{\frac{2}{3}} \log \rbr{\frac{4|\cA|T}{\delta}}$, we have
\begin{align*}
    R^{(T),{\rm external}}\leq& R^{(T),{\rm external}}\rbr{{\rm Alg}, \rbr{\bu^{(t)}}_{t=1}^T} + \cO\rbr{LT^{\frac{5}{3}}\log \rbr{\frac{4|\cA|T}{\delta}}}.
\end{align*}
Moreover, now $m\geq \frac{2\log \rbr{\frac{2T}{\delta}}}{p^4}$, where $p=1$ since we are considering full-information feedback.
\end{proof}

\section{Proof of \Cref{theorem:avg-ranking-bandit}}
\label{section:proof-avg-ranking-bandit}
\begin{theorem}[Formal version of \Cref{theorem:avg-ranking-bandit}] 
    Consider \Cref{item:rank-avg} with bandit feedback and \Cref{alg: FTRL}. For any $\delta\in \rbr{0,1}$, $T>0$, and any no-regret learning algorithm \kzedit{with numeric utility feedback}  ${\rm Alg}$ that  satisfies  \Cref{assumption:smooth-black-box}, with probability at least $(1-\delta)$, by choosing $m=2T^{\frac{2}{3}}|\cA|^4 \log\rbr{\frac{12|\cA| T}{\delta}}$, $\gamma=\min\cbr{L^{\frac{1}{3}} T^{\frac{5}{18}}\rbr{P^{(T)}}^{\frac{1}{6}}, 1}$, and $M=\max\cbr{4T^{\frac{5}{6}}\rbr{P^{(T)}}^{-\frac{1}{2}} |\cA|^4 \log\rbr{\frac{12|\cA|^2 T}{\delta}}, 2m}$, $R^{(T)}$ satisfies 
    \begin{align}
        R^{(T)}\leq& R^{(T),{\rm external}}\rbr{{\rm Alg}, \rbr{\bu^{(t)}}_{t=1}^T} + L|\cA|T W^{(T)}+ 2\gamma\sqrt{|\cA|} T + 2\sqrt{2T\log\rbr{\frac{3}{\delta}}},
    \end{align}
    where
    \begin{align*}
        C_{\delta}\coloneqq& \frac{|\cA|\log \rbr{\frac{3|\cA|^2T}{\delta}}}{\gamma}\\
        W^{(T)}\coloneqq& 4C_{\delta} \rbr{\log T + 1} \rbr{\frac{T^2}{M}\frac{\tau |\cA|\rbr{e^{\frac{1}{\tau}}+1}^2}{\gamma}\sqrt{\frac{\log\rbr{\frac{12|\cA| T}{\delta}}}{m}} + 16K C_{\delta}\frac{m}{M} T}\\
        &+ M\rbr{\log T + 1} P^{(T)} + 2M\rbr{\log T + 2}.
    \end{align*}
\end{theorem}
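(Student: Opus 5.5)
We split $R^{(T)}$ exactly as in the proof of \Cref{theorem: Instant Bound Merge} (Bandit). Let $\overbar\pi^{(t+1)}={\rm Alg}\rbr{\rbr{\bu^{(s)}}_{s=1}^t}$ be the ideal oracle output on the true utilities, and let $\hat\pi^{(t+1)}={\rm Alg}\rbr{\rbr{\tilde\bu^{(t)}_{\rm avg-est}}_{s=1}^t}$ be the oracle output before mixing, so that $\pi^{(t+1)}=(1-\gamma)\hat\pi^{(t+1)}+\gamma\one(\cA)/|\cA|$. Then
\begin{align*}
R^{(T)}\le \rbr{R^{(T)}-R^{(T),{\rm external}}} + \sum_t\inner{\bu^{(t)}}{\hat\pi^{(t)}-\pi^{(t)}} + \sum_t \inner{\bu^{(t)}}{\overbar\pi^{(t)}-\hat\pi^{(t)}} + R^{(T),{\rm external}}\rbr{{\rm Alg},\rbr{\bu^{(t)}}_{t=1}^T}.
\end{align*}
We bound the four terms separately. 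The first is $2\sqrt{2T\log(3/\delta)}$ by \Cref{lemma: regret bound 1}, using Azuma--Hoeffding on the sampling of $o^{(t)}$. The second is at most $2\gamma\sqrt{|\cA|}T$, exactly as in \Cref{lemma: regret bound 3}. For the third, we use \Cref{assumption:smooth-black-box}: $\nbr{\overbar\pi^{(t)}-\hat\pi^{(t)}}\le L\,t\,\nbr{\tilde\bu^{(t)}_{\rm avg-est}-\bu^{(t)}_{\rm avg}}$. Summing with $\nbr{\bu^{(t)}}\le\sqrt{|\cA|}$, it suffices to show $t\nbr{\tilde\bu^{(t)}_{\rm avg-est}-\bu^{(t)}_{\rm avg}}\lesssim \sqrt{|\cA|}\,W^{(T)}$ uniformly in $t$.

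\textbf{Step 1 (per-action counts).} Since $\pi^{(t)}(a)\ge\gamma/|\cA|$, each action appears in each round with probability $p\ge K\gamma/(2|\cA|)$. A Bernstein/Freedman bound plus a union bound over $a$, blocks, and $t$ then gives, with probability $1-\delta/3$, that $n^{(sM)}(a)-n^{((s-1)M)}(a)\ge M\gamma/|\cA|$ up to constants. It also gives $n^{(sM)}(a)\le KsM$ and the ratio $n^{(sM)}(a)/(n^{(sM)}(a)-n^{((s-1)M)}(a))\lesssim C_\delta s$; this is where $C_\delta$ enters.

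\textbf{Step 2 (estimating the empirical mean).} \Cref{them: u est bound} is applied to $\bu^{(t)}_{\rm empirical}$ with $p$ as above, with the variation term replaced by the drift of $\bu_{\rm empirical}$ over the window of length $m$. Per round, that drift is at most $2\#_{o^{(s)}}(a)/n^{(s)}(a)$, which sums over the window to $\lesssim Km\,C_\delta/t$. The error of $\tilde u^{(sM)}_{\rm empirical}$ is therefore $\lesssim \frac{\tau(e^{1/\tau}+1)^2|\cA|}{\gamma}\sqrt{\log/m}+K C_\delta m/(sM)$. Multiplying by the ratio bound from Step 1 gives the error of each block estimator. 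Averaging over $\floor{t/M}$ blocks, multiplying by $t$, and using $\sum_s 1/s\le\log T+1$ produces the first bracket of $W^{(T)}$: the $\frac{T^2}{M}\cdot\frac{\tau|\cA|(e^{1/\tau}+1)^2}{\gamma}\sqrt{\cdot}$ term and the $16KC_\delta\frac{m}{M}T$ term. The condition $mp^4\ge2\log(\cdot)$ is ensured by the choice of $m$.

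\textbf{Step 3 (bias of the block estimator) — the main obstacle.} With exact empirical means, the block estimator equals the average of $u^{(s')}(a)$ over the rounds in the block where $a$ was proposed, weighted by multiplicity. This differs from the block mean $\frac1M\sum_{s'}u^{(s')}(a)$ by at most the in-block variation, so the total over blocks is at most $P^{(T)}$. Converting the block-averaged estimate into $t\,\bu^{(t)}_{\rm avg}$ loses $\le 2M$ from the incomplete last block. Combined with the $t/\floor{t/M}$ scaling, this gives the $M(\log T+1)P^{(T)}+2M(\log T+2)$ terms. The delicate point is that the selection of rounds where $a$ is proposed is correlated with $\pi^{(t)}$, which is why we bound the bias deterministically through the variation rather than through unbiasedness.

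\textbf{Final step.} A union bound over the three $\delta/3$ events gives the stated bound. Plugging in the given $m,\gamma,M$ then yields the $\tilde\cO\rbr{L^{1/3}T^{23/18}(P^{(T)})^{1/6}}$ rate of \Cref{theorem:avg-ranking-bandit}.
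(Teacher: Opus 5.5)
Your proposal is correct and follows essentially the same route as the paper. It uses the same decomposition into the Azuma term, the $\gamma$-mixing term, and the \Cref{assumption:smooth-black-box} stability term; \Cref{them: u est bound} applied to $\bu_{\rm empirical}$ with the drift controlled by lower bounds on $n^{(t)}(a)$; the ratio bound $n^{(sM)}(a)/(n^{(sM)}(a)-n^{((s-1)M)}(a))=\cO(C_\delta s)$; and the convex-hull bound of the block-estimator bias by the in-block variation. The differences are only bookkeeping: you bound $t\|\tilde\bu^{(t)}_{\rm avg-est}-\bu^{(t)}_{\rm avg}\|$ uniformly in $t$ instead of summing the $\ell_\infty$ errors over $t$, which saves the harmonic-sum $\log T$ factors, and you use a Freedman-type count bound in place of \Cref{lemma:frequency-visit}.
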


\begin{proof}
In the first part of the proof, we will bound $\nbr{\tilde{\bu}^{(t)}_{\rm empirical}-\bu^{(t)}_{\rm empirical}}_\infty$.
According to \Cref{them: u est bound} and union bound, since each action is proposed with probability at least $\frac{\gamma}{|\cA|}$, with probability at least $1-\frac{\delta}{3}$, for any $t\geq m$, we have
\begin{align*}
    \nbr{\tilde{\bu}^{(t)}_{\rm empirical}-\bu^{(t)}_{\rm empirical}}_\infty\le  & \frac{\tau |\cA|\rbr{e^{\frac{1}{\tau}}+1}^2}{\gamma}\sqrt{\frac{\log\rbr{\frac{12|\cA|T}{\delta}}}{m}} + \sum_{s=t-m+1}^{t-1} \nbr{\bu^{(s+1)}_{\rm empirical} - \bu^{(s)}_{\rm empirical}}_\infty.
\end{align*}
Let $\texttt{\#}_{o^{(t)}}\rbr{a}$ be the number of action $a\in\cA$ being proposed in $o^{(t)}$. Then, for any $t\in [T-1]$ and $a\in\cA$, we have
\begin{align*}
    \abr{u^{(t+1)}_{\rm empirical}(a) - u^{(t)}_{\rm empirical}(a)}=& \abr{\frac{u^{(t)}_{\rm empirical}(a)\sum_{s=1}^t \texttt{\#}_{o^{(s)}}\rbr{a} + u^{(t+1)}(a)\texttt{\#}_{o^{(t+1)}}\rbr{a}}{\sum_{s=1}^t \texttt{\#}_{o^{(s)}}\rbr{a} + \texttt{\#}_{o^{(t+1)}}\rbr{a}} - u^{(t)}_{\rm empirical}(a)}\\
    \leq& \abr{\frac{u^{(t)}_{\rm empirical}(a)\texttt{\#}_{o^{(t+1)}}\rbr{a}}{\sum_{s=1}^t \texttt{\#}_{o^{(s)}}\rbr{a} + \texttt{\#}_{o^{(t+1)}}\rbr{a}}} + \abr{\frac{u^{(t+1)}(a)\texttt{\#}_{o^{(t+1)}}\rbr{a}}{\sum_{s=1}^t \texttt{\#}_{o^{(s)}}\rbr{a} + \texttt{\#}_{o^{(t+1)}}\rbr{a}}}\\
    \leq& K\abr{\frac{u^{(t)}_{\rm empirical}(a)}{\sum_{s=1}^t \texttt{\#}_{o^{(s)}}\rbr{a} + \texttt{\#}_{o^{(t+1)}}\rbr{a}}} + K\abr{\frac{u^{(t+1)}(a)}{\sum_{s=1}^t \texttt{\#}_{o^{(s)}}\rbr{a} + \texttt{\#}_{o^{(t+1)}}\rbr{a}}}.
\end{align*}
Next, we will show that since each action will be proposed with probability at least $\frac{\gamma}{|\cA|}$, with high probability, there is a lowerbound for $\sum_{s=1}^t \texttt{\#}_{o^{(s)}}\rbr{a}$ for any timestep $t$.
\begin{lemma}
\label{lemma:frequency-visit}
    Consider the case when actions are proposed with probability at least $p>0$ at each timestep. Then, for any $\delta>0$, any action $a\in\cA$, and $T>0$, with probability at least $1-\delta$, the following holds for any $t\geq \frac{\log \rbr{\frac{|\cA|T}{\delta}}}{p}$:
    \begin{align}
        \exists t'\in [T],~~\text{such~that}~~ t-\frac{\log \rbr{\frac{|\cA|T}{\delta}}}{p}\leq t' \leq t{\rm~~and~~} a\in o^{(t')}.\label{eq:frequency}
    \end{align}
\end{lemma}
\begin{proof}
    For any $t\geq \frac{\log \rbr{\frac{|\cA|T}{\delta}}}{p}$ and action $a\in\cA$, the probability of \Cref{eq:frequency} does not hold is at most
    \begin{align*}
        \rbr{1-p }^{\frac{\log \rbr{\frac{|\cA|T}{\delta}}}{p}}\leq \exp\rbr{-\log \rbr{\frac{|\cA|T}{\delta}}}=\frac{\delta}{|\cA|T}.
    \end{align*}
    Therefore, by union bound, with probability $1-\delta$, \Cref{eq:frequency} holds for any $t\in [T]$ and any action $a\in\cA$.
\end{proof}

For notational simplicity, let $C_{\delta}\coloneqq \frac{|\cA|\log \rbr{\frac{3|\cA|^2T}{\delta}}}{\gamma}$. According to \Cref{lemma:frequency-visit}, with probability at least $1-\frac{\delta}{3|\cA|}$, for any timestep $t\geq C_{\delta}$, we have
\begin{align}
    \sum_{s=1}^t \texttt{\#}_{o^{(s)}}\rbr{a} \geq \floor{\frac{t}{C_{\delta}}} \geq \frac{t}{2 C_{\delta}}.\label{eq:visit-frequency-bound}
\end{align}
By union bound, \Cref{eq:visit-frequency-bound} holds for any action $a$ with probability $1-\frac{\delta}{3}$.

Therefore, for any $t\in [T-1]$ and $a\in\cA$, we have
\begin{align*}
     \abr{u^{(t+1)}_{\rm empirical}(a) - u^{(t)}_{\rm empirical}(a)}\leq 4K\frac{C_{\delta}}{t+1}.
\end{align*}
It holds for $t<C_{\delta}-1$ because $4K\frac{C_{\delta}}{t+1}\geq 4K\geq 2$ and all utilities are bounded in $[-1, 1]$. Finally, by \Cref{them: u est bound} and union bound, with probability at least $1-\frac{2\delta}{3}$, we have
\begin{align*}
    \nbr{\tilde{\bu}^{(t)}_{\rm empirical}-\bu^{(t)}_{\rm empirical}}_\infty\le & \frac{\tau |\cA|\rbr{e^{\frac{1}{\tau}}+1}^2}{\gamma}\sqrt{\frac{\log\rbr{\frac{12|\cA|T}{\delta}}}{m}} + 4K C_{\delta}\sum_{s=t-m+1}^{t-1} \frac{1}{s+1}.
\end{align*}

Let $n^{(t)}(a)\coloneqq \sum_{s=1}^t \texttt{\#}_{o^{(s)}}\rbr{a}$ for any $a\in\cA$ as the number of times action $a$ is proposed up to timestep $t$. For any $a\in\cA$ and $t\geq M$, we define %
\begin{align*}
    &u^{(t)}_{\rm avg-est}(a)\coloneqq \frac{1}{\floor{t/M}}\sum_{s=1}^{\floor{t/M}} \frac{u^{(s\cdot M)}_{\rm empirical}(a) n^{(s\cdot M)}(a) - u^{((s-1) M)}_{\rm empirical}(a) n^{((s-1) M)}(a)}{n^{(s\cdot M)}(a)-n^{((s-1) M)}(a)}\\
    &\tilde u^{(t)}_{\rm avg-est}(a)\coloneqq \frac{1}{\floor{t/M}}\sum_{s=1}^{\floor{t/M}} \frac{\tilde u^{(s\cdot M)}_{\rm empirical}(a) n^{(s\cdot M)}(a) - \tilde u^{((s-1) M)}_{\rm empirical}(a) n^{((s-1) M)}(a)}{n^{(s\cdot M)}(a)-n^{((s-1) M)}(a)}.
\end{align*}
Note that we define $u^{(0)}_{\rm empirical}(a) = \tilde u^{(0)}_{\rm empirical}(a) = n^{(0)}_{\rm empirical}(a) = 0$. For $t<M$, we define $u^{(t)}_{\rm avg-est}(a)=\tilde u^{(t)}_{\rm avg-est}(a)=0$ for any action $a\in\cA$.
In the rest of the proof, we will bound $\nbr{\tilde \bu^{(t)}_{\rm avg-est}-\bu^{(t)}_{\rm avg-est}}_\infty$ and $\nbr{\bu^{(t)}_{\rm avg-est}-\bu^{(t)}_{\rm avg}}_\infty$ individually. 

\subsection{$\nbr{\tilde{\mathbf{u}}^{(t)}_{\rm avg-est}-\mathbf{u}^{(t)}_{\rm avg-est}}_\infty$ upper bound}

For any $a\in\cA$, we have
\begin{align*}
    \abr{\tilde u^{(t)}_{\rm avg-est}(a)-u^{(t)}_{\rm avg-est}(a)}\leq& \frac{1}{\floor{t/M}}\sum_{s=1}^{\floor{t/M}} \frac{ n^{(s\cdot M)}(a)}{n^{(s\cdot M)}(a)-n^{((s-1) M)}(a)}\abr{\tilde u^{(s\cdot M)}_{\rm empirical}(a) - u^{(s\cdot M)}_{\rm empirical}(a)} \\
    &+ \frac{1}{\floor{t/M}}\sum_{s=1}^{\floor{t/M}} \frac{ n^{((s-1) M)}(a)}{n^{(s\cdot M)}(a)-n^{((s-1) M)}(a)}\abr{\tilde u^{((s-1) M)}_{\rm empirical}(a) - u^{((s-1) M)}_{\rm empirical}(a)}.
\end{align*}

According to \Cref{lemma:frequency-visit}, $n^{(s\cdot M)}(a)-n^{((s-1) M)}(a)\geq \frac{M}{2C_{\delta}}$ when $M \geq C_{\delta}$. Therefore, when $M \geq C_{\delta}$, since $n^{(s\cdot M)}(a)\leq s\cdot M$, we have
\begin{align*}
    &\abr{\tilde u^{(t)}_{\rm avg-est}(a)-u^{(t)}_{\rm avg-est}(a)}\\
    \leq& \frac{2 C_{\delta}}{\floor{t/M}}\sum_{s=1}^{\floor{t/M}} s\rbr{\abr{\tilde u^{(s\cdot M)}_{\rm empirical}(a) - u^{(s\cdot M)}_{\rm empirical}(a)}+\abr{\tilde u^{((s-1) M)}_{\rm empirical}(a) - u^{((s-1) M)}_{\rm empirical}(a)}}.
\end{align*}

\subsection{$\nbr{\mathbf{u}^{(t)}_{\rm avg-est}-\mathbf{u}^{(t)}_{\rm avg}}_\infty$ upper bound}

For any $a\in\cA$,
\begin{align*}
    &\nbr{u^{(t)}_{\rm avg-est}(a)-u^{(t)}_{\rm avg}(a)}_\infty\\
    =&\abr{\frac{1}{\floor{t/M}}\sum_{s=1}^{\floor{t/M}} \frac{u^{(s\cdot M)}_{\rm empirical}(a) n^{(s\cdot M)}(a) - u^{((s-1) M)}_{\rm empirical}(a) n^{((s-1) M)}(a)}{n^{(s\cdot M)}(a)-n^{((s-1) M)}(a)} - u^{(t)}_{\rm avg}(a)}\\
    \leq& \underbrace{\abr{\frac{1}{\floor{t/M}}\sum_{s=1}^{\floor{t/M}} \frac{u^{(s\cdot M)}_{\rm empirical}(a) n^{(s\cdot M)}(a) - u^{((s-1) M)}_{\rm empirical}(a) n^{((s-1) M)}(a)}{n^{(s\cdot M)}(a)-n^{((s-1) M)}(a)} - u^{(M\floor{t/M})}_{\rm avg}(a)}}_{\spadesuit}\\
    &+ \underbrace{\abr{u^{(t)}_{\rm avg}(a)-u^{(M\floor{t/M})}_{\rm avg}(a)}}_{\clubsuit}.
\end{align*}
Note that $\clubsuit$ can be bounded by
\begin{align*}
    \clubsuit=&\abr{\frac{(M\floor{t/M})u^{(M\floor{t/M})}_{\rm avg}(a) + \sum_{s=M\floor{t/M}+1}^t u^{(s)}(a)}{t} - u^{(M\floor{t/M})}_{\rm avg}(a)}\\
    \leq& \frac{M}{t}\abr{u^{(M\floor{t/M})}_{\rm avg}(a)} + \frac{1}{t}\abr{\sum_{s=M\floor{t/M}+1}^t u^{(s)}(a)}\leq \frac{2M}{t}.
\end{align*}
For $\spadesuit$, we have
\begin{align*}
    \spadesuit=&\abr{\frac{1}{\floor{t/M}}\sum_{s=1}^{\floor{t/M}} \rbr{\frac{u^{(s\cdot M)}_{\rm empirical}(a) n^{(s\cdot M)}(a) - u^{((s-1) M)}_{\rm empirical}(a) n^{((s-1) M)}(a)}{n^{(s\cdot M)}(a)-n^{((s-1) M)}(a)} - \frac{1}{M}\sum_{s'=(s-1)M+1}^{s\cdot M} u^{(s')}(a)}}\\
    \leq& \frac{1}{\floor{t/M}}\sum_{s=1}^{\floor{t/M}} \abr{\frac{u^{(s\cdot M)}_{\rm empirical}(a) n^{(s\cdot M)}(a) - u^{((s-1) M)}_{\rm empirical}(a) n^{((s-1) M)}(a)}{n^{(s\cdot M)}(a)-n^{((s-1) M)}(a)} - \frac{1}{M}\sum_{s'=(s-1)M+1}^{s\cdot M} u^{(s')}(a)}.
\end{align*}
When $n^{(s\cdot M)}(a)-n^{((s-1) M)}(a)>0$, both $\frac{u^{(s\cdot M)}_{\rm empirical}(a) n^{(s\cdot M)}(a) - u^{((s-1) M)}_{\rm empirical}(a) n^{((s-1) M)}(a)}{n^{(s\cdot M)}(a)-n^{((s-1) M)}(a)}$ and $\frac{1}{M}\sum_{s'=(s-1)M+1}^{s\cdot M} u^{(s')}(a)$ are in the convex hull of $\cbr{u^{(s')}(a)}_{s'=(s-1)M+1}^{s\cdot M}$. Therefore,
\begin{align*}
    &\abr{\frac{u^{(s\cdot M)}_{\rm empirical}(a) n^{(s\cdot M)}(a) - u^{((s-1) M)}_{\rm empirical}(a) n^{((s-1) M)}(a)}{n^{(s\cdot M)}(a)-n^{((s-1) M)}(a)} - \frac{1}{M}\sum_{s'=(s-1)M+1}^{s\cdot M} u^{(s')}(a)}\\
    \leq& \max_{(s-1)M+1\leq s',s''\leq s\cdot M} \abr{u^{(s')}(a)-u^{(s'')}(a)}\\
    \leq& \sum_{s'=(s-1)M+1}^{s\cdot M-1} \abr{u^{(s'+1)}(a)-u^{(s')}(a)}.
\end{align*}
Therefore, for any $t\geq M$ and $a\in\cA$, we have
\begin{align*}
    \abr{u^{(t)}_{\rm avg-est}(a)-u^{(t)}_{\rm avg}(a)}_\infty\leq& \frac{1}{\floor{t/M}}\sum_{s=1}^{\floor{t/M}} \sum_{s'=(s-1)M+1}^{s\cdot M-1} \abr{u^{(s'+1)}(a)-u^{(s')}(a)} + \frac{2M}{t}.
\end{align*}
By combining all the pieces together, we have 
\begin{align*}
    &\abr{\tilde u^{(t)}_{\rm avg-est}(a) - u^{(t)}_{\rm avg}(a)}\\
    \leq& \frac{2 C_{\delta}}{\floor{t/M}}\sum_{s=1}^{\floor{t/M}} s\rbr{\abr{\tilde u^{(s\cdot M)}_{\rm empirical}(a) - u^{(s\cdot M)}_{\rm empirical}(a)}+\abr{\tilde u^{((s-1) M)}_{\rm empirical}(a) - u^{((s-1) M)}_{\rm empirical}(a)}}\\
    & + \frac{1}{\floor{t/M}}\sum_{s=1}^{\floor{t/M}} \sum_{s'=(s-1)M+1}^{s\cdot M-1} \abr{u^{(s'+1)}(a)-u^{(s')}(a)} + \frac{2M}{t}.
\end{align*}
Then, since
\begin{align*}
    \sum_{t=M}^T \frac{1}{\floor{t/M}}=\sum_{s=1}^{\floor{T/M}} \sum_{t=s\cdot M}^{\min\cbr{(s+1)\cdot M - 1, T}} \frac{1}{s} \leq \sum_{s=1}^{\floor{T/M}} \frac{M}{s},
\end{align*}
we have
\begin{align*}
    &\sum_{t=1}^T \abr{\tilde u^{(t)}_{\rm avg-est}(a) - u^{(t)}_{\rm avg}(a)}\\
    =&\sum_{t=M}^T \abr{\tilde u^{(t)}_{\rm avg-est}(a) - u^{(t)}_{\rm avg}(a)} + \sum_{t=1}^{M-1} \abr{\tilde u^{(t)}_{\rm avg-est}(a) - u^{(t)}_{\rm avg}(a)}\\
    \leq& 2C_{\delta} \sum_{s=1}^{\floor{T/M}} s\rbr{\abr{\tilde u^{(s\cdot M)}_{\rm empirical}(a) - u^{(s\cdot M)}_{\rm empirical}(a)}+\abr{\tilde u^{((s-1) M)}_{\rm empirical}(a) - u^{((s-1) M)}_{\rm empirical}(a)}} \sum_{s'=1}^{\floor{T/M}} \frac{M}{s'}\\
    &+ \sum_{s=1}^{\floor{T/M}} \rbr{\sum_{s'=(s-1)M+1}^{s\cdot M-1} \abr{u^{(s'+1)}(a)-u^{(s')}(a)}} \cdot\rbr{\sum_{s'=1}^{\floor{T/M}} \frac{M}{s'}} + \sum_{t=1}^T \frac{2M}{t} + 2M\\
    \leq& 2C_{\delta} M \sum_{s=1}^{\floor{T/M}} s\rbr{\abr{\tilde u^{(s\cdot M)}_{\rm empirical}(a) - u^{(s\cdot M)}_{\rm empirical}(a)}+\abr{\tilde u^{((s-1) M)}_{\rm empirical}(a) - u^{((s-1) M)}_{\rm empirical}(a)}} \rbr{\log \rbr{\floor{T/M}} + 1}\\
    &+ M\sum_{s=1}^{\floor{T/M}} \rbr{\sum_{s'=(s-1)M+1}^{s\cdot M-1} \abr{u^{(s'+1)}(a)-u^{(s')}(a)}} \cdot\rbr{\log \rbr{\floor{T/M}} + 1} + 2M\rbr{\log T + 1} + 2M.
\end{align*}
When $s=1$, $s\abr{\tilde u^{((s-1) M)}_{\rm empirical}(a) - u^{((s-1) M)}_{\rm empirical}(a)}=0$ by definition. When $s>1$, since $M\geq 2m$, we have
\begin{align*}
    \frac{s}{(s-1)M-m+2}\leq \frac{s}{(s-1)M/2 + 2}\leq \frac{s}{(s-1)M/2}\leq \frac{4s}{s\cdot M}=\frac{4}{M}.
\end{align*}
Hence,
\begin{align*}
    s\abr{\tilde u^{((s-1) M)}_{\rm empirical}(a) - u^{((s-1) M)}_{\rm empirical}(a)}\leq& s\frac{\tau |\cA|\rbr{e^{\frac{1}{\tau}}+1}^2}{\gamma}\sqrt{\frac{\log\rbr{\frac{12|\cA|T}{\delta}}}{m}} + 4K C_{\delta}\sum_{s'=(s-1)M-m+1}^{(s-1) M-1} \frac{s}{s'+1}\\
    \leq& \frac{T}{M}\frac{\tau |\cA|\rbr{e^{\frac{1}{\tau}}+1}^2}{\gamma}\sqrt{\frac{\log\rbr{\frac{12|\cA|T}{\delta}}}{m}} + 16K C_{\delta}\frac{m}{M}.
\end{align*}
Therefore,
\begin{align*}
    &\sum_{t=1}^T \abr{\tilde u^{(t)}_{\rm avg-est}(a) - u^{(t)}_{\rm avg}(a)}\\
    \leq& 4C_{\delta} M\cdot \floor{T/M} \rbr{\log T + 1} \rbr{\frac{T}{M}\frac{\tau |\cA|\rbr{e^{\frac{1}{\tau}}+1}^2}{\gamma}\sqrt{\frac{\log\rbr{\frac{12|\cA|T}{\delta}}}{m}} + 16K C_{\delta}\frac{m}{M}}\\
    & + M\rbr{\log T + 1} P^{(T)} + 2M\rbr{\log T + 1}\\
    \leq& 4C_{\delta} \rbr{\log T + 1} \rbr{\frac{T^2}{M}\frac{\tau |\cA|\rbr{e^{\frac{1}{\tau}}+1}^2}{\gamma}\sqrt{\frac{\log\rbr{\frac{12|\cA|T}{\delta}}}{m}} + 16K C_{\delta}\frac{m}{M} T}\\
    & + M\rbr{\log T + 1} P^{(T)} + 2M\rbr{\log T + 2}.
\end{align*}
Lastly, similar to the proof in \Cref{proof of FTRL bound}, let $\overbar \pi^{(t+1)}={\rm Alg}\rbr{\rbr{\bu^{(s)}}_{s=1}^t}$. %
Then, we have%
\begin{align*}
    &\abr{R^{(T),{\rm external}}-R^{(T),{\rm external}}\rbr{{\rm Alg}, \rbr{\bu^{(t)}}_{t=1}^T}}\\
    =& \abr{\sum_{t=1}^T \inner{\bu^{(t)}}{\pi^{(t)} - \overbar\pi^{(t)}}} \\
    \leq& \sum_{t=1}^T \nbr{\bu^{(t)}}\cdot\nbr{\pi^{(t)} - \overbar\pi^{(t)}}\\
    \leq& \sqrt{|\cA|} \sum_{t=1}^T \nbr{(1-\gamma){\rm Alg}\rbr{\rbr{\tilde\bu^{(t)}_{\rm avg-est}}_{s=1}^t} + \gamma\frac{\one\rbr{\cA}}{|\cA|} - \overbar\pi^{(t)}}\\
    \leq& (1-\gamma)\sqrt{|\cA|} \sum_{t=1}^T \nbr{{\rm Alg}\rbr{\rbr{\tilde\bu^{(t)}_{\rm avg-est}}_{s=1}^t} - \overbar\pi^{(t)}} + \gamma\sqrt{|\cA|} \sum_{t=1}^T \nbr{ \frac{\one\rbr{\cA}}{|\cA|} - \overbar\pi^{(t)}}\\
    \leq& \sqrt{|\cA|} \sum_{t=1}^T \nbr{{\rm Alg}\rbr{\rbr{\tilde\bu^{(t)}_{\rm avg-est}}_{s=1}^t} - \overbar\pi^{(t)}} + 2\gamma\sqrt{|\cA|} T.
\end{align*}
Further, by \Cref{assumption:smooth-black-box}, we have
\begin{align*}
    \sqrt{|\cA|} \sum_{t=1}^T \nbr{{\rm Alg}\rbr{\rbr{\tilde\bu^{(t)}_{\rm avg-est}}_{s=1}^t} - \overbar\pi^{(t)}}\leq& L\sqrt{|\cA|} \sum_{t=1}^T t \nbr{\tilde\bu^{(t)}_{\rm avg-est} - \bu^{(t)}_{\rm avg}}\\
    \leq& L|\cA| T \sum_{t=1}^T \nbr{\tilde\bu^{(t)}_{\rm avg-est} - \bu^{(t)}_{\rm avg}}_\infty.
\end{align*}
By combining all the pieces together, we have
\begin{align*}
    R^{(T),{\rm external}}\leq& R^{(T),{\rm external}}\rbr{{\rm Alg}, \rbr{\bu^{(t)}}_{t=1}^T} \\
    &+ L|\cA| T\rbr{4C_{\delta} \rbr{\log T + 1} \rbr{\frac{T^2}{M}\frac{\tau |\cA|\rbr{e^{\frac{1}{\tau}}+1}^2}{\gamma}\sqrt{\frac{\log\rbr{\frac{12|\cA|T}{\delta}}}{m}} + 16K C_{\delta}\frac{m}{M} T}} \\
    &+ L|\cA| T \rbr{ M\rbr{\log T + 1} P^{(T)} + 2M\rbr{\log T + 2}} + 2\gamma\sqrt{|\cA|} T\\
    \leq& R^{(T),{\rm external}}\rbr{{\rm Alg}, \rbr{\bu^{(t)}}_{t=1}^T} \\
    &+ \tilde \cO\rbr{\frac{\rbr{\log \rbr{\frac{1}{\delta}}}^{\frac{3}{2}}}{\gamma^2} \frac{L T^3}{M\sqrt{m}} + \frac{m\rbr{\log \rbr{\frac{1}{\delta}}}^2}{\gamma^2 M} L T^2 + LMP^{(T)}T+2\gamma T},
\end{align*}
where $\tilde \cO$ hides all the $\log T$ terms. Lastly, by \Cref{lemma: regret bound 1}, with probability at least $1-\frac{\delta}{3}$, we have
\begin{align*}
    R^{(T)}\leq R^{(T),{\rm external}} + 2\sqrt{2T\log\rbr{\frac{3}{\delta}}}.
\end{align*}
Moreover, by taking $\gamma=\min\cbr{L^{\frac{1}{3}} T^{\frac{5}{18}}\rbr{P^{(T)}}^{\frac{1}{6}}, 1}$, $m=2T^{\frac{2}{3}}|\cA|^4 \log\rbr{\frac{12|\cA| T}{\delta}}$, and $M=\max\cbr{4T^{\frac{5}{6}}\rbr{P^{(T)}}^{-\frac{1}{2}} |\cA|^4 \log\rbr{\frac{12|\cA|^2 T}{\delta}}, 2m}$, we can easily verify that $m\geq \frac{2\log \rbr{\frac{6}{\delta}}}{\gamma^4}|\cA|^4$ and $M\geq C_{\delta}$. Finally, by a union bound argument, we complete the proof.\qedhere
\end{proof}

\section{Proof of \Cref{section:game}}
\label{section:proof-of-game}

\begin{lemma}
\label{lemma:variation-of-player}
    For any $T>0$ and sequence of strategy profiles $\rbr{\bpi^{(1)}, \bpi^{(2)},\dots,\bpi^{(T)}}$, the variation of utility vectors of any player $i\in [N]$ satisfies that %
    \begin{align}
        \sum_{t=2}^T\nbr{\bu^{(t)}_i-\bu^{(t-1)}_i}\leq \sqrt{A}\prod_{j'=1}^N |\cA_{j'}| \sum_{t=2}^T \sum_{j=1}^N \nbr{\pi_j^{(t)}-\pi_j^{(t-1)}},
    \end{align}
    where $A=\max_j |\cA_j|$.
\end{lemma}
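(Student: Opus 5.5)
Fix a player $i$ and a timestep $t\geq 2$. The plan is to bound each increment $\nbr{\bu^{(t)}_i-\bu^{(t-1)}_i}$ by the strategy changes of the other players, and then sum over $t$. By definition, $u^{(t)}_i(a_i)=\sum_{\ba_{-i}}\cU_i(a_i,\ba_{-i})\prod_{j\neq i}\pi_j^{(t)}(a_j)$, which is multilinear in $(\pi^{(t)}_j)_{j\neq i}$. So the first step is to control the change in this product.

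For that, I would telescope the difference of products by swapping one player's strategy at a time. Order the other players as $j_1,\dots,j_{N-1}$ and define hybrid profiles in which the first $k$ of them use their time-$t$ strategy and the rest use their time-$(t-1)$ strategy. Then
$$\prod_{j\neq i}\pi_j^{(t)}(a_j)-\prod_{j\neq i}\pi_j^{(t-1)}(a_j)=\sum_{k=1}^{N-1}\rbr{\pi_{j_k}^{(t)}(a_{j_k})-\pi_{j_k}^{(t-1)}(a_{j_k})}\prod_{l<k}\pi_{j_l}^{(t)}(a_{j_l})\prod_{l>k}\pi_{j_l}^{(t-1)}(a_{j_l}).$$
Every probability factor is at most $1$ and $\abr{\cU_i}\leq 1$. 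Hence, for each $a_i$,
$$\abr{u^{(t)}_i(a_i)-u^{(t-1)}_i(a_i)}\leq \sum_{\ba_{-i}}\sum_{k}\abr{\pi_{j_k}^{(t)}(a_{j_k})-\pi_{j_k}^{(t-1)}(a_{j_k})}\leq \prod_{j'}|\cA_{j'}|\sum_{j=1}^N\nbr{\pi_j^{(t)}-\pi_j^{(t-1)}}_\infty.$$
Here I bound the count of $\ba_{-i}$ crudely by $\prod_{j'}|\cA_{j'}|$, and I add the nonnegative $j=i$ term so the sum runs over all $N$ players, as in the statement. Since $\nbr{\cdot}_\infty\leq\nbr{\cdot}_2$, the right-hand side is at most $\prod_{j'}|\cA_{j'}|\sum_j\nbr{\pi_j^{(t)}-\pi_j^{(t-1)}}$.

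The last step converts the coordinatewise bound into an $L_2$ bound over the $|\cA_i|\leq A$ coordinates of $\bu_i$. This costs a factor $\sqrt{|\cA_i|}\leq\sqrt{A}$, giving
$$\nbr{\bu^{(t)}_i-\bu^{(t-1)}_i}\leq\sqrt{A}\prod_{j'}|\cA_{j'}|\sum_j\nbr{\pi_j^{(t)}-\pi_j^{(t-1)}}.$$
Summing over $t=2,\dots,T$ then yields the claim.

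The main step is the hybrid telescoping of the product. The rest is careful but loose counting, and the stated constant leaves plenty of slack, so the crude bounds suffice and no step should present a real obstacle.
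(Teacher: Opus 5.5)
Your proof is correct and follows essentially the paper's argument. Your hybrid telescoping of the product is exactly the paper's recursive use of $|ab-a'b'|\le|a-a'|+|b-b'|$ for $a,b,a',b'\in[0,1]$; after that, both proofs sum over $\ba_{-i}$ using $|\cU_i|\le 1$ and pay $\sqrt{|\cA_i|}$ for the $L_2$ norm over $\cA_i$. The only difference is cosmetic: you pass through $\|\cdot\|_\infty\le\|\cdot\|_2$ and land directly on the $L_2$ norms in the statement, whereas the paper's last line uses $\|\pi_{j'}^{(t)}-\pi_{j'}^{(t-1)}\|_1$ and leaves the conversion implicit.
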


\begin{proof}
For any timestep $t$, player $i\in [N]$, and joint action $\ba_{-i}\in \bigtimes_{j\not=i}\cA_j$, let $\pi_{-i}^{(t)}(\ba_{-i})\coloneqq \prod_{j\not=i} \pi_j^{(t)}(a_j)$.

    Then, for any timestep $t$, player $i\in [N]$, and action $a_i\in \cA_i$, we have
    \begin{align*}
        \abr{u^{(t)}_i(a_i) - u^{(t-1)}_i(a_i) }\leq& \abr{\sum_{\ba'\in\bigtimes_{j=1}^N \cA_j} \cU_i(\ba')\ind\rbr{a_i'=a_i} \rbr{\pi_{-i}^{(t)}(\ba_{-i}') - \pi_{-i}^{(t-1)}(\ba_{-i}')}}\\
        =& \abr{\inner{\rbr{\cU_i(a_i, \ba_{-i}')}_{\ba_{-i}'\in \bigtimes_{j\not=i}\cA_j}}{\pi^{(t)}_{-i} - \pi^{(t-1)}_{-i}}}\\
        \leq& \nbr{\rbr{\cU_i(a_i, \ba_{-i}')}_{\ba_{-i}'\in \bigtimes_{j\not=i}\cA_j}}_\infty\cdot \nbr{\pi^{(t)}_{-i} - \pi^{(t-1)}_{-i}}_1\\
        \leq& \nbr{\pi^{(t)}_{-i} - \pi^{(t-1)}_{-i}}_1.
    \end{align*}
    Further, for any $a,b,a',b'\in [0,1]$, we have $\abr{ab-a'b'}=\abr{ab-ab'+ab'-a'b'}\leq a\abr{b-b'}+\abr{a-a'}b'\leq \abr{a-a'}+\abr{b-b'}$. Therefore, by recursively using it, for any $\ba_{-i}\in \bigtimes_{j\not=i}\cA_j$, we have
    \begin{align*}
        \abr{\pi^{(t)}_{-i}(\ba_{-i}) - \pi^{(t-1)}_{-i}(\ba_{-i})}=&\abr{\prod_{j\not= i} \pi^{(t)}_j(a_j) - \prod_{j\not= i} \pi^{(t-1)}_j(a_j)}\leq \sum_{j\not=i} \abr{\pi_j^{(t)}(a_j) - \pi_j^{(t-1)}(a_j)}.
    \end{align*}
    Finally,
    \begin{equation*}
        \nbr{u^{(t)}_i - u^{(t-1)}_i }\leq \sqrt{|\cA_i|} \nbr{\pi^{(t)}_{-i} - \pi^{(t-1)}_{-i}}_1\leq \sqrt{|\cA_i|} \prod_{j=1}^N |\cA_j| \sum_{j'\not=i} \nbr{\pi_{j'}^{(t)} - \pi_{j'}^{(t-1)}}_1.\qedhere
    \end{equation*}
\end{proof}

\subsection{Proof of \Cref{theorem:instant-ranking-game} and \Cref{theorem:avg-ranking-game}}
\label{section:game-theorem-proof}

Before proving \Cref{theorem:instant-ranking-game} and \Cref{theorem:avg-ranking-game}, we will show that when \Cref{assumption:sublinear-variation-strategy} is satisfied, the strategy variation is bounded.
\begin{lemma}
\label{lemma:sublinear-p-t}
    Suppose  \Cref{assumption:sublinear-variation-strategy} is satisfied. For both full-information and bandit settings, \Cref{alg: Instant Reward} satisfies the following,
    \begin{align*}
        \sum_{t=1}^{T-1} \nbr{\pi^{(t)} - \pi^{(t+1)}}\leq \eta T.
    \end{align*}
    Suppose  \Cref{assumption:smooth-black-box} is also satisfied, then the following holds for \Cref{alg: FTRL} in the bandit setting,
    \begin{align*}
        \sum_{t=1}^{T-1} \nbr{\pi^{(t)} - \pi^{(t+1)}}\leq \eta T + 2\sqrt{|\cA|} LT.
    \end{align*}
\end{lemma}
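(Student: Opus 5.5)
The plan is to bound each consecutive difference $\nbr{\pi^{(t)}-\pi^{(t+1)}}$ by a per-step quantity and then sum over $t\in[T-1]$, which produces the factor $T$.

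\textbf{\Cref{alg: Instant Reward}.} In the full-information case, $\pi^{(t+1)}={\rm Alg}\rbr{\rbr{\tilde\bu^{(s)}}_{s=1}^{t}}$ and $\pi^{(t)}={\rm Alg}\rbr{\rbr{\tilde\bu^{(s)}}_{s=1}^{t-1}}$. These are outputs of ${\rm Alg}$ on two input sequences, the second extending the first by one vector. The estimates $\tilde\bu^{(s)}$ lie in $[-1,1]^{\cA}$ because \Cref{alg: estimate} projects every coordinate onto $[-1,1]$ and sets the last coordinate to $0$. Hence \Cref{assumption:sublinear-variation-strategy} applies directly and gives $\nbr{\pi^{(t)}-\pi^{(t+1)}}\le\eta$.

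In the bandit case, both strategies are the same convex combination $(1-\gamma)\,{\rm Alg}(\cdot)+\gamma\,\one\rbr{\cA}/|\cA|$. The uniform parts cancel, so the difference is $(1-\gamma)$ times the difference of the ${\rm Alg}$ outputs, which is at most $\eta$. Summing over $t\in[T-1]$ gives $\eta T$. The case $t=1$, where $\pi^{(1)}$ is uniform, is handled by reading ${\rm Alg}$ of the empty sequence as the uniform distribution (or absorbing this single step into the bound).

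\textbf{\Cref{alg: FTRL}, bandit setting.} This is the step I expect to be the main obstacle. Here $\pi^{(t+1)}=(1-\gamma){\rm Alg}\rbr{\rbr{\tilde\bu^{(t)}_{\rm avg-est}}_{s=1}^{t}}+\gamma\one\rbr{\cA}/|\cA|$. Consecutive inputs are not prefix extensions: every vector in the sequence is replaced by the current estimate $\tilde\bu^{(t)}_{\rm avg-est}$. I would handle this with a triangle inequality through an intermediate sequence, after cancelling the uniform part as before. Define the intermediate strategy ${\rm Alg}\rbr{\rbr{\tilde\bu^{(t-1)}_{\rm avg-est}}_{s=1}^{t}}$, which appends one more copy of the old estimate to the previous input. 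Then
\begin{itemize}
\item comparing ${\rm Alg}\rbr{\rbr{\tilde\bu^{(t-1)}_{\rm avg-est}}_{s=1}^{t-1}}$ with the intermediate strategy is a one-vector prefix extension by a vector in $[-1,1]^{\cA}$, so \Cref{assumption:sublinear-variation-strategy} bounds it by $\eta$;
\item comparing the intermediate strategy with ${\rm Alg}\rbr{\rbr{\tilde\bu^{(t)}_{\rm avg-est}}_{s=1}^{t}}$ involves sequences of equal length, so \Cref{assumption:smooth-black-box} bounds it by $L\nbr{t\,\tilde\bu^{(t)}_{\rm avg-est}-t\,\tilde\bu^{(t-1)}_{\rm avg-est}}$.
\end{itemize}

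The key remaining point is that $t\nbr{\tilde\bu^{(t)}_{\rm avg-est}-\tilde\bu^{(t-1)}_{\rm avg-est}}$ must be $O(\sqrt{|\cA|})$, not $O(t)$. The estimate in \Cref{eq:u-avg-est-def} is piecewise constant and changes only when $\floor{t/M}$ increments, i.e.\ at block boundaries. At those boundaries, a running average of $k$ block terms changes by roughly $1/k$ times a bounded difference. To make this rigorous, I would note that each block term is a convex combination of utilities within that block (the convex-hull argument from \Cref{section:proof-avg-ranking-bandit}), hence bounded by $1$ in each coordinate, and track how the average shifts when one term is added.

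If this bookkeeping turns out messier than expected, I would accept an extra constant factor. The intended bound is $\sqrt{|\cA|}\cdot 2$ per step, which yields the stated $\eta T+2\sqrt{|\cA|}LT$ after summing over $t$.
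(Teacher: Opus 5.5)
Your decomposition is the paper's. You cancel the uniform mixing and pass through the intermediate strategy ${\rm Alg}\rbr{\rbr{\tilde\bu^{(t-1)}_{\rm avg-est}}_{s=1}^{t}}$. You pay $\eta$ for the one-vector extension via \Cref{assumption:sublinear-variation-strategy}, and $Lt\nbr{\tilde\bu^{(t)}_{\rm avg-est}-\tilde\bu^{(t-1)}_{\rm avg-est}}$ for the equal-length comparison via \Cref{assumption:smooth-black-box}.

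The gap is your ``key remaining point'': the claim $t\nbr{\tilde\bu^{(t)}_{\rm avg-est}-\tilde\bu^{(t-1)}_{\rm avg-est}}=O\rbr{\sqrt{|\cA|}}$ is false. At a block boundary $t=kM$, the running average has $k=t/M$ terms, not $t$ terms. Write $\mathbf{b}_k$ for the new block term. Then $\tilde\bu^{(t)}_{\rm avg-est}-\tilde\bu^{(t-1)}_{\rm avg-est}=\frac{1}{k}\rbr{\mathbf{b}_k-\tilde\bu^{(t-1)}_{\rm avg-est}}$, so $t\nbr{\tilde\bu^{(t)}_{\rm avg-est}-\tilde\bu^{(t-1)}_{\rm avg-est}}=M\nbr{\mathbf{b}_k-\tilde\bu^{(t-1)}_{\rm avg-est}}$. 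This can be as large as $2M\sqrt{|\cA|}$. Under the paper's parameters $M$ grows polynomially in $T$, so your per-step bound $2\sqrt{|\cA|}L$ does not hold in general at boundary steps.

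The stated total is still correct, but only through an amortization your argument does not carry out. By your own observation the estimate is piecewise constant. At non-boundary steps the second term therefore vanishes and the step costs only $\eta$. At the at most $T/M$ boundary steps the cost is $\eta+2LM\sqrt{|\cA|}$. Summing gives $\eta T+\frac{T}{M}\cdot 2LM\sqrt{|\cA|}=\eta T+2\sqrt{|\cA|}LT$. This is exactly how the paper concludes: a jump of size $\propto M$ is paid for by jumps occurring only once every $M$ steps. Your ``$2\sqrt{|\cA|}L$ at every step'' gives the same number, but not for the reason you state.

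A smaller point: the convex-hull argument you cite bounds block terms built from the true $u_{\rm empirical}$. However, $\tilde\bu_{\rm avg-est}$ is built from the estimates $\tilde u_{\rm empirical}$. For those, the block term $\frac{\tilde u^{(sM)}_{\rm empirical}(a)n^{(sM)}(a)-\tilde u^{((s-1)M)}_{\rm empirical}(a)n^{((s-1)M)}(a)}{n^{(sM)}(a)-n^{((s-1)M)}(a)}$ is not a convex combination of anything. For example, $\tilde u^{(sM)}_{\rm empirical}(a)=1$ and $\tilde u^{((s-1)M)}_{\rm empirical}(a)=-1$ give $\frac{n^{(sM)}(a)+n^{((s-1)M)}(a)}{n^{(sM)}(a)-n^{((s-1)M)}(a)}>1$.

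The paper simply asserts $\nbr{\tilde\bu^{(t-1)}_{\rm avg-est}},\nbr{\mathbf{b}_k}\le\sqrt{|\cA|}$. It also implicitly needs $\tilde\bu_{\rm avg-est}\in[-1,1]^{\cA}$ to invoke \Cref{assumption:sublinear-variation-strategy} in the $\eta$-step. So this is an assumption you share with the paper rather than something your justification establishes. Clipping each block term to $[-1,1]$ would make it hold without hurting the regret analysis, since the true block terms already lie in $[-1,1]$.
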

\begin{proof}
    For \Cref{alg: Instant Reward} and the full-information setting, the proof simply follows from the fact that $\tilde\bu^{(t)}\in [-1, 1]^{\cA}$ and \Cref{assumption:sublinear-variation-strategy}.

    For \Cref{alg: Instant Reward} and the bandit setting, we have
    \begin{align*}
        \nbr{\pi^{(t+1)}-\pi^{(t)}}= (1-\gamma) \nbr{{\rm Alg}\rbr{\rbr{\tilde \bu^{(s)}}_{s=1}^{t+1}}-{\rm Alg}\rbr{\rbr{\tilde \bu^{(s)}}_{s=1}^t}}\leq \eta.
    \end{align*}
    Thus, we can conclude the proof.

    For \Cref{alg: FTRL} and the bandit setting, for any $t\not\equiv 0\pmod{M}$, we have $\tilde\bu^{(t)}_{\rm avg-est}=\tilde\bu^{(t-1)}_{\rm avg-est}$. Therefore, by \Cref{assumption:sublinear-variation-strategy}, we have
    \begin{align*}
        \nbr{\pi^{(t-1)}-\pi^{(t)}}\leq& \nbr{{\rm Alg}\rbr{\rbr{\tilde\bu^{(t-1)}_{\rm avg-est}}_{s=1}^{t-1}}-{\rm Alg}\rbr{\rbr{\tilde\bu^{(t)}_{\rm avg-est}}_{s=1}^t}}\\
        =& \nbr{{\rm Alg}\rbr{\rbr{\tilde\bu^{(t-1)}_{\rm avg-est}}_{s=1}^{t-1}}-{\rm Alg}\rbr{\rbr{\tilde\bu^{(t-1)}_{\rm avg-est}}_{s=1}^t}} \leq \eta.
    \end{align*}
    For any $t\equiv 0\pmod{M}$, let $\bu=\frac{\tilde u^{(t)}_{\rm empirical}(a) n^{(t)}(a) - \tilde u^{(t-M)}_{\rm empirical}(a) n^{(t-M)}(a)}{n^{(t)}(a)-n^{(t-M)}(a)}$, then we have
    \begin{align*}
        \nbr{\tilde\bu^{(t)}_{\rm avg-est}-\tilde\bu^{(t-1)}_{\rm avg-est}}=&\nbr{\rbr{\frac{(t/M-1)\tilde\bu^{(t-1)}_{\rm avg-est} + \bu}{t/M}}-\tilde\bu^{(t-1)}_{\rm avg-est}}\\
        \leq&\frac{M}{t}\rbr{\nbr{\tilde\bu^{(t-1)}_{\rm avg-est}} + \nbr{\bu}}\leq \frac{2M}{t}\sqrt{|\cA|}.
    \end{align*}
    Therefore,
    \begin{align*}
        &\nbr{\pi^{(t-1)}-\pi^{(t)}}\leq \nbr{{\rm Alg}\rbr{\rbr{\tilde\bu^{(t-1)}_{\rm avg-est}}_{s=1}^{t-1}}-{\rm Alg}\rbr{\rbr{\tilde\bu^{(t)}_{\rm avg-est}}_{s=1}^t}}\\
        \leq& \nbr{{\rm Alg}\rbr{\rbr{\tilde\bu^{(t-1)}_{\rm avg-est}}_{s=1}^{t-1}}-{\rm Alg}\rbr{\rbr{\tilde\bu^{(t-1)}_{\rm avg-est}}_{s=1}^t}} + \nbr{{\rm Alg}\rbr{\rbr{\tilde\bu^{(t-1)}_{\rm avg-est}}_{s=1}^{t}}-{\rm Alg}\rbr{\rbr{\tilde\bu^{(t)}_{\rm avg-est}}_{s=1}^t}}\\
        \overset{(i)}{\leq}& \eta + Lt\rbr{\frac{2M}{t}\sqrt{|\cA|}}\\
        =&\eta + 2LM\sqrt{|\cA|},
    \end{align*}
    where $(i)$ uses \Cref{assumption:smooth-black-box} and \Cref{assumption:sublinear-variation-strategy}. Then, the accumulated variation of $\pi^{(t)}$ over time is bounded by
    \begin{align*}
        \sum_{t=1}^{T-1} \nbr{\pi^{(t+1)}-\pi^{(t)}}\leq \eta T + 2\sqrt{|\cA|} LM\frac{T}{M} = \eta T + 2\sqrt{|\cA|} L T,
    \end{align*}
    since there are at most $\frac{T}{M}$ timesteps of $t\in [T]$ satisfying $t\equiv 0\pmod{M}$.
\end{proof}

With \Cref{lemma:sublinear-p-t}, we can prove that $R_i^{(T),{\rm external}}$ is sublinear for any player $i\in [N]$ by \Cref{theorem: Instant Bound Merge}, \Cref{FTRL bound}, and \Cref{theorem:avg-ranking-bandit}.\footnote{Inspecting the proofs of these theorems shows that one can readily derive an upper bound on $R^{(T),{\rm external}}$ that is tighter than the corresponding bound on $R^{(T)}$ in the bandit setting. Consequently, $R_i^{(T),{\rm external}}$ can also be bounded in games under the bandit feedback.} Then, by the folklore result  \kzedit{that no-external-regret learning leads to approximate CCE} \citep{hart2000simple,blum2007external}, \Cref{theorem:instant-ranking-game} and \Cref{theorem:avg-ranking-game} are proved.\qed

\begin{remark}
\label{remark:tau-0-equilibrium}
   \kzedit{With the hardness in \Cref{lemma:rank-avg-hard}, under \Cref{item:rank-avg} feedback,  both our no-regret result for the online setting and the equilibrium computation result for the  game setting  
    hold for a constant $\tau>0$ (that cannot be arbitrarily small). However, we note that    
    the equilibrium computation result may still be possible when $\tau\to 0^+$: with such a \emph{deterministic} ranking model, the \emph{best-response} action against the history play of the opponents is now available, precisely leading to the celebrated algorithm of \emph{fictitious-play} (FP)  \citep{robinson1951iterative-fictitious-play-FP,brown1951iterative}. FP is known to converge to an equilibrium in certain games \citep{robinson1951iterative-fictitious-play-FP,monderer1996fictitious,sela1999fictitious,berger2005fictitious} (with (slow) convergence rates \citep{robinson1951iterative-fictitious-play-FP,daskalakis2014counter,abernethy2021fast}), despite that it fails to be no-regret in the online setting \citep{fudenberg1995consistency,fudenberg1998theory}.}
\end{remark}

\section{Properties of Follow-the-Regularized-Leader (FTRL)}\label{section:proof-of-FTRL-properties}

Firstly, we will define the strongly convex function.
\begin{definition}
    For any integer $n$ and a convex set $\cX\subseteq \RR^n$, a differentiable function $\psi(\bx)\colon \cX\to \RR$ is called $c_0$-strongly convex ($c_0>0$) when
    \begin{align}
        \psi(\bx)\geq \psi(\bx') + \inner{\nabla\psi(\bx')}{\bx-\bx'} + \frac{c_0}{2}\nbr{\bx-\bx'}^2 \label{eq:strongly-convex-def}
    \end{align}
    holds for any $\bx,\bx'\in\cX$.
\end{definition}
Specifically, if \Cref{eq:strongly-convex-def} holds for $c_0=0$, then we call $\psi$ a convex function.

Next, we will introduce the well-known no-regret learning algorithm of Follow-The-Regularized-Leader \citep{shalev2012online,hazan2016introduction-online-book}. 

\begin{definition}[Follow-The-Regularized-Leader (FTRL)]\label{def:FTRL}
    For any $T>0$ and at any timestep $t\in \cbr{0}\cup[T-1]$, given the utility vectors $\rbr{\bu^{(s)}}_{s=1}^t$, the strategy at timestep $t+1$, $\pi^{(t+1)}$, is defined as,
    \begin{align}
        \pi^{(t+1)}=\argmax_{\pi\in \Delta^{\cA}} \rbr{\lambda\sum_{s=1}^t \inner{\bu^{(s)}}{\pi} - \psi(\pi) } ,\tag{FTRL}\label{eq:FTRL-def}
    \end{align}
    for some constant $\lambda>0$. Typically, $\lambda$ is taken to be $\Theta\rbr{T^{-r}}$ for some constant $r>0$.
\end{definition}

Next, we will introduce the smoothness of \Cref{eq:FTRL-def}.

\begin{lemma}
\label{lemma:FTRL-smoothness}
    For any $c_0$-strongly convex and differentiable function $\psi\colon \Delta^{\cA}\to \RR$, \Cref{eq:FTRL-def} satisfies \Cref{assumption:smooth-black-box} and \Cref{assumption:sublinear-variation-strategy} with $L=\frac{\lambda}{c_0}$ and $\eta=\frac{\lambda}{c_0}\sqrt{|\cA|}$.
    \proof

    By the first-order optimality, at any timestep $t\in\cbr{0}\cup[T-1]$ for any two sequences of utility vectors $\rbr{\bu^{(s)}}_{s=1}^t$ and $\rbr{{\bu'}^{(s)}}_{s=1}^t$, let the corresponding strategy generated by \Cref{eq:FTRL-def} be $\pi^{(t+1)}$ and ${\pi'}^{(t+1)}$ respectively, we have
    \begin{align*}
        &\inner{\lambda\sum_{s=1}^t \bu^{(s)} - \nabla\psi\rbr{\pi^{(t+1)}}}{{\pi'}^{(t+1)}-\pi^{(t+1)}}\leq 0\\
        &\inner{\lambda\sum_{s=1}^t {\bu'}^{(s)} - \nabla\psi\rbr{{\pi'}^{(t+1)}}}{\pi^{(t+1)}-{\pi'}^{(t+1)}}\leq 0.
    \end{align*}
    By summing them up and rearranging the terms, we have
    \begin{align*}
        \inner{\lambda\sum_{s=1}^t {\bu'}^{(s)}-\lambda\sum_{s=1}^t \bu^{(s)}}{{\pi'}^{(t+1)}-\pi^{(t+1)}}\geq \inner{\nabla\psi\rbr{{\pi'}^{(t+1)}}-\nabla\psi\rbr{\pi^{(t+1)}}}{{\pi'}^{(t+1)}-\pi^{(t+1)}}.
    \end{align*}
    Since $\psi$ is $c_0$-strongly convex, we have
    \begin{align*}
        &\psi\rbr{\pi^{(t+1)}}\geq \psi\rbr{{\pi'}^{(t+1)}} + \inner{\nabla \psi\rbr{{\pi'}^{(t+1)}}}{\pi^{(t+1)} - {\pi'}^{(t+1)}} + \frac{c_0}{2}\nbr{\pi^{(t+1)} - {\pi'}^{(t+1)}}^2\\
        &\psi\rbr{{\pi'}^{(t+1)}}\geq \psi\rbr{{\pi}^{(t+1)}} + \inner{\nabla \psi\rbr{{\pi}^{(t+1)}}}{{\pi'}^{(t+1)} - {\pi}^{(t+1)}} + \frac{c_0}{2}\nbr{{\pi}^{(t+1)} - {\pi'}^{(t+1)}}^2.
    \end{align*}
    By summing them up and rearranging the terms, we have
    \begin{align*}
        \inner{\nabla\psi\rbr{{\pi'}^{(t+1)}}-\nabla\psi\rbr{\pi^{(t+1)}}}{{\pi'}^{(t+1)}-\pi^{(t+1)}}\geq c_0\nbr{{\pi}^{(t+1)} - {\pi'}^{(t+1)}}^2.
    \end{align*}
    Therefore, 
    \begin{align*}
        c_0\nbr{{\pi}^{(t+1)} - {\pi'}^{(t+1)}}^2\leq& \inner{\lambda\sum_{s=1}^t {\bu'}^{(s)}-\lambda\sum_{s=1}^t \bu^{(s)}}{{\pi'}^{(t+1)}-\pi^{(t+1)}}\\
        \overset{(i)}{\leq}& \nbr{\lambda\sum_{s=1}^t {\bu'}^{(s)}-\lambda\sum_{s=1}^t \bu^{(s)}}\cdot \nbr{{\pi'}^{(t+1)}-\pi^{(t+1)}}, 
    \end{align*}
    where $(i)$ is by \holder. Then, 
    \begin{align*}
        \nbr{{\pi}^{(t+1)} - {\pi'}^{(t+1)}}\leq \frac{1}{c_0}\nbr{\lambda\sum_{s=1}^t {\bu'}^{(s)}-\lambda\sum_{s=1}^t \bu^{(s)}},
    \end{align*}
    so that \Cref{eq:FTRL-def} satisfies \Cref{assumption:smooth-black-box} with $L=\frac{\lambda}{c_0}$. Furthermore, note that the results above also hold for sequences of utility vectors of different lengths (not necessarily equal to length $t$ simultaneously). As a result, we have
    \begin{align*}
        \nbr{\pi^{(t+1)} - \pi^{(t)}}\leq \frac{\lambda}{c_0} \nbr{\sum_{s=1}^t \bu^{(s)} - \sum_{s=1}^{t-1} \bu^{(s)}} = \frac{\lambda}{c_0}\nbr{\bu^{(t)}}\leq \frac{\lambda}{c_0} \sqrt{|\cA|},
    \end{align*}
    for any $t\in \cbr{0}\cup [T-1]$, which implies that $\eta=\frac{\lambda}{c_0}\sqrt{|\cA|}$ in \Cref{assumption:sublinear-variation-strategy} for \Cref{eq:FTRL-def}.\qedhere
\end{lemma}

\end{document}